\documentclass[preprint]{imsart}

\RequirePackage{amsthm,amsmath,amsfonts,amssymb}
\RequirePackage[authoryear]{natbib}
\RequirePackage[colorlinks,citecolor=blue,urlcolor=blue]{hyperref}
\RequirePackage{graphicx}
\graphicspath{{figures/}{./}}

\usepackage[utf8]{inputenc}
\usepackage[margin=3.5cm]{geometry}

\usepackage{multirow}%
\usepackage{mathrsfs}%
\usepackage[title]{appendix}%
\usepackage{xcolor}%
\usepackage{textcomp}%
\usepackage{manyfoot}%
\usepackage{booktabs}%
\usepackage{algorithm}%
\usepackage{algorithmicx}%
\usepackage{algpseudocode}%
\usepackage{listings}%
\usepackage{nicefrac}
\usepackage{circuitikz}
\usepackage{subcaption}

\DeclareMathOperator{\prob}{P}
\DeclareMathOperator{\E}{E} % uncond'l expectation operator

\DeclareMathOperator{\diag}{diag}
\DeclareMathOperator{\argmin}{argmin}
\DeclareMathOperator{\tr}{tr}

\usepackage{mleftright} % for '\mleft' and '\mright' macros
\newcommand\pr[1]{\prob\mleft(\,#1\,\mright)}
\newcommand\ex[2][]{\E_{#1}\mleft[\,#2\,\mright]} % has one optional argument (first one) whose default is empty if not provided. Call via \ex[opt]{req} or \ex{req}

\newcommand{\vertiii}[1]{{\left\vert\kern-0.15ex\left\vert\kern-0.15ex\left\vert #1 \right\vert\kern-0.15ex\right\vert\kern-0.15ex\right\vert}}

\newcommand{\bs}[1]{\boldsymbol{#1}}
\newcommand{\dd}[1]{\mathrm{d}#1}
\newcommand{\pa}[1]{{\mathrm{pa}(#1)}}
\newcommand{\Pa}[1]{{\mathrm{Pa}(#1)}}

\newcommand{\anc}[1]{{\mathrm{an}(#1)}}

\DeclareMathOperator{\PaTrEr}{PTE}

\usepackage{thmtools}
\usepackage{thm-restate}

\declaretheorem[name=Theorem,numberwithin=section]{theorem}
\declaretheorem[name=Definition,sibling=theorem]{definition}

\begin{document}

\begin{frontmatter}
\title{Non-parametric recovery of causal diffusion mechanisms from steady-state observations}
\runtitle{Causal Diffusions}

\begin{aug}

\author{\fnms{Richard}~\snm{Schwank}$^{1}$\ead[label=e1]{richard.schwank@tum.de}} \and
\author{\fnms{Mathias}~\snm{Drton}$^{1,2}$\ead[label=e2]{mathias.drton@tum.de}}
%\orcid{0000-0000-0000-0000}}
\address{$^{1}$School of Computation, Information and Technology,
Technical University of Munich\printead[presep={,\ }]{e1,e2}}
\address{$^2$Munich Center for Machine Learning}
\end{aug}
\runauthor{R. Schwank and M. Drton}

\begin{abstract}
We consider sparse multivariate stochastic systems that evolve in continuous time according to a causal mechanism and present methodology to recover the system's time-infinitesimal transition mechanism from mere cross-sectional data. This observational paradigm is motivated by applications such as gene expression analysis, where destructive experimental techniques may only allow recording data once over a cell's lifetime.  Precisely, we assume the system follows a time-homogeneous diffusion process that has reached an equilibrium distribution at observation time. Further, we assume the causal mechanism is fully described by the diffusion drift, is acyclic, and its causal structure graph is known. In this setting, we prove that the full causal mechanism, i.e., the drift function, can be non-parametrically identified under a weak non-explosion criterion. We derive a non-parametric kernel estimator for this challenging inverse problem and prove its consistency. Moreover, we propose a cross-validation scheme for hyperparameter tuning, illustrate the behavior of our estimator in simulations, and we discuss connections with irreversible generative diffusion models and low-frequency sampled data. 
%The code is available as a Python package.
\end{abstract}

% \begin{keyword}[class=MSC]
% \kwdgroup[type=primary]{\kwd{00X00}
% \kwd{00X00}}
% \kwdgroup[type=secondary]{\kwd{00X00}}
% \end{keyword}

\begin{keyword}
\kwd{Stationary diffusion process}
\kwd{Causal dynamical system}
\kwd{Fokker-Planck inverse problem}
\kwd{Reproducing kernel Hilbert space}
\kwd{Score matching}
\kwd{Logistic population growth in a random environment}
\kwd{Graphical continuous Lyapunov model}
\end{keyword}

\end{frontmatter}

\section{Introduction}\label{sec:introduction}
Let $\mathbf{x}_{1},\ldots,\mathbf{x}_{n}\in\mathbb{R}^d$ be multivariate data collected by observing each one of $n$ independent individuals at a single time point. Suppose that probing individuals a second time is difficult/impossible, as is the case, for instance, when single-cell gene sequencing requires sacrificing the considered cells. We are interested in inference on the mechanism governing the time evolution of the random vector $\mathbf{x}$ from which the $n$ observations were generated. For example, we may wish to use expression data to infer an underlying gene regulation or protein signaling mechanism \citep{Lorch2026,Zhao2025, pmlr-v124-varando20a}. % Zhao2025 considers identifiability from distributions concentrated on a one-dimensional sphere.

Inspired by related literature \citep{Peters2022CausalModelsDynamicalSystems} and models for gene regulation \citep{Pratapa2020b}, we assume the unobserved evolution of $\mathbf{x}$ follows a time-homogeneous diffusion process $\mathbf{x}(t)$ given by the stochastic differential equation
\begin{equation*}
    \dd{\mathbf{x}(t)} = \mathbf{b}\big(\mathbf{x}(t)\big)\,\dd{t} + \bs{\Sigma}\big(\mathbf{x}(t)\big)\,\dd{\mathbf{w}(t)}
\end{equation*} 
with $\mathbf{b}:\mathbb{R}^d\to\mathbb{R}^d$ an unknown drift function  and $\mathbf{w}(t)$ a $d$-dimensional standard Brownian motion. The scaling function $\bs{\Sigma}$ is assumed to be known; often the case with $\bs{\Sigma}$ a constant multiple of the identity has been considered. We assume that our observations are representative, in the sense of being collected after the system has reached its steady state (and is in equilibrium). In other words, we assume the data %$\mathbf{x}_{1,T},\ldots,\mathbf{x}_{n,T}$ 
$\mathbf{x}_{1},\ldots,\mathbf{x}_{n}$ is generated as an i.i.d.~sample from a distribution with a density $p$ that is a stationary probability density of the diffusion process \citep{Lorch2024StationaryDiffusions,pmlr-v124-varando20a}. 

Even under the equilibrium assumption, there are multiple drifts $\mathbf{b}$ inducing the same stationary density $p$, see Section \ref{subsec:accelerating_difusion_theory} and \citet{Nickl2020}. % end of second page
One setting that allows unique recovery of the mechanism $\mathbf{b}$ are suitably sparse causal systems, i.e., the $i$-th drift component $b_i$ depends only on coordinates $x_j$ with $j$ belonging to a (typically small) index set. We depict this sparsity graphically via the parent set of $i$ in a directed graph whose nodes are the vector indices. 
In this paper, we consider a non-parametric model for drifts $\mathbf{b}$ that are causally structured according to a known directed acyclic graph (DAG). This setup arises for example when modeling the effect of a covariate process on an outcome process, see Section \ref{subsec:two_dim_population_rand_env} and \citet{Gregorio2025DiffusionProcessRegression}. % Also ergodic!
We first show that the drift $\mathbf{b}$ is non-parametrically identified by the stationary density $p$ under mild assumptions, i.e., that we can recover $\mathbf{b}$ uniquely given infinite data. On a technical level, we link $\mathbf{b}$ and $p$ via the Fokker-Planck equation and show uniqueness if all drift components $b_i$ are $p$-integrable, a weak condition to ensure the diffusion process does not explode \citep{Haesung2021FullSDEPackageUnderLocalIntegrability}. This result generalizes existing work making strong parametric assumptions \citep{Dettling2022}.

Our second contribution is a non-parametric kernel estimator for the drift $\mathbf{b}$ given a user-specified DAG.
%, \textcolor{red}{to be} available as a Python package. 
The estimator builds on a result from the identifiability section, which we interpret as an identification formula for the image of $\mathbf{b}$ under an integral operator. This formula is reminiscent of an identity from nonparametric score matching, and following \citet{Zhou2020ScoreEstimators}, we approximate both sides by plug-in estimators. All that then remains is to invert the integral operator, for which we study different regularization schemes. Building on  \citet{Vito2005}, we theoretically analyze the Tikhonov-regularized estimator and derive a high-probability concentration inequality. We deduce that any drift with components $b_i\in L^2(p)$ can be consistently estimated when an $L^2$-universal kernel is used. Additionally, if the Hilbert space corresponding to the chosen kernel approximates the drift sufficiently well, we derive a quantitative bound for the components' $L^2$ risks depending on their depth in the structure graph. For practical use, we also provide a cross-validation procedure to tune hyper-parameters. 

We illustrate our method on simulated data from population growth in a random environment and from a sigmoidal drift in seven variables. Further, we note that our plug-in approach aligns well with low frequency samples from a single ergodic process and performs well 
%observe favorable numerical performance 
in regimes with low to moderate autocorrelation. Finally, we switch perspective from recovering to creating $\mathbf{b}$, asking when a given density $p$ is the stationary density of a diffusion with DAG-structured drift. One motivation for this question is that DAG-structured drifts induce irreversible processes that converge faster to their stationary distribution than the score-based drifts used in generative diffusion models. We prove that any sufficiently smooth density $p$ can be generated by the complete DAG and illustrate accelerated convergence in a simple setting. 

\paragraph{Notation}
We boldface vectors and matrices in lower and upper case, respectively.
%both deterministic and random ones. 
For $\mathbf{a}\in\mathbb{R}^d$, $\mathbf{A}\in\mathbb{R}^{d\times d}$ and index sets $S, S'\subset\{1,\ldots,d\}$, we write $\mathbf{a}_S := (a_i)_{i\in S}$, $\mathbf{A}_{S,S'}:=(A_{ij})_{i\in S, j\in S'}$, and $\mathbf{a}_{-S}:= (a_i)_{i\notin S}$. For Lebesgue densities $p\colon\mathbb{R}^d\to\mathbb{R}_{+}$ we define the marginal $p_S(\mathbf{x}_S):=\int_{\mathbb{R}^{d-|S|}}p(\mathbf{x})\,\dd{\mathbf{x}_{-S}}$. For $a_1,a_2$ reals, $a_1 \vee a_2:=\max\{a_1,a_2\}$ and $a_1\wedge a_2:=\min\{a_1,a_2\}$. The vector $q$-norm, $q\in[1,\infty]$, is written $\|\mathbf{a}\|_q$, with  $\|\mathbf{a}\|:=\|\mathbf{a}\|_2$. 

We use the Lebesgue spaces $L^q(\mathbb{R}^k, \lambda)$ and abbreviate to $L^q(\mathbb{R}^k)$ for Lebesgue's measure or to $L^q(\lambda)$ if the ambient space is clear from $\lambda$. We also use the local versions $L^q_\mathrm{loc}$. Let $L^q(\mathbb{R}^{k_1}, \mathbb{R}^{k_2}, \lambda)$ denote functions $\mathbf{a}\colon\mathbb{R}^{k_1}\to\mathbb{R}^{k_2}$ with component functions $a_i\in L^{q}(\mathbb{R}^{k_1},\lambda)$ for all $i\leq k_2$. We use the Sobolev spaces $W^{n,q}(\mathbb{R}^k, \lambda)$ and write $\nabla$ for the (weak) gradient, $\nabla\cdot\mathbf{a} := \sum_{i=1}^{k} \partial_i a_i $ for the divergence, $\nabla^2$ for the Hessian matrix, and $\Delta$ for the Laplace operator. All derivative operators may only be relative to a subset of variables, e.g., $\nabla^2_S\,a := (\partial_{ij}\,a)_{i,j\in S}$. When $a$ is a function of one variable, we write $a':=\nabla a$. We write $C^n(\mathbb{R}^k)$ (and $C^n_0(\mathbb{R}^k)$) for $n$-times continuously differentiable (and compactly supported) functions where $n\in \mathbb{N}\cup \{\infty\}$. Let $C^{n,0}(\mathbb{R}^d\times\mathbb{R}^d)$ denote the space of continuous functions $a(\mathbf{x},\mathbf{y})$ such that $\partial_{\mathbf{x}_{\bs{\alpha}}}a$ exists and is continuous for all multi-indices $|\bs{\alpha}|\leq n$. We denote function composition by $a\circ b$. Finally, for Hilbert spaces $\mathcal{A},\mathcal{B}$,
$\mathcal{L}(\mathcal{A},\mathcal{B})$ denotes bounded linear operators $F\colon \mathcal{A}\to \mathcal{B}$, whose adjoints we denote by $F^*$.

\section{Related literature}
The idea to explain observations by repeated application of a causal mechanism over a long unobserved time window has parallels in the causality literature. For instance, it has been considered to give meaning to cyclic structural equation models (SCMs) as surveyed by \citet[Sect. 2.3.3]{Peters2017CausalityBook}. % \citep[Sect. 2.3.3]{Lauritzen2002, Lacerda2008, Hyttinen2012, Peters2017CausalityBook}, 
If convergence can be established, \citet{bongers2018random} %Mooij2013
prove that equilibria of random differential equations also follow SCMs. Although conceptually similar, the quantitative situation in this paper is different since our diffusion process does not converge to a constant almost surely. Therefore, a sample from the stationary distribution reveals information on common time history between coordinates. As a result, conditional independence relations are significantly reducted in the stationary distribution \citep{Boege2025LyapunovCI}, which generally is \emph{not} Markov with respect to the structure graph underlying the drift function.

A key theme of this paper is identifiability: characterizing what information on the causal mechanism is uniquely determined by the stationary distribution when no time history is available.
When the structure graph is known, it was shown by \citet{Dettling2022} that linear diffusion drifts can be fully recovered when the graph is free of 2-cycles; examples show generic identifiability for additional graphs. When the graph is an unknown DAG, the stationary distribution determines a graph equivalence class, which for linear drifts are finer than the Markov equivalence classes encountered for SCMs \citep{Amendola2025}. Additional interventional data can narrow down the drift even further \citep{Inglese2002, Rohbeck2024Bicycle}. Identifiability of linear drifts is often studied under the term graphical Lyapunov model in the literature, as the Lyapunov equation links drift matrix and covariance matrix of the Gaussian stationary distribution. We note that sparsely interacting diffusion processes also feature in literature from probability theory \citep{Lacker2023StationaryDiffusionOnTree}.

For drift estimation, maximum likelihood is an option as the Fokker-Planck equation yields the stationary density for a proposal drift. This approach is considered for parametric models by \citet{Pedretscher2019FokkerPlanckMaximumLikelihood}. %\citet{Kaltenbacher2018FokkerPlanckMaximumLikelihood}
Similarly, one can optimize distributional metrics between the observed distribution and the solution to the Fokker-Planck equation \citep{Botvinick2023}.
Setting computational demands in high dimensional parameter spaces aside, the main challenge is the curse of dimensionality when solving the stationary Fokker-Planck equation in dimensions greater than three or four \citep[Fig. 10]{Sun2014StatFP}. The alternative, to sample from the SDE until reaching steady state, also suffers from the curse of dimensionality in general \citep{Amorino2023InvariantDensityEstimation}. We are only aware of the work by \citet{Lorch2024StationaryDiffusions} that scales to higher dimensions by optimizing a kernelized surrogate loss. Common to the aforementioned work is the need to optimize a complicated nonlinear function in the proposal drift, usually via gradient descent. In contrast, the method we develop here targets the mean squared error between proposal and ground truth drift directly; see Section \ref{subsec:estimator_derivation}. This is very convenient for the theoretical analysis controlling the $L^2(p)$ distance between estimated and ground truth drift.

Our work also has connections with score matching. First, we find drift learning to be equivalent to score learning in one dimension when the diffusivity is constant. Second, our estimator naturally extends nonparametric score learning methods \citep{Zhou2020ScoreEstimators}. Third, the score solves a conceptually related problem in generative diffusion modeling: learning a drift inducing a given stationary distribution without observing any time history \citep{Song2019}.

\section{Model identifiability}
\subsection{Diffusion processes with structured drifts and the Fokker-Planck equation}
Consider an individual with attribute vector $\mathbf{x}\in\mathbb{R}^d$, which evolves over time according to the time-homogeneous stochastic differential equation (SDE) \begin{equation}\label{eq:sde}
    \dd{\mathbf{x}(t)} = \mathbf{b}\big(\mathbf{x}(t)\big)\,\dd{t} + \bs{\Sigma}\big(\mathbf{x}(t)\big)\,\dd{\mathbf{w}(t)}
\end{equation} understood in Ito-sense, where $\mathbf{b}\colon\mathbb{R}^d\to\mathbb{R}^d$ is the drift function, $\mathbf{w}(t)$ is a standard $d$-dimensional Wiener process, and  $\bs{\Sigma}\colon\mathbb{R}^d\to\mathbb{R}^{d\times d}$ is a full rank matrix-valued function. Equation \eqref{eq:sde} can be interpreted as $\mathbf{x}(t+\Delta t) = \mathbf{x}(t) + \mathbf{b}(\mathbf{x}(t))\Delta t + \sqrt{\Delta t} \cdot\bs{\Sigma}(\mathbf{x}(t))\, \mathbf{Z}(t)$ for arbitrarily small $\Delta t > 0$, where $\mathbf{Z}(t)$ is standard Gaussian. 

We are interested in drifts adhering to a causal structure; more precisely to a directed acyclic graph (DAG) with self-loops \citep{Dettling2022}. That is, we define a DAG to be a collection of directed edges $i\to j$ between nodes $i,j\in\{1,\dots,d\}$ 
with exactly one path from node $i$ back to itself, namely the self-loop $i\to i$. Let $\Pa{i}:=\{j : j\to i\}$ be the parents of node $i$ and let $\pa{i}:= \{j\neq i : j\to i\}$ denote the proper parents. In particular, $\Pa{i}=\pa{i}\cup\{i\}$. Let $\anc{i}$ denote the proper ancestors, i.e., nodes with a path to $i$ but excluding $i$ itself.

\begin{restatable}{definition}{defDriftStructured}\label{defDriftStructured} 
    A function $f\colon\mathbb{R}^d\to\mathbb{R}$ \emph{only depends on $\mathbf{x}_S$} for some index set $S\subset \{1,\ldots,d\}$ if $f = f\circ \pi_S$, where $\pi_S$ denotes the projection $\pi_S(\mathbf{x})=\mathbf{x}_S$.

    A function $\mathbf{b}\colon\mathbb{R}^d\to\mathbb{R}^d$ is \emph{structured according to a DAG} $\mathcal{D}$ on nodes $\{1,\ldots,d\}$, if every component function $b_i$ only depends on $\mathbf{x}_{\Pa{i}}$.
\end{restatable}
In particular, if the drift $\mathbf{b}$ is DAG-structured, each component function $b_i$ can depend on $x_i$ due to self-loops. The reasoning is that linear drifts $\mathbf{b}(\mathbf{x})=\mathbf{B}\mathbf{x}$ given by matrix $\mathbf{B}\in\mathbb{R}^{d\times d}$ can only be guaranteed to admit a stationary distribution if all eigenvalues have a negative real part. However, DAG-structured matrices can only satisfy this condition with a non-zero diagonal, since they are lower triangular after a suitable permutation of the indices. 

For a causal interpretation, it would be natural to assume that $(\Sigma_{i, k})_{k=1}^d$, and thus the entire SDE equation for $\dd{x_i}(t)$, only depends on the $x_j$ with $j\to i\in\mathcal{D}$. However, as subsequent results are not affected, we keep the structure of the matrix function $\bs{\Sigma}(\mathbf{x})$ unrestricted.

We consider processes $\mathbf{x}(t)$ that approach an equilibrium distribution as $t\to\infty$. Under suitable assumptions on $\mathbf{b}$ and $\mathbf{D}$ \citep[Sect. 4]{KhasminskiiBookStochasticProcesses}, % Lemma 4.16 or better the whole section
this equilibrium distribution admits a probability density $p$ with respect to Lebesgue measure that satisfies the \emph{stationary Fokker-Planck equation} \begin{equation}\label{eq:stat_FP_classic}
    \frac{1}{2} \sum_{i,j=1}^d \partial_i \partial_j (p\,D_{ij}) =\nabla\cdot( \mathbf{b}\,p),
\end{equation} where the \emph{diffusivity parameter} $\mathbf{D}\colon\mathbb{R}^d\to\mathbb{R}^{d\times d}$ is defined by $\mathbf{D}(\mathbf{x}) := \bs{\Sigma}(\mathbf{x}) \bs{\Sigma}(\mathbf{x})^T$. Hereafter, we omit ``stationary" and refer to \eqref{eq:stat_FP_classic} as the Fokker-Planck equation. Before discussing how to interpret this partial differential equation (PDE) in general, we consider a simple setting used as illustration throughout the paper: 

\begin{restatable}[Running example: 2-path]{example}{runExample}\label{runExample} Consider an object with $d=2$ attributes $x_1, x_2$, e.g., concentrations of two mRNAs in a cell, and assume regulation according to \rotatebox[origin=c]{270}{$\circlearrowright$}\hspace{0.1cm}$1 \to 2$ \rotatebox[origin=c]{90}{$\circlearrowleft$}, meaning the drift is of the form $\mathbf{b}(x_1,x_2) = (b_1(x_1), b_2(x_1, x_2))\in\mathbb{R}^2$.

Let $p\in C^2(\mathbb{R}^2)$ be a bounded % Need for Khasminskii Lemma 4.16
probability density. Consider $\bs{\Sigma}$ to be $\sqrt{2}$ times the constant identity, meaning the left hand side of \eqref{eq:stat_FP_classic} becomes the Laplacian $\Delta p$. Assume $\mathbf{b}\in C^1(\mathbb{R}^2, \mathbb{R}^2)$ and that the Fokker-Planck relation $\Delta p = \nabla\cdot (\mathbf{b}\,p)$ holds, i.e., \begin{equation}\label{eq:run_ex_statFP}
    \partial_{11}p(\mathbf{x}) + \partial_{22}p(\mathbf{x}) = \partial_1 (b_1 \, p)(\mathbf{x}) + \partial_2 (b_2\,p)(\mathbf{x}) \qquad\forall\mathbf{x}\in\mathbb{R}^2.
\end{equation}
If $\mathbf{b}$ has bounded first derivatives, the SDE $\dd{\mathbf{x}(t)} = \mathbf{b}(\mathbf{x}(t))\dd{t} + \sqrt{2}\dd{\mathbf{w}(t)}$ has a solution $(\mathbf{x}(t))_{t\geq 0}$ for arbitrary initial $\mathbf{x}(0)\in\mathbb{R}^2$. % e.g. Oskendal Theorem 5.2.1
If this solution is positively recurrent, the density of $\mathbf{x}(t)$ approaches $p$ at every point as $t\to\infty$ \citep[Lemma 4.16/4.17]{KhasminskiiBookStochasticProcesses}.
\end{restatable}

To minimize assumptions on the drift, we consider the following weak formulation.

\begin{definition}\label{def:stat_FP}
    We say $\mathbf{b}\in L^1_\mathrm{loc}(\mathbb{R}^d, \mathbb{R}^d, p)$ solves the Fokker-Planck equation with (Lebesgue) density $p$ and diffusivity $\mathbf{D}\in L^1_\mathrm{loc}(\mathbb{R}^d, \mathbb{R}^{d\times d}, p)$, when \begin{equation}\label{eq:stat_FP}
    \int_{\mathbb{R}^d} \mathbf{b}^T \nabla \varphi \, \dd{p} = \int_{\mathbb{R}^d}  \frac{1}{2}\langle \mathbf{D}, \nabla^2\varphi\rangle_F\,\dd{p} \qquad\forall \varphi\in C^\infty_0(\mathbb{R}^d),
\end{equation} where $\langle \mathbf{A}, \mathbf{B}\rangle_F = \tr(\mathbf{A}^T\mathbf{B}) = \sum_{ij} A_{ij}B_{ij}$ is the Frobenius product of matrices $\mathbf{A}, \mathbf{B}$.
\end{definition}

If $\mathbf{b}$, $p$, and $\mathbf{D}$ are twice differentiable in the classical sense, integration by parts and the  fundamental lemma of the calculus of variations recovers the classical equation \eqref{eq:stat_FP_classic}.

%That the drift function satisfies a Fokker-Planck equation \eqref{eq:stat_FP} is the central assumption of our theoretical analysis and estimation strategy. One could weaken the Fokker-Planck equation further and allow a broader class of admissible drifts \citep{Bogachev2015BookFP}, however proving existence of a corresponding diffusion process converging to an equilibrium distribution usually requires stronger assumptions on the drift than integrability anyway \citep{Haesung2021FullSDEPackageUnderLocalIntegrability}, and this is the main data generating process we have in mind.
\subsection{The one dimensional case}\label{subsec:identif_univ}
We first consider recovering a univariate drift function $b\colon\mathbb{R}\to\mathbb{R}$ from a univariate stationary density $p$. This part provides intuition and serves as a base case for general DAGs.

Let $p\in C^2(\mathbb{R})$ be a positive density that is  stationary for the univariate SDE \begin{equation}\label{eq:univ_sde}
    \dd{x}(t) = b(x(t))\dd{t} + \sigma\dd{w}(t)
\end{equation} with drift $b\in C^1(\mathbb{R})$ and a constant $\sigma>0$. Then, the Fokker-Planck equation \eqref{eq:stat_FP_classic} reads \begin{equation*}
    \frac{1}{2}\sigma^2 p'' = (bp)' \quad\iff\quad b = \frac{\sigma^2}{2}\frac{p'}{p} + \frac{c}{p} \quad\text{for} \; c\in\mathbb{R},
\end{equation*} using the fundamental theorem of calculus. Note that $\tfrac{p'}{p}=(\log p)'$ is the score function of $p$. Hence, the Fokker-Planck equation determines the drift up to a one-parameter family $(b_c)_{c\in\mathbb{R}}$. 

It is intuitive that $b_{c=0}$ is the only sound drift in the family: $\nicefrac{c}{p}$ grows strongly for $c\neq 0$, especially when $p$ is light-tailed. In fact, the growth is so strong that $x(t)$ is forced to leave the state space $\mathbb{R}$ and attain one of the symbolic values $\pm\infty$ at a finite time, called an \emph{explosion}. For univariate SDEs, Feller's explosion test (Lemma \ref{lemmaFellerExplosionTest}) yields:

\begin{restatable}{lemma}{lemmaUnivExplodes}
    The solution of the SDE \eqref{eq:univ_sde} with drift $b_c(x) = \frac{\sigma^2}{2} \frac{p'}{p} + \frac{c}{p}$ explodes with positive probability if $c\neq 0$.
\end{restatable}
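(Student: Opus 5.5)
The plan is to apply Feller's explosion test (Lemma \ref{lemmaFellerExplosionTest}) directly to the drift $b_c$, and to exploit the fact that the scale function of $b_c$ has an explicit form in terms of $p$. I will use the standard form of the test: the SDE \eqref{eq:univ_sde} explodes with positive probability if and only if $v(+\infty)<\infty$ or $v(-\infty)<\infty$, where
\begin{equation*}
v(x)=\int_0^x s(y)\int_0^y \frac{2}{\sigma^2 s(z)}\,\dd{z}\,\dd{y}, \qquad s(x)=\exp\Big(-\int_0^x \frac{2b_c(u)}{\sigma^2}\,\dd{u}\Big).
\end{equation*}
The reflection $x\mapsto -x$ sends $c$ to $-c$. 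It therefore suffices to treat $c>0$ and show $v(+\infty)<\infty$; the case $c<0$ then gives $v(-\infty)<\infty$ by the symmetric argument.

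First I compute the scale density. Set $\kappa:=2c/\sigma^2>0$ and $Q(x):=\int_0^x \dd{u}/p(u)$. Since $p>0$ is continuous, $Q$ is well defined and increasing. Because $2b_c/\sigma^2=(\log p)'+\kappa/p$, we get
\begin{equation*}
s(x)=\frac{p(0)}{p(x)}\,e^{-\kappa Q(x)}, \qquad\text{so}\qquad \frac{s(y)}{s(z)}=\frac{p(z)}{p(y)}\,e^{-\kappa(Q(y)-Q(z))}.
\end{equation*}
Substituting this into $v(+\infty)$ gives a double integral of a nonnegative integrand over $\{0\le z\le y\}$. By Tonelli's theorem I can swap the order of integration:
\begin{equation*}
v(+\infty)=\frac{2}{\sigma^2}\int_0^\infty p(z)\int_z^\infty Q'(y)\,e^{-\kappa(Q(y)-Q(z))}\,\dd{y}\,\dd{z}.
\end{equation*}

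The inner integral is explicit: it equals $\kappa^{-1}\big(1-e^{-\kappa(Q(\infty)-Q(z))}\big)\le \kappa^{-1}$. Hence
\begin{equation*}
v(+\infty)\le \frac{2}{\sigma^2\kappa}\int_0^\infty p(z)\,\dd{z}\le \frac{1}{c}<\infty,
\end{equation*}
which gives explosion at $+\infty$ with positive probability. I also note that $Q(\infty)=\infty$ holds: by Cauchy--Schwarz, $L^2\le \int_0^L p\cdot\int_0^L 1/p\le Q(L)$. This fact is not needed for the upper bound, but it shows the bound is sharp.

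The main obstacle is bookkeeping rather than analysis. I need to match the exact normalization and form of Feller's test as stated in Lemma \ref{lemmaFellerExplosionTest}: the base point $0$, the factor $2/\sigma^2$, and whether the test is phrased through $v$ or through an equivalent integral of the speed measure. I also need to confirm that local existence of a solution up to the explosion time holds, which follows since $b_c\in C^1$. Once the form of the test is fixed, the Tonelli swap together with the closed-form inner integral does all the work.
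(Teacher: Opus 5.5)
Your proof is correct and follows the paper's argument. The paper likewise applies Feller's test with $e^{-I}$ in the role of your scale density $s$, computes $I$ explicitly via $(\log p)'$ and $\int 1/p$, swaps the order of integration over $\{0\le z\le y\}$, evaluates the inner integral in closed form, bounds it using $\int_0^\infty p\le 1$, and handles $c<0$ symmetrically. Your constant $\kappa^{-1}=\sigma^2/(2c)$ for the inner integral is the right one; the paper's display drops the factor $1/c$, and neither this nor your extra $2/\sigma^2$ relative to Lemma \ref{lemmaFellerExplosionTest} affects finiteness.
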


An explosion in the process $x(t)$ strongly contradicts our goal to model objects in a stable equilibrium. Especially, since one can show that $x(t)$ following \eqref{eq:univ_sde} with $b_c$ for $c\neq 0$ has a positive probability of having exploded before \emph{any} time $T>0$ \citep{Karatzas2016ExplosionTimeDistribution}. Consequently, for our purposes, $b = \tfrac{\sigma^2}{2}\log(p)'$ is the only reasonable drift that yields $p$ as a stationary density.

When considering an $x$-dependent diffusivity $\sigma$ or multivariate diffusions, exploding drifts are much harder to characterize sharply. % \citep[Ch. 12]{Bhattacharya2023DiffusionBook}. 
In this paper, we consider drifts from the class of functions absolutely integrable with respect to the density $p$, i.e., $b\in L^1(\mathbb{R},p)$. This is a mild restriction, yet strong enough to rule out explosions under suitable assumptions on the diffusivity $\mathbf{D}$ \citep{Haesung2021FullSDEPackageUnderLocalIntegrability, Hwang2005AcceleratingDiffusions}. %Stannat1999DirichletOperatorsL1
The assumption $b\in L^1(\mathbb{R},p)$ consistently also singles out the score from the class $(b_c)_{c\in\mathbb{R}}$: when $\int_{\mathbb{R}}|p'(x)|\dd{x}<\infty$, then $\int_\mathbb{R}|b_c|\dd{p} < \infty$ if and only if $c=0$. % If it were integrable for $c\neq 0$, the difference with the score would also be integrable, but it is not.
The following Lemma demonstrates that the assumption $b\in L^1(\mathbb{R},p)$ is also sufficient to recover the drift for $x$-dependent diffusivity $\sigma$.

\begin{restatable}{lemma}{lemmaUniqueOneDim}\label{lemmaUniqueOneDim}
    Let $b^*, b\in L^1(\mathbb{R}, p)$ both solve a Fokker-Planck equation \eqref{eq:stat_FP} with the same density $p>0$ and diffusivity $D\in L^1_\mathrm{loc}(\mathbb{R}, p)$. Then, $b = b^*$ almost everywhere.
\end{restatable}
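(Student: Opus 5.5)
Subtract the two weak equations. Since both $b$ and $b^*$ solve \eqref{eq:stat_FP} with the same $p$ and $D$, the right-hand sides coincide. Setting $h := (b-b^*)\,p$, which lies in $L^1(\mathbb{R})$ because $b,b^*\in L^1(\mathbb{R},p)$, we obtain
\begin{equation*}
    \int_{\mathbb{R}} h(x)\,\varphi'(x)\,\dd{x} = 0 \qquad \forall \varphi\in C^\infty_0(\mathbb{R}).
\end{equation*}
So the distributional derivative of $h$ vanishes. By the du Bois-Reymond lemma (the weak version of the fundamental theorem of calculus in one dimension), $h$ equals a constant $c$ almost everywhere. The diffusivity $D$ has dropped out entirely, which is why an $x$-dependent $\sigma$ causes no extra difficulty.

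Next I use integrability to force $c=0$. Since $h\in L^1(\mathbb{R})$ and $h=c$ a.e. on all of $\mathbb{R}$, we get $\int_{\mathbb{R}}|c|\,\dd{x}<\infty$, which is possible only if $c=0$. Hence $(b-b^*)\,p=0$ almost everywhere.

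Because $p>0$ everywhere, it follows that $b=b^*$ almost everywhere, with respect to Lebesgue measure and hence also with respect to $p$.

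The only step needing care is the du Bois-Reymond lemma. I would prove it quickly as follows. Fix $\psi_0\in C_0^\infty$ with $\int\psi_0=1$. Any test function $\psi\in C_0^\infty$ can be written as $\psi=\big(\int\psi\big)\psi_0+\varphi'$ with $\varphi\in C_0^\infty$: take $\varphi(x)=\int_{-\infty}^x\big(\psi-(\int\psi)\psi_0\big)$, which has compact support because the integrand has zero total mass. Then $\int h\psi=c\int\psi$ with $c:=\int h\psi_0$. This gives $h=c$ a.e. by the fundamental lemma of the calculus of variations, which applies since $h\in L^1_{\mathrm{loc}}$. Everything else is immediate.
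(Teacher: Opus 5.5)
Your proof is correct and follows the paper's argument. You subtract the two weak equations, observe that $(b-b^*)p\in L^1(\mathbb{R})$ has vanishing weak derivative, conclude it is zero almost everywhere, and divide by $p>0$. The only difference is that you prove the one-dimensional ``zero weak derivative implies constant'' step yourself via $\psi=(\int\psi)\psi_0+\varphi'$, whereas the paper invokes its Lemma~\ref{lemmaZeroWeakDeriv}, whose $k=1$ case cites Evans for an absolutely continuous representative.
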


\subsection{General DAGs}
Let the drift $\mathbf{b}\colon\mathbb{R}^d\to\mathbb{R}^d$ be structured according to a DAG $\mathcal{D}$. Building on the framework developed for the univariate case in the previous section, we show that requiring $\mathbf{b}\in L^1(\mathbb{R}^d, \mathbb{R}^d, p)$ is enough to recover the drift from the stationary density $p$. This integrability assumption was motivated by its ability to effectively rule out explosions in the associated diffusion process. 

The main result at the end of this section is derived via induction over a topological ordering of $\mathcal{D}$. By shuffling the nodes, we may assume the ordering to be $1,\ldots, d$ without loss of generality. This means any edge $i_1\to i_2$ in the DAG satisfies $i_1\leq i_2$.

It is important for the $k$-th induction step to have an equation involving only the components $b_i$ with $i\leq k+1$. We state such an equation below and afterwards discuss how it simplifies in the running Example \ref{runExample}.

\begin{restatable}{lemma}{lemmaMargRecursion}\label{lemmaMargRecursion}
    Let the $\mathcal{D}$-structured drift $\mathbf{b}\in L^1(\mathbb{R}^d,\mathbb{R}^d, p)$ solve the Fokker-Planck equation \eqref{eq:stat_FP} with density $p$ and diffusivity $\mathbf{D}\in L^1(\mathbb{R}^d, \mathbb{R}^{d\times d}, p)$. For any index $i\leq d$ and any $\varphi=\tilde{\varphi}\circ\pi_{\Pa{i}}$ with $\tilde{\varphi}\in C^\infty_0(\mathbb{R}^{|\Pa{i}|})$ it holds that \begin{equation}\label{eq:lemmaMargRecursion}
        \int_{\mathbb{R}^{|\Pa{i}|}} b_i \partial_i \varphi\, \dd{p_{\Pa{i}}} = - \int_{\mathbb{R}^d} \mathbf{b}_{\pa{i}}^T \nabla_{\pa{i}} \varphi \, \dd{p} + \int_{\mathbb{R}^d}\frac{1}{2} \langle\mathbf{D}_{\Pa{i},\Pa{i}}, \nabla_{\Pa{i}}^2 \varphi\rangle_F\,\dd{p}.
    \end{equation}
\end{restatable}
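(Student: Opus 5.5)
The plan is to insert the test function $\varphi=\tilde{\varphi}\circ\pi_{\Pa{i}}$ into the weak Fokker-Planck equation \eqref{eq:stat_FP} and then rearrange. There is one technical catch. $\varphi$ is not in $C^\infty_0(\mathbb{R}^d)$ unless $\Pa{i}=\{1,\ldots,d\}$, because it is constant in the directions $\mathbf{x}_{-\Pa{i}}$. I would therefore first show that \eqref{eq:stat_FP} still holds for such $\varphi$, using a cutoff argument. This is where the global integrability assumptions $\mathbf{b}\in L^1(\mathbb{R}^d,\mathbb{R}^d,p)$ and $\mathbf{D}\in L^1(\mathbb{R}^d,\mathbb{R}^{d\times d},p)$ are used, rather than mere local integrability.

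\textbf{Cutoff.} Fix $\chi\in C^\infty_0(\mathbb{R}^{d-|\Pa{i}|})$ with $0\le\chi\le1$ and $\chi\equiv1$ on the unit ball. Set $\chi_R(\mathbf{y})=\chi(\mathbf{y}/R)$ and $\varphi_R(\mathbf{x})=\varphi(\mathbf{x})\,\chi_R(\mathbf{x}_{-\Pa{i}})$, which lies in $C^\infty_0(\mathbb{R}^d)$. The derivatives are
\begin{equation*}
\nabla\varphi_R=\chi_R\nabla\varphi+\varphi\nabla\chi_R,\qquad
\nabla^2\varphi_R=\chi_R\nabla^2\varphi+\nabla\varphi\,(\nabla\chi_R)^T+\nabla\chi_R\,(\nabla\varphi)^T+\varphi\nabla^2\chi_R .
\end{equation*}
Since $\tilde\varphi$ has compact support, $\varphi$, $\nabla\varphi$ and $\nabla^2\varphi$ are bounded. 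Moreover $\|\nabla\chi_R\|_\infty\le C/R$ and $\|\nabla^2\chi_R\|_\infty\le C/R^2$. All correction terms are therefore bounded pointwise by $C'R^{-1}(\|\mathbf{b}\|_1+\|\mathbf{D}\|_F)$, whose $p$-integral vanishes as $R\to\infty$. This covers the components $b_j$ with $j\notin\Pa{i}$, which enter only through $\nabla\chi_R$. For the main terms, $\chi_R\to1$ pointwise, and they are dominated by $\|\nabla\varphi\|_\infty\|\mathbf{b}\|_1$ and $\|\nabla^2\varphi\|_\infty\|\mathbf{D}\|_F$, which are $p$-integrable. Dominated convergence then gives \eqref{eq:stat_FP} for $\varphi$ itself. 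I expect this step to be the main obstacle, though it is routine.

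\textbf{Rearranging.} Because $\varphi$ depends only on $\mathbf{x}_{\Pa{i}}$, we have $\partial_j\varphi=0$ for $j\notin\Pa{i}$. Hence $\mathbf{b}^T\nabla\varphi=b_i\partial_i\varphi+\mathbf{b}_{\pa{i}}^T\nabla_{\pa{i}}\varphi$, and $\langle\mathbf{D},\nabla^2\varphi\rangle_F=\langle\mathbf{D}_{\Pa{i},\Pa{i}},\nabla^2_{\Pa{i}}\varphi\rangle_F$. By Definition \ref{defDriftStructured}, $b_i$ depends only on $\mathbf{x}_{\Pa{i}}$, and so does $\partial_i\varphi$. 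Fubini (justified by $b_i\in L^1(p)$ and boundedness of $\partial_i\varphi$) then gives $\int_{\mathbb{R}^d}b_i\partial_i\varphi\,\dd{p}=\int_{\mathbb{R}^{|\Pa{i}|}}b_i\partial_i\varphi\,\dd{p_{\Pa{i}}}$. Moving the $\mathbf{b}_{\pa{i}}$ term to the right-hand side yields \eqref{eq:lemmaMargRecursion}.
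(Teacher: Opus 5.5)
Your proposal is correct and follows the paper's proof. The paper also multiplies $\varphi$ by a cutoff in $\mathbf{x}_{-\Pa{i}}$, applies \eqref{eq:stat_FP}, passes to the limit using $\mathbf{b},\mathbf{D}\in L^1(p)$, and then marginalizes the $b_i\partial_i\varphi$ term. The only difference is cosmetic: the paper uses the translated cutoffs $\gamma_n(\mathbf{x})=\gamma(\|\mathbf{x}\|_2-n)$ from Lemma \ref{lemmaOneApprox}, whose derivatives are merely uniformly bounded and tend to zero pointwise, so dominated convergence handles the correction terms, whereas your dilated cutoffs give an explicit $\mathcal{O}(1/R)$ bound.
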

In the $i$-th induction step, $\mathbf{b}_{\pa{i}}$ is identified by assumption. To see that the right hand side of \eqref{eq:lemmaMargRecursion} suffices to determine $b_i$, consider $i=2$ in the running Example \ref{runExample}, for which equation \eqref{eq:lemmaMargRecursion} simplifies to the Fokker-Planck equation $\partial_2(b_2p) = \Delta p - \partial_1(b_1 p) $. Assume this equation had two solutions $b_2, b_2^*\in C^1(\mathbb{R}^2)\cap L^1(\mathbb{R}^2, p)$ for the same right hand side. Taking the difference of both equations implies $\partial_2 (\delta\, p) = 0$, where we defined $\delta:= b_2 - b_2^*$. By the fundamental theorem of calculus, $(\delta p)(\mathbf{x})=f(x_1)$ for some function $f$. The assumptions on $b_2,b_2^*$ imply $\delta\in L^1(\mathbb{R}^2, p)$, however \begin{equation*}
    \int_{\mathbb{R}^2}|\delta|\,\dd{p} = \int_{\mathbb{R}^2} |f(x_1)| \dd{x_1}\,\dd{x_2} = \int_\mathbb{R} \|f\|_{L^1(\mathbb{R})} \dd{x_2}
\end{equation*} is finite only if $\|f\|_{L^1(\mathbb{R})}=0$, implying $\delta = 0$ if $p>0$ and therefore $b_2 = b_2^*$. The following main result generalizes this proof idea to the equations \eqref{eq:lemmaMargRecursion}.

\begin{restatable}{theorem}{theoremIdentifiability}\label{theoremIdentifiability}
    Assume $\mathbf{b}^*,\mathbf{b}\in L^1(\mathbb{R}^d, \mathbb{R}^d,p)$ both solve a Fokker-Planck equation \eqref{eq:stat_FP} with the same density $p>0$ and same diffusivity $\mathbf{D}\in L^1(\mathbb{R}^d, \mathbb{R}^{d\times d}, p)$. 
    When $\mathbf{b},\mathbf{b}^*$ are structured according to a common DAG $\mathcal{D}$, then $\mathbf{b}=\mathbf{b}^*$ Lebesgue-almost everywhere.
\end{restatable}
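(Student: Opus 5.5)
We argue by strong induction over the topological ordering $1,\ldots,d$ fixed above, proving $b_i=b_i^*$ almost everywhere for each $i$, assuming this already holds for all $j<i$. Since every proper parent $j\in\pa{i}$ satisfies $j<i$, the induction hypothesis gives $\mathbf{b}_{\pa{i}}=\mathbf{b}^*_{\pa{i}}$ almost everywhere; the base case is a node with $\pa{i}=\emptyset$, and it is handled by the same argument.

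First, we apply Lemma \ref{lemmaMargRecursion} to both $\mathbf{b}$ and $\mathbf{b}^*$ with the same test function $\varphi=\tilde\varphi\circ\pi_{\Pa{i}}$, $\tilde\varphi\in C^\infty_0(\mathbb{R}^{|\Pa{i}|})$. The right-hand sides of \eqref{eq:lemmaMargRecursion} coincide, because the drift terms only involve the proper-parent components. Subtracting, and writing $\delta:=b_i-b_i^*$ viewed as a function of $\mathbf{x}_{\Pa{i}}$, we obtain
\begin{equation*}
\int_{\mathbb{R}^{|\Pa{i}|}} \delta\,\partial_i\tilde\varphi\; p_{\Pa{i}}\,\dd{\mathbf{x}_{\Pa{i}}}=0\qquad\forall\tilde\varphi\in C^\infty_0(\mathbb{R}^{|\Pa{i}|}).
\end{equation*}
Set $g:=\delta\,p_{\Pa{i}}$. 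Since $b_i,b_i^*$ only depend on $\mathbf{x}_{\Pa{i}}$, we have $\int|\delta|\,\dd{p}=\int|\delta|\,p_{\Pa{i}}\,\dd{\mathbf{x}_{\Pa{i}}}$, so $g\in L^1(\mathbb{R}^{|\Pa{i}|})$. The display says exactly that the distributional derivative $\partial_i g$ vanishes.

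Second, we show that an $L^1$ function with $\partial_i g=0$ in the sense of distributions must vanish. Write $\mathbf{y}=\mathbf{x}_{\pa{i}}$ and $t=x_i$. Mollifying, $g_\epsilon:=g*\rho_\epsilon$ is smooth, lies in $L^1$, and satisfies $\partial_t g_\epsilon=0$ classically. Hence $g_\epsilon(\mathbf{y},t)=h_\epsilon(\mathbf{y})$, and
\begin{equation*}
\|g_\epsilon\|_{L^1}=\int_\mathbb{R}\|h_\epsilon\|_{L^1}\,\dd{t},
\end{equation*}
which is finite only if $h_\epsilon=0$. So $g_\epsilon\equiv0$ for every $\epsilon$, and $g=\lim_{\epsilon\to0} g_\epsilon=0$ in $L^1$. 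Alternatively, one can test with product functions $\tilde\varphi(\mathbf{y},t)=\psi(\mathbf{y})\chi(t)$ and argue via Fubini, exactly as in the univariate Lemma \ref{lemmaUniqueOneDim} with $c=0$ forced by integrability. This is the same computation as in the 2-path example, now made rigorous without any differentiability assumptions.

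Third, since $p>0$ everywhere, $p_{\Pa{i}}>0$ everywhere. Therefore $\delta=0$ almost everywhere on $\mathbb{R}^{|\Pa{i}|}$, and hence $b_i=b_i^*$ almost everywhere on $\mathbb{R}^d$, because $b_i$ and $b_i^*$ are compositions with the projection $\pi_{\Pa{i}}$. This closes the induction.

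The main obstacle is the second step: making the claim "$\partial_i g=0$ weakly and $g\in L^1$ implies $g=0$" rigorous at the distributional level, with care that the test class $C_0^\infty$ on the marginal space suffices. The mollification argument should handle this cleanly. A secondary point to check is that the almost-everywhere equality of $\mathbf{b}_{\pa{i}}$ on $\mathbb{R}^d$ transfers correctly into the integrals against $p$, which holds because $p$ is a Lebesgue density.
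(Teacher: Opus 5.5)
Your proof is correct and follows the paper's argument essentially step for step. Both use induction along the topological order, subtract the two instances of Lemma \ref{lemmaMargRecursion}, conclude $\delta\,p_{\Pa{i}}=0$ because an $L^1$ function has a vanishing weak derivative (the paper's Lemma \ref{lemmaZeroWeakDeriv}), and finish with $p_{\Pa{i}}>0$. Your mollification step is slightly more direct than the paper's: you use that $g*\rho_\epsilon$ stays globally in $L^1$ and converges to $g$ in $L^1(\mathbb{R}^{|\Pa{i}|})$, whereas the paper works with local $L^1$ convergence on sets $B\times[-n,n]$ and a diagonal choice $\varepsilon(n)$.
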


To summarize: the stationary distribution, the structure DAG $\mathcal{D}$, and the diffusivity function $\mathbf{D}$ together are sufficient to recover the drift function. Next, we construct a practical estimator for this task when a sample of the stationary distribution is available.

\section{Estimation}\label{sec:estimation}
\subsection{Reproducing kernel Hilbert spaces (RKHS)}\label{subsec:RKHS_theory}
Kernel methods, popularized by support vector machines, are applied under different philosophies in the literature. Here, we view kernels as generating nonparametric function spaces, from which we select elements to approximate the drift function. Their advantages are closed-form optima and the ability of the kernel to absorb partial derivatives from the drift in the Fokker-Planck equation.

We call $k\colon\mathbb{R}^k\times\mathbb{R}^k\to\mathbb{R}$ a \emph{kernel} if $k$ is symmetric, i.e., $k(\mathbf{x},\mathbf{y})=k(\mathbf{y}, \mathbf{x})$, and if $k$ is positive definite, meaning $\sum_{i,j=1}^m \alpha_i\alpha_j k(\mathbf{x}_i, \mathbf{x}_j)\geq 0$ for all $\bs{\alpha}\in\mathbb{R}^m $ and $\mathbf{x}_1,\ldots,\mathbf{x}_m\in\mathbb{R}^k$. An example is the \emph{Gaussian} kernel $\exp(-\tfrac{1}{2\gamma^2}\|\mathbf{x} - \mathbf{y}\|_2^2)$ with bandwidth $\gamma>0$. 

The function $k(\mathbf{x},\cdot)\colon\mathbb{R}^k\to\mathbb{R}, \mathbf{y}\mapsto k(\mathbf{x},\mathbf{y})$ is called the \emph{feature map} corresponding to $\mathbf{x}\in\mathbb{R}^k$. On the linear span of the feature maps, define an inner product via $\langle k(\mathbf{x},\cdot), k(\mathbf{y},\cdot)\rangle := k(\mathbf{x}, \mathbf{y})$. The \emph{RKHS} given by the kernel $k$ is the completion  of the linear span with respect to $\langle\cdot,\cdot\rangle$ and denoted by $\mathcal{H}$.  % See DeVito or Pereverzyev.
The functions $f\colon\mathbb{R}^k\to\mathbb{R}$ it contains have the \emph{reproducing}  property $\langle f,k(\mathbf{x},\cdot)\rangle_\mathcal{H} = f(\mathbf{x})$. The kernel choice determines smoothness and many other properties of functions in $\mathcal{H}$.

For $k$ continuous and bounded, the inclusion $J\colon\mathcal{H}\to L^2(p)$ is continuous for any probability measure $p$ on $\mathbb{R}^k$, and $\|J\|_{\mathcal{L}(\mathcal{H}, L^2(p))} \leq \|k\|_\infty$. % \citep{Steinwart2008SVM} Lemma 4.33, Theorem 4.26
The inclusion into $L^2(p)$ in particular allows us to use mean squared error. The adjoint $J^*\colon L^2(p) \to \mathcal{H}$, often called \emph{integral operator}, is given by: \begin{equation*}
    (J^*f) (\mathbf{y}) = \int_{\mathbb{R}^k} k(\mathbf{x},\mathbf{y})f(\mathbf{x})\dd{p}(\mathbf{x}).
\end{equation*} We frequently use $L:=J^*J$. Under the above assumptions, $L$ is compact and self-adjoint, with nuclear norm bounded by $\|k\|_\infty$ \citep{Steinwart2008SVM}. % \citep{Steinwart2008SVM} 4.27, nuclear implies Hilbert Schmidt

During our statistical analysis, we approximate $J, J^*$ and $L$ by sample versions. Let $\mathbf{x}_1,\ldots,\mathbf{x}_n\in\mathbb{R}^k$, and let $E^n$ be $\mathbb{R}^n$ with the inner product $\langle\mathbf{v},\mathbf{w}\rangle_{E^n}:= \tfrac{1}{n}\sum_{i=1}^n v_i w_i$. Following \citet{Vito2005}, we define $J_\mathbf{x}\colon\mathcal{H}\to E^n, f\mapsto (f(\mathbf{x}_1),\ldots, f(\mathbf{x}_n))$. Then, \begin{align*}
    J_\mathbf{x}^* \mathbf{v} &= \frac{1}{n}\sum_{i=1}^n v_ik(\mathbf{x}_i, \cdot), & \hat{L}f := J_\mathbf{x}^* J_\mathbf{x} f &=\frac{1}{n}\sum_{i=1}^n f(\mathbf{x}_i) k(\mathbf{x}_i,\cdot).
\end{align*}

\subsection{Idea and setup}\label{subsec:estimation_setup}
Let the probability density $p$ on $\mathbb{R}^d$ solve the Fokker-Planck equation with unknown drift $\mathbf{b}^*\in L^2(\mathbb{R}^d, \mathbb{R}^d,p)$ but known diffusivity $\mathbf{D}\in L^2(\mathbb{R}^d, \mathbb{R}^{d\times d}, p)$. Assume $\mathbf{b}^*$ is structured according to a known DAG $\mathcal{D}$. As we show below, we can actually regress on the unknown $\mathbf{b}^*$ using only suitable expectations under the data distribution $p$. This is analogous to score matching, where one implicitly regresses on the true score. 

We approximate each drift component $b_i^*$ by functions from an RKHS $\mathcal{H}_i\subset L^2(\mathbb{R}^d,p)$. Assuming each node in $\mathcal{D}$ has a self-loop to avoid case distinctions, we take the corresponding kernel $k_i(\mathbf{x},\mathbf{y})$ only depending on $(\mathbf{x}_{\Pa{i}}, \mathbf{y}_{\Pa{i}})$. Consider the infinite sample Tikhonov regression \begin{equation}\label{eq:estimator_pop_approx}
    b_{i,\lambda} := \argmin_{b\in\mathcal{H}_i} \|b - b_i^*\|_{L^2(p)}^2 + \lambda\|b\|_{\mathcal{H}_i}^2
\end{equation} for some regularization strength $\lambda>0$. We have that \begin{align}\label{eq:estimation_setup1}
    b_{i,\lambda} & = -(L_i + \lambda I)^{-1} \zeta_i, & \zeta_i(\mathbf{y}) := -\int_{\mathbb{R}^d} k_i(\mathbf{x},\mathbf{y}) b_i^*(\mathbf{x})\dd{p(\mathbf{x})}\in\mathcal{H}_i,
\end{align} where $I$ denotes the identity on $\mathcal{H}_i$. We now free $\zeta_i$ from the unknown $b_i^*$, meaning we can compute the regression function $b_{i,\lambda}$ without requiring $b_i^*$.

Noticing the similarity between $\zeta_i$ and the left hand side of Lemma \ref{lemmaMargRecursion}, we consider functions $\mathcal{K}_{i,\mathbf{y}}\colon\mathbb{R}^d\to\mathbb{R}$ indexed by $\mathbf{y}\in\mathbb{R}^d$ with the property that $\partial_i \mathcal{K}_{i,\mathbf{y}} = k_i(\cdot,\mathbf{y})$:

\begin{restatable}{lemma}{lemmaZetaUnbiased}\label{lemmaZetaUnbiased}
    Assume $p$ is bounded, let $i\leq d$, and let $\mathbf{y}\in\mathbb{R}^d$. If  $\mathcal{K}_{i,\mathbf{y}}\in W^{2,\infty}(\mathbb{R}^d)$ only depends on $\mathbf{x}_{\Pa{i}}$ and $\partial_i \mathcal{K}_{i,\mathbf{y}}=k_i(\cdot,\mathbf{y})$, % Understood almost everywhere, since K is in a Sobolev space
    then $\zeta_i(\mathbf{y})=\ex[\mathbf{x}\sim p]{z_i(\mathbf{x}, \mathbf{y})}$, where \begin{equation*}
        z_i(\mathbf{x}, \mathbf{y}) :=  \mathbf{b}_{\pa{i}}^*(\mathbf{x})^T \nabla_{\pa{i}} \mathcal{K}_{i,\mathbf{y}}(\mathbf{x}) - \frac{1}{2}\langle\mathbf{D}_{\Pa{i},\Pa{i}}(\mathbf{x}), \nabla_{\Pa{i}}^2 \mathcal{K}_{i,\mathbf{y}}(\mathbf{x})\rangle_F.
    \end{equation*}
\end{restatable}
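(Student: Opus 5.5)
The plan is to apply Lemma \ref{lemmaMargRecursion} with the test function $\varphi=\mathcal{K}_{i,\mathbf{y}}$. Its left-hand side is $\int b_i^*\,\partial_i\mathcal{K}_{i,\mathbf{y}}\,\dd{p}=\int b_i^*(\mathbf{x})k_i(\mathbf{x},\mathbf{y})\,\dd{p}(\mathbf{x})=-\zeta_i(\mathbf{y})$. Its right-hand side equals $-\ex[\mathbf{x}\sim p]{z_i(\mathbf{x},\mathbf{y})}$. Multiplying by $-1$ then gives the claim.

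The obstacle is that Lemma \ref{lemmaMargRecursion} is only stated for $\tilde\varphi\in C^\infty_0$, whereas $\mathcal{K}_{i,\mathbf{y}}$ merely lies in $W^{2,\infty}$ and need not have compact support. For instance, $\mathcal{K}_{i,\mathbf{y}}$ is an antiderivative in $x_i$ of a Gaussian kernel, so it tends to a nonzero constant at $+\infty$. The proof therefore rests on an approximation argument. The integrability assumptions already in force are sufficient for it: $\mathbf{b}^*\in L^2(p)\subset L^1(p)$ and $\mathbf{D}\in L^2(p)\subset L^1(p)$, since $p$ is a probability density.

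\textbf{Approximation.} Write $\mathcal{K}=\tilde{\mathcal{K}}\circ\pi_{\Pa{i}}$ with $\tilde{\mathcal{K}}\in W^{2,\infty}(\mathbb{R}^{m})$, where $m=|\Pa{i}|$.

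\emph{Cutoff.} Fix $\chi\in C^\infty_0(\mathbb{R}^m)$ with $\chi=1$ on the unit ball and $0\le\chi\le1$, and set $\chi_R(\mathbf{u})=\chi(\mathbf{u}/R)$. The product $\chi_R\tilde{\mathcal{K}}$ is compactly supported and lies in $W^{2,\infty}$. Its first and second derivatives are bounded uniformly in $R\geq1$, because derivatives falling on $\chi_R$ carry factors $R^{-1}$ and $R^{-2}$. They also converge pointwise a.e.\ to those of $\tilde{\mathcal{K}}$ as $R\to\infty$.

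\emph{Mollification.} Fix $R$ and set $\tilde\varphi_\epsilon=\rho_\epsilon*(\chi_R\tilde{\mathcal{K}})\in C^\infty_0$. Its derivatives up to order two are the mollified weak derivatives. They are bounded by the $W^{2,\infty}$ norm and converge Lebesgue-a.e. as $\epsilon\to0$, at Lebesgue points.

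\textbf{Passing to the limit.} Apply the identity \eqref{eq:lemmaMargRecursion} to $\tilde\varphi_\epsilon$ and let $\epsilon\to0$, then $R\to\infty$. Each integrand has the form $g\cdot h_\epsilon$, with $g\in\{b_i^*,\,\mathbf{b}^*_{\pa{i}},\,\mathbf{D}\}\subset L^1(p)$ and $h_\epsilon$ a uniformly bounded derivative of order at most two. The left side should be read as $\int b_i^*\,\partial_i\varphi\,\dd p$ over $\mathbb{R}^d$, since $b_i^*$ depends only on $\mathbf{x}_{\Pa{i}}$.

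The a.e.\ convergence is with respect to Lebesgue measure on $\mathbb{R}^m$. It transfers to $p$-a.e.\ convergence on $\mathbb{R}^d$ because the marginal $p_{\Pa{i}}$ is absolutely continuous. Boundedness of $p$ enters here as well: it ensures $p$-null sets include Lebesgue-null sets in the right way, and it lets integrals against $p$ of functions defined only Lebesgue-a.e.\ make sense. Dominated convergence then passes each integral to its limit. This yields \eqref{eq:lemmaMargRecursion} for $\varphi=\mathcal{K}_{i,\mathbf{y}}$, and the first paragraph completes the proof.

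Finally, $z_i(\cdot,\mathbf{y})\in L^1(p)$, so the expectation $\ex[\mathbf{x}\sim p]{z_i(\mathbf{x},\mathbf{y})}$ is well defined. This is the same bound, $L^1(p)$ times bounded, used in the limiting step.
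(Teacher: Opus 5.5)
Your proof is correct and follows the same skeleton as the paper's proof. Both approximate $\mathcal{K}_{i,\mathbf{y}}$ by functions $\tilde\varphi\circ\pi_{\Pa{i}}$ with $\tilde\varphi\in C^\infty_0$, built from a cutoff and a mollifier (the ingredients of Lemmas \ref{lemmaOneApprox} and \ref{lemmaDenseInInftySobolev}), apply Lemma \ref{lemmaMargRecursion} to each approximant, and pass to the limit.

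The two arguments differ only in how the limit is taken. The paper invokes Lemma \ref{lemmaDenseInInftySobolev} to get convergence in $W^{2,2}(\mathbb{R}^{|\Pa{i}|},p_{\Pa{i}})$. It then controls the error by Cauchy--Schwarz, using $\mathbf{b}^*,\mathbf{D}\in L^2(p)$. You instead use uniform $W^{2,\infty}$ bounds on the approximants and a.e.\ convergence of their derivatives up to order two. Dominated convergence against $\mathbf{b}^*,\mathbf{D}\in L^1(p)$ then finishes the argument.

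Your route needs less. It only requires that $p$ be a Lebesgue density, so that sets $N\times\mathbb{R}^{d-|\Pa{i}|}$ with $N$ Lebesgue-null are $p$-null. Boundedness of $p$ is never used. Your remark that boundedness ``enters here'' is therefore a misattribution: both things you credit to it follow from absolute continuity alone.

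Your route also sidesteps a subtlety in the paper's argument. The paper applies Lemma \ref{lemmaDenseInInftySobolev} to the marginal $p_{\Pa{i}}$, and the proof of that lemma uses boundedness of the density it is applied to. A bounded $p$ need not have a bounded marginal. For example, $p(x_1,x_2)=\tfrac12\mathbf{1}\{0<x_1<1,\ 0<x_2<x_1^{-1/2}\}$ is bounded but has $p_1(x_1)=\tfrac12 x_1^{-1/2}$.
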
This means we can express $\zeta_i$ as an expectation under $p$ only involving quantities strictly before $i$ in the topological order induced by the DAG $\mathcal{D}$. For $\pa{i}=\emptyset$ and $\mathbf{D}$ the constant identity matrix, $z_i$ reduces to \citet{Zhou2020ScoreEstimators}. Following this work, we replace expectations with sample averages and consider an iterative plug-in estimator, which we formally introduce in Section   \ref{subsec:estimator_derivation}. We consider alternative regularization strategies to Tikhonov in Section \ref{subsec:alternative_reg}. The core idea also works without any regularization, see Section \ref{subsec:cv}.

\subsection{Estimator derivation}\label{subsec:estimator_derivation}
Continuing in the setup from Section \ref{subsec:estimation_setup}, the first step is to choose kernels $k_i(\mathbf{x}, \mathbf{y})$ depending on $(\mathbf{x}_{\Pa{i}},\mathbf{y}_{\Pa{i}})$ to approximate $b_i^*$ for all $i\leq d$. Our method requires the anti-derivative of $k_i$ with respect to $x_i$, i.e., $\mathcal{K}_{i,\mathbf{y}}$ such that $\partial_i \mathcal{K}_{i,\mathbf{y}}= k_i(\cdot,\mathbf{y})$ for almost all $\mathbf{y}\in\mathbb{R}^d$. We can compute $\mathcal{K}_{i,\mathbf{y}}$ analytically for some popular kernels, see Table \ref{tab:kernels_with_antideriv}. Here, $\Phi$ denotes the standard Gaussian cdf. For derivative kernels, see \citet[Def. 4.35]{Steinwart2008SVM}.

\begin{table}[]
\renewcommand{\arraystretch}{1.5}
\caption{Example kernels with corresponding $\mathcal{K}_{i,\mathbf{y}}$.}
    \centering
    \begin{tabular}{c c c}
        Name & $k(\mathbf{x}, \mathbf{y})$  & $\mathcal{K}_{i,\mathbf{y}}(\mathbf{x})$\\\hline
         Gaussian & $\exp(-\tfrac{\|\mathbf{x} - \mathbf{y}\|_2^2}{2\gamma^2})$ & $\gamma\sqrt{2\pi}\,\Phi(\tfrac{x_i - y_i}{\gamma})\cdot k(\mathbf{x}_{-i}, \mathbf{y}_{-i})$ \\
         Linear & $\mathbf{x}^T\mathbf{y} + c$ & $\tfrac{1}{2}x_i^2y_i  + x_i(\mathbf{x}_{-i}^T\mathbf{y}_{-i} + c)$\\
         Sigmoid & $\tanh(\alpha\, \mathbf{x}^T\mathbf{y} + c)$ & $\tfrac{1}{\alpha y_i}\log \cosh(\alpha\,\mathbf{x}^T\mathbf{y} + c)$ \\
         Derivative & $\partial_{x_i}\partial_{y_i}\tilde{k}(\mathbf{x}, \mathbf{y})$ & $\partial_{y_i}\tilde{k}(\mathbf{x}, \mathbf{y})$
    \end{tabular}
    
    \label{tab:kernels_with_antideriv}
\end{table}

Let $\mathbf{x}_1,\ldots,\mathbf{x}_n\in\mathbb{R}^d$ be samples from the density $p$. We compute the drift estimate $\hat{\mathbf{b}}:=(\hat{b}_1,\ldots,\hat{b}_d)$ component-wise along the causal order of $\mathcal{D}$. The central object in each step is $\hat{\zeta}_i\colon\mathbb{R}^d\to\mathbb{R}$, which approximates $\zeta_i$. Inspired by Lemma \ref{lemmaZetaUnbiased}, we use already learned drift components $(\hat{b}_j)_{j\in\pa{i}}$ to approximate $z_i$ via \begin{equation}\label{eq:z_hat}
    \hat{z}_i(\mathbf{x},\mathbf{y}) :=  \hat{\mathbf{b}}_{\pa{i}}(\mathbf{x})^T \nabla_{\pa{i}} \mathcal{K}_{i,\mathbf{y}}(\mathbf{x}) - \frac{1}{2}\langle\mathbf{D}_{\Pa{i},\Pa{i}}(\mathbf{x}), \nabla_{\Pa{i}}^2 \mathcal{K}_{i,\mathbf{y}}(\mathbf{x})\rangle_F,
\end{equation} and set $\hat{\zeta}_i(\mathbf{y}) := \tfrac{1}{n}\sum_{j=1}^n \hat{z}_i(\mathbf{x}_j, \mathbf{y})$. All that remains is to transform $\hat{\zeta}_i$ to the drift estimate $\hat{b}_i$. We present a Tikhonov regularized transformation here and point to Section \ref{subsec:alternative_reg} for alternative regularizers. 

The idea is to plug $\hat{\zeta}_i$ and $\hat{L}_i$ from Section \ref{subsec:RKHS_theory} into equation \eqref{eq:estimation_setup1}, i.e., $\hat{b}_i\approx -(\hat{L}_i + \lambda I)^{-1}\hat{\zeta}_i$. For reasons discussed below, we use the following explicit formula instead, which is based on an extended representer theorem \citep[C.4.1]{Zhou2020ScoreEstimators} % Althouhg there seems to be a sign issue in the second line: to get the inverse of (L_k + lambda I)^{-1}\hat{zeta}, they would need to subtract 2 <s, \hat{\zeta}> and not add it in the optimization problem. So they actually compute - (L_k + lambda I)^{-1}\hat{zeta} already as desired.
\begin{restatable}{lemma}{lemmaSampleTikhonovInverse}\label{lemmaSampleTikhonovInverse}
    Let $f\in\mathcal{H}_i, \mathbf{y}\in\mathbb{R}^d$, and let $\lambda>0$. Define $\mathbf{K}_i\in\mathbb{R}^{n\times n}$ via $K_{i,jl}:=k_i(\mathbf{x}_j, \mathbf{x}_l)$, set $\mathbf{k}_{i,\mathbf{y}}\in\mathbb{R}^n$ to $(\mathbf{k}_{i,\mathbf{y}})_j := k_i(\mathbf{y}, \mathbf{x}_j)$, and $\mathbf{f}\in\mathbb{R}^n$ to $f_j:=f(\mathbf{x}_j)$. Then, \begin{equation*}
        ((\hat{L}_i + \lambda I)^{-1} f)(\mathbf{y}) = \frac{1}{\lambda} f(\mathbf{y}) - \frac{1}{\lambda}\mathbf{k}_{i,\mathbf{y}} ^T (\mathbf{K}_i + n\lambda \mathbf{I})^{-1}\mathbf{f},
    \end{equation*} where $\mathbf{I}\in\mathbb{R}^{n\times n}$ is the identity matrix.
\end{restatable}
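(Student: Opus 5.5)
Set $g := (\hat{L}_i + \lambda I)^{-1} f \in \mathcal{H}_i$. This is well defined because $\hat{L}_i = J_\mathbf{x}^* J_\mathbf{x}$ is positive semidefinite and $\lambda>0$. The plan is to solve the defining equation $\hat{L}_i g + \lambda g = f$ for $g$ explicitly.

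By the formula for $\hat{L}_i$ in Section \ref{subsec:RKHS_theory}, the equation reads
\begin{equation*}
\lambda g = f - \frac{1}{n}\sum_{j=1}^n g(\mathbf{x}_j)\, k_i(\mathbf{x}_j,\cdot).
\end{equation*}
Hence $g = \tfrac{1}{\lambda} f - \tfrac{1}{n\lambda}\sum_j g_j\, k_i(\mathbf{x}_j,\cdot)$, where $\mathbf{g}:=(g(\mathbf{x}_1),\ldots,g(\mathbf{x}_n))^T$. It therefore suffices to determine the finite vector $\mathbf{g}$.

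To do so, evaluate this identity at the sample points $\mathbf{x}_l$. This gives
\begin{equation*}
\lambda \mathbf{g} = \mathbf{f} - \tfrac{1}{n}\mathbf{K}_i \mathbf{g}, \qquad\text{i.e.}\qquad (\mathbf{K}_i + n\lambda\mathbf{I})\,\mathbf{g} = n\mathbf{f}.
\end{equation*}
The matrix $\mathbf{K}_i$ is symmetric positive semidefinite by the definition of a kernel, so $\mathbf{K}_i + n\lambda\mathbf{I}$ is invertible. Thus $\mathbf{g} = n(\mathbf{K}_i+n\lambda\mathbf{I})^{-1}\mathbf{f}$.

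Substituting back and evaluating at $\mathbf{y}$ via the reproducing property gives
\begin{equation*}
g(\mathbf{y}) = \tfrac{1}{\lambda}f(\mathbf{y}) - \tfrac{1}{n\lambda}\,\mathbf{k}_{i,\mathbf{y}}^T\,\mathbf{g},
\end{equation*}
using the symmetry $k_i(\mathbf{x}_j,\mathbf{y}) = k_i(\mathbf{y},\mathbf{x}_j)$. The factor $n$ in $\mathbf{g}$ cancels the $\tfrac{1}{n}$, which yields the claimed formula.

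There is no real obstacle. The only point to be careful about is the logic: the derivation shows that any solution must have this form. That suffices, since the inverse exists and is unique by positive semidefiniteness of $\hat{L}_i$. Alternatively, one can verify directly that the displayed function lies in $\mathcal{H}_i$ and satisfies $(\hat{L}_i+\lambda I)g=f$.
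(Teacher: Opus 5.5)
Your proof is correct. The operator equation itself shows that $g-\tfrac{1}{\lambda}f$ lies in $\mathrm{span}\{k_i(\mathbf{x}_1,\cdot),\ldots,k_i(\mathbf{x}_n,\cdot)\}$. Evaluating at the sample points closes the system as $(\mathbf{K}_i+n\lambda\mathbf{I})\mathbf{g}=n\mathbf{f}$. Invertibility of both $\hat{L}_i+\lambda I$ and $\mathbf{K}_i+n\lambda\mathbf{I}$ follows, as you say, from positive semidefiniteness.

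The paper gives no standalone proof. It justifies the formula by citing the extended representer theorem of \citet[C.4.1]{Zhou2020ScoreEstimators}, which treats the Tikhonov solution variationally: as the minimizer of a regularized quadratic functional over $\mathcal{H}_i$, lying in the span of $f$ and the kernel sections at the data. In the proof of Lemma \ref{lemmaDeterministicTrainingBound}, the paper then uses the result in the operator form $(J_\mathbf{x}^*J_\mathbf{x}+\lambda I)^{-1}f=\tfrac{1}{\lambda}f-\tfrac{1}{\lambda}J_\mathbf{x}^*(J_\mathbf{x}J_\mathbf{x}^*+\lambda I)^{-1}J_\mathbf{x}f$.

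Your route skips the variational detour and is self-contained. Since $\mathbf{g}=J_\mathbf{x}g$ and $J_\mathbf{x}J_\mathbf{x}^*=\tfrac{1}{n}\mathbf{K}_i$ on $E^n$, evaluating at the sample points amounts to applying $J_\mathbf{x}$ and using $J_\mathbf{x}(J_\mathbf{x}^*J_\mathbf{x}+\lambda I)=(J_\mathbf{x}J_\mathbf{x}^*+\lambda I)J_\mathbf{x}$. Your computation is therefore a coordinate proof of exactly the push-through identity the paper later needs for its operator-norm bounds. The representer-theorem framing adds the interpretation of the estimator as a regularized minimizer in the spirit of score matching, but nothing the lemma itself requires.
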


We can now state the entire Tikhonov regularized estimation procedure. \begin{restatable}{definition}{defTikhEst}\label{defTikhEst}
    For each $i$ ranging through the topological order of $\mathcal{D}$, set $\hat{\zeta}_i(\mathbf{y}):=\tfrac{1}{n}\sum_{j=1}^n \hat{z}_i(\mathbf{x}_j,\mathbf{y})$ with $\hat{z}_i$ from \eqref{eq:z_hat}, and define $\hat{\bs{\zeta}}_i\in\mathbb{R}^n$ via $(\hat{\bs{\zeta}}_i)_j := \hat{\zeta}_i(\mathbf{x}_j)$. We define \begin{equation*}
        \hat{b}_i(\mathbf{y}) := \frac{1}{\lambda_i}\mathbf{k}_{i,\mathbf{y}} ^T (\mathbf{K}_i + n\lambda_i \mathbf{I})^{-1}\hat{\bs{\zeta}}_i - \frac{1}{\lambda_i} \hat{\zeta}_i(\mathbf{y}),
    \end{equation*} where $\mathbf{k}_{i,\mathbf{y}},\mathbf{K}_i,\mathbf{I}$ are as in Lemma \ref{lemmaSampleTikhonovInverse} and $\lambda_i>0$ are the regularization strengths.
\end{restatable} The remainder of this chapter considers tuning $\lambda$, alternative regularization strategies, and theoretically analyses $\hat{b}_i$. Finally, we did \emph{not} formally apply $(\hat{L}_i + \lambda I)^{-1}$ to $\hat{\zeta}_i$, since it is difficult to guarantee $\hat{\zeta}_i\in\mathcal{H}_i$. As a start, $\mathbf{y}\mapsto\mathcal{K}_{i,\mathbf{y}}(\mathbf{x})$ from Table \ref{tab:kernels_with_antideriv} usually lies \emph{not} in $\mathcal{H}_i$ due to boundary value mismatch, and the partial derivatives of $\mathcal{K}_{i,\mathbf{y}}$ do not simplify the matter. For the Gaussian kernel, defining $\mathcal{K}_{i,\mathbf{y}}$ via integration from $0$ to $x_i$ solves the problem and $\hat{\zeta}_i\in\mathcal{H}_i$ is guaranteed \citep{Viktor2026}. We implement this rule in the package, however a proof for general kernels seems difficult.

\subsection{Alternative regularization methods}\label{subsec:alternative_reg}
The Tikhonov estimator approximates $b_{i,\lambda} := -(L_i + \lambda I)^{-1}\zeta_i$, see Section \ref{subsec:estimation_setup}. The purpose of adding $\lambda I$, namely to stabilize the otherwise ill-conditioned inversion of $L_i$, can be achieved with many alternative methods: the \emph{Spectral cut-off} regularization discards small eigenvalues of $L_i$ and the \emph{Landweber} method performs early-stopped gradient descent on $L_i f = -\zeta_i$. The $\nu$-\emph{Method} is an accelerated version of the Landweber method. \citet{Zhou2020ScoreEstimators} provide explicit formulas for each method when $L_i$ is replaced by $\hat{L}_i$, similar to Lemma \ref{lemmaSampleTikhonovInverse}. We implement all methods in the Python package by plugging our $\hat{\zeta}_i$ into these formulas.

We compare the regularizers empirically in Section \ref{sec:sim_and_ext}. Theoretically, an advantage of spectral cut-off regularization is that it provably adapts to regression functions admitting a simple expansion in the eigenbasis of $J_iJ_i^*$ \citep{Dicker2017}, but note our discussion in Section \ref{subsec:theory}.

\subsection{Theoretical analysis}\label{subsec:theory}
Let the Lebesgue density $p$ on $\mathbb{R}^d$ be bounded and solve the Fokker-Planck equation with unknown drift $\mathbf{b}^*\in L^\infty(\mathbb{R}^d, \mathbb{R}^d)$ but known diffusivity $\mathbf{D}\in L^\infty(\mathbb{R}^d, \mathbb{R}^{d\times d})$. Denote \begin{equation*}
    B := \|\mathbf{b}^*\|_{L^\infty(\mathbb{R}^d, \mathbb{R}^d)} \vee \frac{1}{2} \|\mathbf{D}\|_{L^\infty(\mathbb{R}^d, \mathbb{R}^{d\times d})}.
\end{equation*} Assume $\mathbf{b}^*$ is structured according to a known DAG $\mathcal{D}$. For $i\leq d$, let $k_i\in C^0(\mathbb{R}^d\times\mathbb{R}^d)$ be bounded and only depend on $(\mathbf{x}_{\Pa{i}}, \mathbf{y}_{\Pa{i}})$. Let $\mathcal{K}_{i,\mathbf{y}}(\mathbf{x})\in C^{2,0}(\mathbb{R}^d\times\mathbb{R}^d)$ % Require slices in W^{2,\infty} AND that the evaluations K_{i,x}(x) are well defined. 
only depend on $(\mathbf{x}_{\Pa{i}}, \mathbf{y}_{\Pa{i}})$ and satisfy $\partial_{\mathbf{x}_i}\mathcal{K}_{i,\mathbf{y}}(\mathbf{x})=k_i(\mathbf{x}, \mathbf{y})$. Further, assume \begin{equation*}
    \kappa := \max_{i\leq d}\,\max_{|\bs{\alpha}|\leq 2}\, \|\partial_{\mathbf{x}^{\bs{\alpha}}}\mathcal{K}_{i,\cdot}(\cdot)\|_{L^\infty(\mathbb{R}^d\times\mathbb{R}^d)} < \infty.
\end{equation*}

Let $\mathbf{x}_1,\ldots,\mathbf{x}_n$ be iid random variables following $p$. Let $\hat{\mathbf{b}}:=(\hat{b}_1,\ldots,\hat{b}_d)$ be the drift estimator from Definition \ref{defTikhEst} with Tikhonov regularization strengths $\lambda_1,\ldots,\lambda_d >0$. Define the training evaluation vectors $\mathbf{b}_i^*\in\mathbb{R}^n$ via $(\mathbf{b}_i^*)_j:=b_i^*(\mathbf{x}_j)$, and analogously $\hat{\mathbf{b}}_i,\bs{\zeta}_i, \hat{\bs{\zeta}}$, and $\mathbf{b}_{i,\lambda_i}\in\mathbb{R}^n$, where $b_{i,\lambda_i}$ is the population Tikhonov approximation \eqref{eq:estimation_setup1}. % Law is well-defined even if, e.g., b_i^* is only defined almost everywhere.

We first analyze how $\hat{\zeta}_i, \hat{\bs{\zeta}_i}$, and $\hat{L}_i$, the main estimated quantities in the definition of $\hat{b}_i$, concentrate around their ground truth counterparts. Here, we use $\hat{L}_i$ since it later allows us a unified treatment of $\mathbf{k}_{i,\mathbf{y}}$ and $\mathbf{K}$. Let $a_i := |\Pa{i}|+1$ be the number of arguments of $b_i^*$ plus one. A Hilbert-space valued Hoeffding inequality yields:
\begin{restatable}{lemma}{lemmaZetaGeneralizationBound}\label{lemmaZetaGeneralizationBound}
    For any $i\leq d$ it holds with probability at least $1-\delta$ that \begin{equation}
        \|\hat{\zeta}_i - \zeta_i\|_{L^2(p)} \leq \frac{a_i^2\kappa B \sqrt{2}\log^{1/2}\tfrac{2}{\delta}}{\sqrt{n}}  + \kappa \sum_{j\in\pa{i}} \|\hat{\mathbf{b}}_j - \mathbf{b}_j^*\|_{E^n}
    \end{equation}
\end{restatable}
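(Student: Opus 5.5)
We split the error with the population-level integrand $z_i$ from Lemma \ref{lemmaZetaUnbiased}. Define the oracle sample average $\tilde{\zeta}_i(\mathbf{y}) := \tfrac{1}{n}\sum_{l=1}^n z_i(\mathbf{x}_l,\mathbf{y})$. Then
\begin{equation*}
\|\hat{\zeta}_i - \zeta_i\|_{L^2(p)} \leq \|\tilde{\zeta}_i - \zeta_i\|_{L^2(p)} + \|\hat{\zeta}_i - \tilde{\zeta}_i\|_{L^2(p)}.
\end{equation*}
The first term is a pure sampling error; the second propagates the estimation error of the parents.

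\textbf{Second term.} Since the $\mathbf{D}$-part of $\hat z_i$ and $z_i$ coincides, it cancels, and
\begin{equation*}
\hat{\zeta}_i(\mathbf{y}) - \tilde{\zeta}_i(\mathbf{y}) = \frac{1}{n}\sum_{l=1}^n \sum_{j\in\pa{i}} \big(\hat b_j(\mathbf{x}_l) - b_j^*(\mathbf{x}_l)\big)\,\partial_j \mathcal{K}_{i,\mathbf{y}}(\mathbf{x}_l).
\end{equation*}
Use $|\partial_j\mathcal{K}_{i,\mathbf{y}}(\mathbf{x}_l)|\leq\kappa$ and Cauchy--Schwarz in $E^n$, i.e.\ $\tfrac1n\sum_l|v_l|\le\|\mathbf{v}\|_{E^n}$. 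This bounds the difference pointwise in $\mathbf{y}$ by $\kappa\sum_{j\in\pa{i}}\|\hat{\mathbf{b}}_j-\mathbf{b}_j^*\|_{E^n}$. Because $p$ is a probability measure, the same bound holds for the $L^2(p)$ norm. This step is deterministic and holds regardless of the dependence between $\hat b_j$ and the sample.

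\textbf{First term.} Regard $Z_l := z_i(\mathbf{x}_l,\cdot)$ as i.i.d.\ random elements of the Hilbert space $L^2(p)$. By Lemma \ref{lemmaZetaUnbiased}, $\ex{Z_l} = \zeta_i$; this requires checking that Bochner and pointwise expectations agree, which follows from boundedness and measurability since $\mathcal{K}\in C^{2,0}$. We bound $\|Z_l\|_{L^2(p)}\le\|Z_l\|_\infty$ uniformly:
\begin{itemize}
\item the drift part has at most $|\pa{i}|$ terms, each bounded by $B\kappa$;
\item the diffusion part has $|\Pa{i}|^2$ entries, each bounded by $\tfrac12\|\mathbf{D}\|_\infty\kappa\le B\kappa$.
\end{itemize}
So $\|Z_l\|\le(|\pa{i}|+|\Pa{i}|^2)B\kappa\le a_i^2\kappa B =: M$, using $|\pa{i}|+|\Pa{i}|^2\le(|\Pa{i}|+1)^2$.

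Then apply a Hilbert-space Hoeffding inequality, in the form used by \citet{Vito2005} (Pinelis): for i.i.d.\ zero-mean $\xi_l$ with $\|\xi_l\|\le M'$,
\begin{equation*}
\big\|\tfrac1n\sum_l\xi_l\big\|\le \frac{M'\sqrt{2\log(2/\delta)}}{\sqrt n}
\end{equation*}
with probability $1-\delta$. The crude choice for centered variables is $\|\xi_l\|\le 2M$, which loses a factor $2$. To obtain exactly the stated constant $a_i^2\kappa B\sqrt2$, we instead use the version of Pinelis' inequality whose constant depends on the a.s.\ bound $M$ of the uncentered variables. Alternatively, if that is unavailable, we absorb the factor $2$ by a slightly finer count of the terms: $|\pa{i}|+|\Pa{i}|^2\le a_i^2/2$ whenever $|\Pa{i}|\ge 2$, with the case of a single parent handled separately.

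The main obstacle is matching the precise constant. The structure of the argument — triangle inequality, a deterministic Lipschitz bound, and vector Hoeffding — is routine.
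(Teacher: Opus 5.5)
Your decomposition matches the paper's argument, and those steps are correct. It consists of the oracle average $\tilde\zeta_i$, a deterministic parent-error term bounded via $|\partial_j\mathcal{K}_{i,\mathbf{y}}|\le\kappa$ and Cauchy--Schwarz in $E^n$, and a Hilbert-space Hoeffding bound for $\tilde\zeta_i-\zeta_i$.

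The gap is the explicit constant, which you flag but do not secure. Your fallback is false. With $m:=|\Pa{i}|$, so that $|\pa{i}|=m-1$ and $a_i=m+1$, the inequality $|\pa{i}|+|\Pa{i}|^2\le a_i^2/2$ reads $m^2+m-1\le (m+1)^2/2$, i.e.\ $m^2\le 3$. It therefore holds only for $m=1$ and fails for every $m\ge 2$ (already $5>9/2$ at $m=2$), the opposite of what you assert.

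Your first route would need a Pinelis-type bound $M\sqrt{2\log(2/\delta)}/\sqrt{n}$ with $M$ an a.s.\ bound on the \emph{uncentered} summands. That is not the usual form of the Hilbert-space Hoeffding inequality, which needs an a.s.\ bound on the centered summands. This is how the paper applies it; compare the factor $2$ in Lemma \ref{lemmaMercerKernelApprox}. You give neither a proof nor a reference for the uncentered version. As written, your argument yields the lemma only with $2a_i^2\kappa B$ in place of $a_i^2\kappa B$.

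The missing observation is elementary: bound $\zeta_i$ directly, not through $\|\ex{Z_l}\|\le M$. From its definition,
\[
|\zeta_i(\mathbf{y})|=\Big|\int k_i(\mathbf{x},\mathbf{y})\,b_i^*(\mathbf{x})\,\dd{p}(\mathbf{x})\Big|\le\kappa B,
\]
because $k_i(\cdot,\mathbf{y})=\partial_i\mathcal{K}_{i,\mathbf{y}}$ is bounded by $\kappa$ and $|b_i^*|\le B$. Hence the centered summands satisfy
\[
\|z_i(\mathbf{x}_l,\cdot)-\zeta_i\|_{L^2(p)}\le \kappa B+|\pa{i}|\,\kappa B+|\Pa{i}|^2\kappa B\le (1+|\Pa{i}|)^2\kappa B=a_i^2\kappa B .
\]
The standard centered Hilbert-space Hoeffding inequality then gives exactly $a_i^2\kappa B\sqrt{2}\log^{1/2}(2/\delta)/\sqrt{n}$, which is precisely what the paper does.
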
 Note that $\|\hat{\zeta}_i - \zeta_i\|_{L^2(p)}$ still depends on the sample $\mathbf{x}_1,\ldots,\mathbf{x}_n$; it is the expected squared distance from $\zeta_i$ with respect to a fresh sample from $p$. We also bound the training error with a U-statistics approach: 
\begin{restatable}{lemma}{lemmaZetaTrainingBound}\label{lemmaZetaTrainingBound}
    Let $n\geq 3$. For any $i\leq d$ it holds with probability at least $1-\delta$ that \begin{equation*}
        \|\hat{\bs{\zeta}}_i- \bs{\zeta}_i\|_{E^n} \leq \frac{a_i^2\kappa B\sqrt{3}}{\sqrt{n}} + \frac{ a_i^2 \kappa B\cdot(\log^{1/2} n + \log^{1/2}\tfrac{2 \vee c_1}{\delta})}{\sqrt{c_2}\sqrt{n}} + \kappa\sum_{j\in\pa{i}} \|\hat{\mathbf{b}}_j - \mathbf{b}_j^*\|_{E^n}
    \end{equation*} for some global positive constants $c_1, c_2$.
\end{restatable}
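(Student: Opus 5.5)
We split $\hat{\bs{\zeta}}_i-\bs{\zeta}_i$ into an oracle part and a plug-in part. Define the oracle vector $\tilde{\bs{\zeta}}_i\in\mathbb{R}^n$ by $(\tilde{\bs{\zeta}}_i)_l := \tfrac1n\sum_{j=1}^n z_i(\mathbf{x}_j,\mathbf{x}_l)$, with the true $z_i$ from Lemma \ref{lemmaZetaUnbiased}. Then
\[
\|\hat{\bs{\zeta}}_i-\bs{\zeta}_i\|_{E^n}\le \|\hat{\bs{\zeta}}_i-\tilde{\bs{\zeta}}_i\|_{E^n}+\|\tilde{\bs{\zeta}}_i-\bs{\zeta}_i\|_{E^n}.
\]

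\emph{Plug-in term.} By \eqref{eq:z_hat},
\[
\hat z_i(\mathbf{x}_j,\mathbf{x}_l)-z_i(\mathbf{x}_j,\mathbf{x}_l)=\sum_{m\in\pa{i}}\big(\hat b_m-b_m^*\big)(\mathbf{x}_j)\,\partial_m\mathcal{K}_{i,\mathbf{x}_l}(\mathbf{x}_j),
\]
and each derivative factor is bounded by $\kappa$. Averaging over $j$ and applying Cauchy--Schwarz in $E^n$ gives
\[
\big|(\hat{\bs{\zeta}}_i-\tilde{\bs{\zeta}}_i)_l\big|\le\kappa\sum_{m\in\pa{i}}\|\hat{\mathbf{b}}_m-\mathbf{b}^*_m\|_{E^n}
\]
uniformly in $l$. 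Hence the same bound holds for the $E^n$ norm. This is deterministic and produces the last term of the statement.

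\emph{Oracle term.} Fix $l$ and write
\[
(\tilde{\bs{\zeta}}_i)_l-\zeta_i(\mathbf{x}_l)=\tfrac1n\, z_i(\mathbf{x}_l,\mathbf{x}_l)+\tfrac1n\sum_{j\neq l}\big(z_i(\mathbf{x}_j,\mathbf{x}_l)-\zeta_i(\mathbf{x}_l)\big)-\tfrac1n\,\zeta_i(\mathbf{x}_l).
\]
First bound $\|z_i\|_\infty$. It has at most $|\pa{i}|$ first-order terms, each bounded by $B\kappa$, and at most $|\Pa{i}|^2$ Hessian terms, each bounded by $2B\cdot\tfrac12\kappa$. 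So $\|z_i\|_\infty\le a_i^2\kappa B$, and $\|\zeta_i\|_\infty\le a_i^2\kappa B$ as well by Lemma \ref{lemmaZetaUnbiased}. The two diagonal pieces contribute $O(a_i^2\kappa B/n)$, which can be absorbed into the $\sqrt3/\sqrt n$ term for $n\ge3$ (the constant $\sqrt3$ presumably comes from this bookkeeping, e.g.\ using $(n-1)/n$ factors and $1/n\le 1/\sqrt{3n}$).

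For the off-diagonal sum, condition on $\mathbf{x}_l$. The summands $z_i(\mathbf{x}_j,\mathbf{x}_l)-\zeta_i(\mathbf{x}_l)$, $j\neq l$, are then iid, centered (by Lemma \ref{lemmaZetaUnbiased}), and bounded by $2a_i^2\kappa B$. Scalar Hoeffding gives, with probability at least $1-\delta/n$,
\[
\Big|\tfrac1n\sum_{j\ne l}\big(z_i(\mathbf{x}_j,\mathbf{x}_l)-\zeta_i(\mathbf{x}_l)\big)\Big|\lesssim \frac{a_i^2\kappa B\,\log^{1/2}(2n/\delta)}{\sqrt n}.
\]
A union bound over $l=1,\ldots,n$ makes this hold for all $l$ simultaneously with probability at least $1-\delta$. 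The $E^n$ norm is at most the maximum over coordinates, and $\log^{1/2}(2n/\delta)\le\log^{1/2}n+\log^{1/2}(2/\delta)$ yields the middle term, with $c_1,c_2$ coming from the Hoeffding constants. (Alternatively, one could use a degenerate U-statistic concentration result for the second-moment quantity $\tfrac1n\sum_l(\cdot)^2$; that may be where the generic constants $c_1,c_2$ originate.)

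\emph{Main obstacle.} The delicate step is the dependence between the evaluation point $\mathbf{x}_l$ and the averaging sample. Conditioning on $\mathbf{x}_l$ and excluding $j=l$ restores independence, but the diagonal terms and the $n-1$ versus $n$ normalization have to be tracked carefully to land exactly on the stated constants. Summing the oracle and plug-in bounds then gives the claim.
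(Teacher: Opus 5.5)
Your proposal is correct. The first step, splitting off the plug-in error and bounding it deterministically by $\kappa\sum_{m\in\pa{i}}\|\hat{\mathbf{b}}_m-\mathbf{b}_m^*\|_{E^n}$, is exactly the paper's. The oracle term is handled differently.

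The paper does not work coordinatewise. It proceeds as follows.
\begin{itemize}
\item It expands $\|\tilde{\bs{\zeta}}_i-\bs{\zeta}_i\|_{E^n}^2=\frac1n\sum_j\frac1{n^2}\sum_{k,l}h_{\mathbf{x}_j}(\mathbf{x}_k,\mathbf{x}_l)$, where $h_{\mathbf{x}}(\mathbf{a},\tilde{\mathbf{a}}):=(z_i(\mathbf{a},\mathbf{x})-\zeta_i(\mathbf{x}))(z_i(\tilde{\mathbf{a}},\mathbf{x})-\zeta_i(\mathbf{x}))$.
\item It bounds crudely, by $(a_i^2\kappa B)^2$ each, the at most $3n$ pairs $(k,l)$ in which two of $j,k,l$ coincide. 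This is where $\sqrt3$ comes from, not from $(n-1)/n$ bookkeeping. These pairs include the variance-type diagonal $k=l$, not only the self-evaluation $j=l$ that you isolate.
\item It controls the sum over pairwise distinct indices as a canonical degenerate U-statistic, using the exponential inequality of Arcones and Gin\'e. It conditions on $\mathbf{x}_j$ and union-bounds over $j$. The unspecified $c_1,c_2$ are that inequality's constants.
\end{itemize}

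Your route replaces this with three simpler steps:
\begin{itemize}
\item scalar Hoeffding for the $n-1$ conditionally i.i.d.\ centred summands given $\mathbf{x}_l$;
\item a union bound over $l$;
\item the inequality $\|\mathbf{v}\|_{E^n}\le\max_l|v_l|$.
\end{itemize}
This is more elementary and gives explicit constants. Using $|z_i-\zeta_i|\le a_i^2\kappa B$, you obtain the middle term with $c_2=1/2$, and any $c_1$ works since the bound is monotone in $c_1$. It also never uses $n\ge 3$. Your self-evaluation term is only $a_i^2\kappa B/n$, which is trivially below $\sqrt3\,a_i^2\kappa B/\sqrt n$ for every $n\ge1$, so no delicate bookkeeping is needed there.

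The paper also union-bounds over the $n$ evaluation points, so its U-statistic argument ends with the same $\log^{1/2}n$ and gains nothing for this lemma. Its potential advantage is that it targets the average of squares directly rather than the maximum. That would be the natural starting point if one wanted to remove the $\log n$.
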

Finally, another Hoeffding bound allows us to bound $\|\hat{L}_i - L_i\|_{\mathcal{L}(\mathcal{H}_i, \mathcal{H}_i)}$.
\begin{restatable}{lemma}{lemmaMercerKernelApprox}\label{lemmaMercerKernelApprox}
    For any $i\leq d$ it holds with probability at least $1-\delta$ that \begin{equation*}
        \|\hat{L}_i - L_i\|_{\mathcal{L}(\mathcal{H}_i, \mathcal{H}_i)} \leq \frac{2\kappa\sqrt{2}\log^{1/2} \tfrac{2}{\delta}}{\sqrt{n}}.
    \end{equation*}
\end{restatable}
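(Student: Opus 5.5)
We write $\hat{L}_i - L_i$ as a centered average of i.i.d. Hilbert–Schmidt-operator-valued random variables and apply a Hilbert-space Hoeffding inequality, in the same way as for Lemma \ref{lemmaZetaGeneralizationBound}.

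For each sample define the rank-one operator $\xi_j := k_i(\mathbf{x}_j,\cdot)\otimes k_i(\mathbf{x}_j,\cdot)\colon \mathcal{H}_i\to\mathcal{H}_i$, $f\mapsto f(\mathbf{x}_j)k_i(\mathbf{x}_j,\cdot) = \langle f, k_i(\mathbf{x}_j,\cdot)\rangle_{\mathcal{H}_i} k_i(\mathbf{x}_j,\cdot)$. Then $\hat{L}_i = \tfrac1n\sum_j \xi_j$. Moreover $\ex{\xi_j} = L_i$ as a Bochner integral in the Hilbert space $\mathrm{HS}(\mathcal{H}_i)$ of Hilbert–Schmidt operators, since $(\ex{\xi_j}f)(\mathbf{y}) = \int k_i(\mathbf{x},\mathbf{y}) f(\mathbf{x})\,\dd{p}(\mathbf{x}) = (J_i^*J_i f)(\mathbf{y})$. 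Measurability and Bochner integrability follow from continuity and boundedness of $k_i$.

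Next we bound the summands. We have $\|\xi_j\|_{\mathrm{HS}} = \|k_i(\mathbf{x}_j,\cdot)\|_{\mathcal{H}_i}^2 = k_i(\mathbf{x}_j,\mathbf{x}_j)$. We control this by $\kappa$: since $k_i = \partial_{x_i}\mathcal{K}_{i,\mathbf{y}}(\mathbf{x})$ is a first-order derivative, the definition of $\kappa$ (with $|\bs{\alpha}|=1$) gives $\|k_i\|_\infty\le\kappa$. Hence $\|\xi_j\|_{\mathrm{HS}}\le\kappa$, and also $\|L_i\|_{\mathrm{HS}}\le\kappa$ by Jensen. So the centered variables $\xi_j - L_i$ satisfy $\|\xi_j - L_i\|_{\mathrm{HS}}\le 2\kappa$ almost surely.

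We then apply the Hoeffding inequality in a separable Hilbert space (e.g.\ Pinelis, or the form used by \citet{Vito2005}): for i.i.d.\ zero-mean $\eta_j$ with $\|\eta_j\|\le M$, with probability at least $1-\delta$,
\begin{equation*}
\Big\|\frac1n\sum_{j=1}^n \eta_j\Big\| \le \frac{M\sqrt{2\log(2/\delta)}}{\sqrt{n}}.
\end{equation*}
Setting $M=2\kappa$ gives $\|\hat{L}_i - L_i\|_{\mathrm{HS}}\le 2\kappa\sqrt{2}\log^{1/2}(2/\delta)/\sqrt n$. The operator norm is dominated by the Hilbert–Schmidt norm, which yields the claim.

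The main point needing care is the choice of Hoeffding inequality. Its constant must match exactly $\sqrt{2\log(2/\delta)}$ with range bound $M$; this is the same version already used in Lemma \ref{lemmaZetaGeneralizationBound}, so we would invoke it identically. The remaining steps, identifying $\ex{\xi_j}$ with $L_i$ and bounding $k_i$ via $\kappa$, are routine.
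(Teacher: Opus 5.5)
Your proposal is correct and follows essentially the same route as the paper. Both apply a Hilbert-space Hoeffding inequality to i.i.d.\ summands in the space of Hilbert--Schmidt operators, bound the centered terms by $2\kappa$ using $\|k_i\|_\infty\le\kappa$, and finish by dominating the operator norm with the Hilbert--Schmidt norm. The only cosmetic difference is that you bound $\|L_i\|_{\mathrm{HS}}\le\kappa$ via Jensen, whereas the paper cites the nuclear-norm bound from the RKHS section.
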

Next, we bound the effect of deterministic deviations on the generalization error of $\hat{b}_i$. Since Lemmas \ref{lemmaZetaGeneralizationBound} and \ref{lemmaZetaTrainingBound} involve the training error, we also bound it here.  
\begin{restatable}{lemma}{lemmaDeterministicTrainingBound}\label{lemmaDeterministicTrainingBound}
    If $\|\hat{\bs{\zeta}}_i - \bs{\zeta}_i\|_{E^n} \leq \varepsilon_1$, $\|\hat{\zeta}_i - \zeta\|_{L^2(p)} \leq \varepsilon_2$ and $\|\hat{L}_i - L_i\|_{\mathcal{L}(\mathcal{H}_i, \mathcal{H}_i)} \leq \varepsilon_3$, then \begin{align}
        \|\hat{\mathbf{b}}_i - \mathbf{b}_i^*\|_{E^n} &\leq \|\mathbf{b}_{i,\lambda_i} - \mathbf{b}_i^*\|_{E^n} + \frac{\varepsilon_3 \|b_i^*\|_{L^2(p)}}{\lambda_i} + \frac{2\varepsilon_1}{\lambda_i},\\
        \|\hat{b}_i - b_i^*\|_{L^2(p)} & \leq \|b_{i,\lambda_i}\hspace{-0.1cm} - b_i^*\|_{L^2(p)} + \left(\frac{\varepsilon_3}{\lambda_i} + \frac{\varepsilon_3^{3/2}}{\lambda_i^{3/2}}\right)\|b_i^*\|_{L^2(p)} + \frac{\varepsilon_1 + \varepsilon_2}{\lambda_i} + \frac{\varepsilon_1 \sqrt{\varepsilon_3}}{\lambda_i^{3/2}}.
    \end{align}
\end{restatable}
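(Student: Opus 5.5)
We compare $\hat b_i$ with $b_{i,\lambda}:=-(L_i+\lambda I)^{-1}\zeta_i$ (write $\lambda=\lambda_i$, $L=L_i$, $\hat L=\hat L_i$) and control the difference using only the three deviation bounds. The first step is to make $\hat b_i$ look like a perturbed Tikhonov inverse. By Lemma \ref{lemmaSampleTikhonovInverse} applied formally, $\hat b_i = -(\hat L+\lambda I)^{-1}\hat\zeta_i$ whenever $\hat\zeta_i\in\mathcal{H}_i$. In general the formula of Definition \ref{defTikhEst} is
\begin{equation*}
\hat b_i = -\tfrac{1}{\lambda}\hat\zeta_i + \tfrac{1}{\lambda}J_{\mathbf{x}}^*(J_{\mathbf{x}}J_{\mathbf{x}}^*+\lambda)^{-1}\hat{\bs\zeta}_i .
\end{equation*}
Here we use $J_{\mathbf{x}}J_{\mathbf{x}}^*=\mathbf{K}_i/n$ on $E^n$ and $\mathbf{k}_{i,\mathbf{y}}^T(\mathbf{K}_i+n\lambda)^{-1}\mathbf{v}=(J_{\mathbf{x}}^*(J_{\mathbf{x}}J_{\mathbf{x}}^*+\lambda)^{-1}\mathbf{v})(\mathbf{y})$. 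This representation only involves $\hat\zeta_i$ as a function in $L^2(p)$ and its training vector $\hat{\bs\zeta}_i\in E^n$, so we never need $\hat\zeta_i\in\mathcal{H}_i$. The same formula with $\zeta_i$, $\bs\zeta_i$ in place of $\hat\zeta_i$, $\hat{\bs\zeta}_i$ gives an auxiliary function $\tilde b:=-(\hat L+\lambda)^{-1}\zeta_i$, which is legitimate because $\zeta_i\in\mathcal{H}_i$. We then split
\begin{equation*}
\hat b_i-b_i^*=(\hat b_i-\tilde b)+(\tilde b-b_{i,\lambda})+(b_{i,\lambda}-b_i^*).
\end{equation*}
The last term is the approximation error appearing on both right-hand sides.

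\textbf{Noise term $\hat b_i-\tilde b$.} This term is linear in $(\hat\zeta_i-\zeta_i,\hat{\bs\zeta}_i-\bs\zeta_i)$. In $L^2(p)$, the piece $-\tfrac1\lambda(\hat\zeta_i-\zeta_i)$ contributes $\varepsilon_2/\lambda$. For the second piece we use $\|J J_{\mathbf{x}}^*(J_{\mathbf{x}}J_{\mathbf{x}}^*+\lambda)^{-1}\|\le \|J(\hat L+\lambda)^{-1/2}\|\cdot\|(\hat L+\lambda)^{-1/2}J_{\mathbf{x}}^*\|$. The second factor is at most $1$. For the first factor, write $\|J(\hat L+\lambda)^{-1/2}\|^2=\|(\hat L+\lambda)^{-1/2}L(\hat L+\lambda)^{-1/2}\|\le 1+\varepsilon_3/\lambda$. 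This yields a contribution $\tfrac{\varepsilon_1}{\lambda}(1+\sqrt{\varepsilon_3/\lambda})$. That explains the terms $\varepsilon_1/\lambda$ and $\varepsilon_1\sqrt{\varepsilon_3}/\lambda^{3/2}$.

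In $E^n$, both pieces are evaluated at the sample, and $J_{\mathbf{x}}J_{\mathbf{x}}^*(J_{\mathbf{x}}J_{\mathbf{x}}^*+\lambda)^{-1}$ has norm at most $1$. This gives at most $2\varepsilon_1/\lambda$.

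\textbf{Operator perturbation term $\tilde b-b_{i,\lambda}$.} The resolvent identity gives
\begin{equation*}
\tilde b-b_{i,\lambda}=(\hat L+\lambda)^{-1}(\hat L-L)(L+\lambda)^{-1}\zeta_i .
\end{equation*}
Since $\zeta_i=-J^*b_i^*$, we have $\|(L+\lambda)^{-1}J^*\|\le 1/(2\sqrt\lambda)$, hence $\|(L+\lambda)^{-1}\zeta_i\|_{\mathcal{H}}\le\|b_i^*\|_{L^2(p)}/\sqrt{\lambda}$. A cruder but sufficient route is $\|(L+\lambda)^{-1}\zeta_i\|_{\mathcal H}\le \|b_{i,\lambda}\|_{\mathcal H}$ together with the Tikhonov optimality bound.

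For the $E^n$ estimate we use $\|J_{\mathbf{x}}(\hat L+\lambda)^{-1/2}\|\le1$ and $\|(\hat L+\lambda)^{-1/2}\|\le\lambda^{-1/2}$. Then the term is at most $\varepsilon_3\lambda^{-1/2}\|b_i^*\|\lambda^{-1/2}=\varepsilon_3\|b_i^*\|/\lambda$.

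For the $L^2(p)$ estimate we replace $J_{\mathbf{x}}$ by $J$, which introduces the factor $(1+\varepsilon_3/\lambda)^{1/2}\le 1+\sqrt{\varepsilon_3/\lambda}$. This yields $(\varepsilon_3/\lambda+\varepsilon_3^{3/2}/\lambda^{3/2})\|b_i^*\|$.

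\textbf{Expected obstacle.} The delicate step is the $L^2(p)$-norm of quantities built from the sample operator. This requires the comparison $\|J(\hat L+\lambda)^{-1/2}\|^2\le 1+\varepsilon_3/\lambda$, which follows from $L\le \hat L+\varepsilon_3 I$ in the operator order. It also requires care that $\hat\zeta_i\notin\mathcal{H}_i$ is handled purely through the $E^n$/$L^2(p)$ representation above. I expect the constants to need some bookkeeping (using $\tfrac{1}{2\sqrt\lambda}\le\tfrac1{\sqrt\lambda}$), but no further ideas. The two displayed inequalities then follow from the triangle inequality.
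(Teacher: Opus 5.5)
Your proposal is correct and follows the paper's proof essentially step for step. It uses the same three-term split through $\tilde b=-(\hat{L}_i+\lambda I)^{-1}\zeta_i$, the same representation $\hat b_i=\tfrac{1}{\lambda}J_{\mathbf{x}}^*(J_{\mathbf{x}}J_{\mathbf{x}}^*+\lambda I)^{-1}\hat{\bs{\zeta}}_i-\tfrac{1}{\lambda}\hat{\zeta}_i$ (which avoids needing $\hat{\zeta}_i\in\mathcal{H}_i$), and the same resolvent identity. The only cosmetic difference is the passage from $J_{\mathbf{x}}$ to $J$: you use the operator order $L_i\le\hat{L}_i+\varepsilon_3 I$ together with a $(\hat{L}_i+\lambda I)^{-1/2}$ factorization, whereas the paper expands $(JO_\lambda)^*(JO_\lambda)$ with $O_\lambda=(\hat{L}_i+\lambda I)^{-1}$ and $J^*J=J_{\mathbf{x}}^*J_{\mathbf{x}}+(J^*J-J_{\mathbf{x}}^*J_{\mathbf{x}})$; both yield the identical factor $\sqrt{1+\varepsilon_3/\lambda}$.
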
 Inserting Lemmas \ref{lemmaZetaGeneralizationBound}, \ref{lemmaZetaTrainingBound}, and \ref{lemmaMercerKernelApprox} into Lemma \ref{lemmaDeterministicTrainingBound} yields the main result. We are interested in choosing $\lambda_i$ as $n$ grows, hence we hide constants that do not depend on $n$.

\begin{restatable}{theorem}{theoremDriftConcentration}\label{theoremDriftConcentration}
    Let $i\leq d$ and $\lambda_i(n)\geq \tfrac{c}{\sqrt{n}}$ for some constant $c>0$. Then, \begin{align}\label{eq:thm_driftConc1}
    \begin{aligned}
        \|\hat{b}_i - b_i^*\|_{L^2(p)}   \leq & \;\|b_{i,\lambda_i} - b_i^*\|_{L^2(p)} + \frac{\mathcal{O}(\log n)}{\lambda_i \sqrt{n}} \log\left( \tfrac{2}{\delta}\right) \\
        &  +\frac{\mathcal{O}(1)}{\lambda_i} \log \left(\tfrac{2}{\delta}\right) \sum_{j\in\pa{i}} \|\hat{\mathbf{b}}_j - \mathbf{b}_j^*\|_{E^n},
    \end{aligned}\\\label{eq:thm_driftConc2}
    \begin{aligned}
        \|\hat{\mathbf{b}}_i - \mathbf{b}_i^*\|_{E^n}\;\;  \leq& \;  \|\mathbf{b}_{i,\lambda_i} - \mathbf{b}_i^*\|_{E^n} + \frac{\mathcal{O}(\log n)}{\lambda_i \sqrt{n}}\cdot\log(\tfrac{2}{\delta}) \\&+ \frac{2\kappa}{\lambda_i} \sum_{j\in\pa{i}} \|\hat{\mathbf{b}}_j - \mathbf{b}_j^*\|_{E^n}
    \end{aligned}
\end{align} with probability at least $1-\delta$ for any $\delta\in (0,1)$. Here, $\mathcal{O}(\cdot)$ is as $n\to\infty$.
\end{restatable}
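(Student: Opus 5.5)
The plan is to combine the three concentration lemmas with the deterministic Lemma \ref{lemmaDeterministicTrainingBound} by a union bound, and then simplify using $\lambda_i \geq c/\sqrt{n}$.

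First, I apply Lemmas \ref{lemmaZetaGeneralizationBound}, \ref{lemmaZetaTrainingBound} and \ref{lemmaMercerKernelApprox}, each at confidence level $\delta/3$. With probability at least $1-\delta$ all three hold simultaneously, which gives
\begin{align*}
\varepsilon_1 &= \frac{C_1(\log^{1/2} n + \log^{1/2}\tfrac{6\vee 3c_1}{\delta})}{\sqrt{n}} + \kappa S_i, \\
\varepsilon_2 &= \frac{C_2\log^{1/2}\tfrac{6}{\delta}}{\sqrt{n}} + \kappa S_i, \\
\varepsilon_3 &= \frac{C_3\log^{1/2}\tfrac{6}{\delta}}{\sqrt{n}},
\end{align*}
where $S_i := \sum_{j\in\pa{i}} \|\hat{\mathbf{b}}_j - \mathbf{b}_j^*\|_{E^n}$ and the constants $C_1, C_2, C_3$ depend only on $a_i$, $\kappa$ and $B$. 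Two conversions are needed to reach the form $\log(2/\delta)$ in the statement. First, $\log(6/\delta)$ and $\log(2\vee c_1)/\delta$ are bounded by a constant multiple of $\log(2/\delta)$ for $\delta<1$, because $\log(2/\delta)\geq \log 2>0$. Second, $\log^{1/2}\leq C\log$ on this range. Both conversions only change constants.

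For the training bound \eqref{eq:thm_driftConc2}, I insert $\varepsilon_1,\varepsilon_3$ into the first inequality of Lemma \ref{lemmaDeterministicTrainingBound}. The terms $\varepsilon_3\|b_i^*\|_{L^2(p)}/\lambda_i$ and the noise part of $2\varepsilon_1/\lambda_i$ are both $\mathcal{O}(\log^{1/2} n)\log(2/\delta)/(\lambda_i\sqrt{n})$, which is absorbed into $\mathcal{O}(\log n)$. Here $\|b_i^*\|_{L^2(p)}\le B$ is a constant. The parent part of $2\varepsilon_1/\lambda_i$ is exactly $\frac{2\kappa}{\lambda_i}S_i$, matching the stated coefficient with no $\log(2/\delta)$ factor.

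For the generalization bound \eqref{eq:thm_driftConc1}, I use the second inequality of Lemma \ref{lemmaDeterministicTrainingBound}. This is the step where the assumption $\lambda_i\geq c/\sqrt{n}$ is needed to control the higher-order terms. Since $\varepsilon_3 \lesssim \log^{1/2}(2/\delta)/\sqrt{n}$, we get $\varepsilon_3/\lambda_i \lesssim \log^{1/2}(2/\delta)/c$. Hence
\[
\frac{\varepsilon_3^{3/2}}{\lambda_i^{3/2}} = \frac{\varepsilon_3}{\lambda_i}\sqrt{\frac{\varepsilon_3}{\lambda_i}} \lesssim \frac{\varepsilon_3}{\lambda_i}\log^{1/4}\tfrac{2}{\delta} \lesssim \frac{\log(2/\delta)}{\lambda_i\sqrt{n}}.
\]
Similarly,
\[
\frac{\varepsilon_1\sqrt{\varepsilon_3}}{\lambda_i^{3/2}} = \frac{\varepsilon_1}{\lambda_i}\sqrt{\frac{\varepsilon_3}{\lambda_i}} \lesssim \frac{\varepsilon_1}{\lambda_i}\log^{1/4}\tfrac{2}{\delta}.
\]
Multiplying out, the noise part of $\varepsilon_1$ contributes $\mathcal{O}(\log^{1/2} n)\log(2/\delta)/(\lambda_i\sqrt{n})$. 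The parent part contributes $\mathcal{O}(1)\log(2/\delta)S_i/\lambda_i$, which explains why the parent term in \eqref{eq:thm_driftConc1}, unlike in \eqref{eq:thm_driftConc2}, carries a $\log(2/\delta)$ factor. The remaining terms $(\varepsilon_1+\varepsilon_2)/\lambda_i$ are handled as in the training case.

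The only delicate step is this bookkeeping of the $\delta$-dependence and of $\varepsilon_3/\lambda_i$. Without the lower bound on $\lambda_i$, the ratio $\varepsilon_3/\lambda_i$ would be unbounded and the $3/2$-power terms could not be absorbed. Everything else is direct substitution.
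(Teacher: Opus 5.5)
Your proposal is correct and follows the paper's proof essentially step for step: a union bound over Lemmas \ref{lemmaZetaGeneralizationBound}, \ref{lemmaZetaTrainingBound} and \ref{lemmaMercerKernelApprox} at level $\delta/3$, then insertion into Lemma \ref{lemmaDeterministicTrainingBound}, then conversion of every $\log^{\alpha}(\beta/\delta)$ factor into a constant multiple of $\log(2/\delta)$, with $\lambda_i\ge c/\sqrt{n}$ used only to make $(\varepsilon_3/\lambda_i)^{1/2}$ bounded, i.e.\ $(\lambda_i\sqrt{n})^{-1/2}=\mathcal{O}(1)$, in the $3/2$-power terms of the generalization bound. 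The term $\sqrt{3}\,a_i^2\kappa B/\sqrt{n}$ of Lemma \ref{lemmaZetaTrainingBound} that you left out of $\varepsilon_1$ is harmlessly absorbed into your constant $C_1$, since $\log^{1/2}(6/\delta)\ge\log^{1/2}6>0$.
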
 Theorem \ref{theoremDriftConcentration} bounds the generalization and training error of $\hat{b}_i$ with high probability. We find the same rate $(\lambda_i\sqrt{n})^{-1}$ as \citet{Vito2005} although we do not require $\hat{\zeta}_i\in\mathcal{H}_i$. This rate would diverge if $\lambda_i < 1/\sqrt{n}$, hence the restriction in the theorem statement. % Unique to our analysis are the parent training error contributions due to the drift plug-in along the causal order. 
For a more concise result, we consider the expected generalization error, for which we combine \eqref{eq:thm_driftConc1} and \eqref{eq:thm_driftConc2} and finally integrate out $\delta$.
\begin{restatable}{corollary}{corollayExpectGenError}\label{corollaryExpectGenError}Let $d_{ij}$ be the length of longest path from $i$ to $j$ in $\mathcal{D}$, excluding self-loops and therefore possibly zero. Let $d_i:=\max_j d_{ji}$ and set $r_i := 6\cdot 3^{d_i}$. Use $T_j(\lambda) := \|b_{j,\lambda} - b_j^*\|_{L^2(p)}$ to abbreviate the Tikhonov population approximation error. If \begin{equation}\label{eq:corollaryExpectGenError_claim1}
    \lambda_i(n) \geq n^{-2/r_i} \;\vee_{j\in\anc{i}}\; T_j^{2/3^{d_{ji}}}(\lambda_j(n))
\end{equation} for all $i\leq d$, then the expected generalization error is bounded as follows: \begin{equation}\label{eq:corollaryExpectGenError_claim2}
    \ex{\|\hat{b}_i - b_i^*\|_{L^2(p)}} \leq\frac{\mathcal{O}(\log n)}{n^{1/r_i}} + \mathcal{O}(1)\sum_{j\in\anc{i}\cup \{i\}}T_j^{1/3^{d_{ji}}}(\lambda_j) \vee T_j(\lambda_j).
\end{equation} In particular, if $\mathcal{H}_i$ is dense in $L^2(p_{\Pa{i}})$ for all $i\leq d$, the condition \eqref{eq:corollaryExpectGenError_claim1} allows choices $\lambda_i(n)$ with  $\lim_{n\to\infty}\lambda_i(n) = 0$, for which then \begin{equation*}
    \lim_{n\to\infty} \ex{\|\hat{b}_i - b_i^*\|_{L^2(p)}} = 0.
\end{equation*}
Further, consider $T_j(\lambda) = \mathcal{O}(\sqrt{\lambda}\cdot \log^{\alpha_j}\tfrac{1}{\lambda})$ as $\lambda\to0$ for all $j\leq d$ and constants $\alpha_j\geq 0$. For any $\varepsilon\in(0,\tfrac{1}{2d})$, the choice $\lambda_i := n^{-2(1-\varepsilon_i)/r_i}$ with $\varepsilon_i:=d_i\cdot\varepsilon$ guarantees that\begin{equation*}
    \ex{\|\hat{b}_i - b_i^*\|_{L^2(p)}} =\mathcal{O}(n^{-(1-2\varepsilon_i)/r_i}).
\end{equation*}
\end{restatable}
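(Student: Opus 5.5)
We argue by induction over the topological order of $\mathcal{D}$, using a single high-probability event for all nodes at once. Fix $\delta\in(0,1)$. For every $j\leq d$, apply Theorem \ref{theoremDriftConcentration} at level $\delta/d$ and take a union bound, so that \eqref{eq:thm_driftConc1} and \eqref{eq:thm_driftConc2} hold simultaneously for all nodes with probability at least $1-\delta$.

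\textbf{Training errors.} On this event, unroll \eqref{eq:thm_driftConc2} along the DAG. Write $E_j:=\|\hat{\mathbf{b}}_j-\mathbf{b}_j^*\|_{E^n}$. The recursion reads
\begin{equation*}
E_j\leq \|\mathbf{b}_{j,\lambda_j}-\mathbf{b}_j^*\|_{E^n}+\frac{\mathcal{O}(\log n)\log(2d/\delta)}{\lambda_j\sqrt n}+\frac{2\kappa}{\lambda_j}\sum_{l\in\pa{j}}E_l .
\end{equation*}
The empirical approximation error $\|\mathbf{b}_{j,\lambda_j}-\mathbf{b}_j^*\|_{E^n}$ must be converted to $T_j(\lambda_j)$. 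In expectation $\mathbb{E}\|\cdot\|_{E^n}^2=T_j^2$, and $b_{j,\lambda}$ is bounded (its $\mathcal{H}_j$-norm is at most $\|b_j^*\|_{L^2}/\sqrt{\lambda}$), so a Hoeffding bound gives $T_j+\mathcal{O}(\lambda_j^{-1/2}n^{-1/2}\log^{1/2}(1/\delta))$.

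The factors $1/\lambda_j$ compound along paths, so the recursion must be organized around the assumed lower bound \eqref{eq:corollaryExpectGenError_claim1}. Write $\lambda_j\geq T_l^{2/3^{d_{lj}}}$. The intended mechanism is that a parent error of size $\epsilon$ enters node $j$ as $\epsilon/\lambda_j$. If $\epsilon\lesssim T_l^{1/3^{d_{lj}-1}}$ and $\lambda_j\geq T_l^{2/3^{d_{lj}}}$, then $\epsilon/\lambda_j\lesssim T_l^{1/3^{d_{lj}}}$. The exponents $3^{-d}$ are exactly what make this closed under one step. This also suggests working with the geometric mean, i.e.\ bounding $\epsilon/\lambda_j\leq \sqrt{\epsilon}$ whenever $\lambda_j\geq \sqrt{\epsilon}$. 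That is why the exponents drop by a factor (taking cube roots to also absorb the extra $\lambda^{3/2}$ terms from Lemma \ref{lemmaDeterministicTrainingBound}).

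In the same way, the statistical term $1/(\lambda_j\sqrt n)$ with $\lambda_j\geq n^{-2/r_j}$ propagates. Each level multiplies the denominator exponent by $3$, which explains $r_i=6\cdot 3^{d_i}$, with base case $r=6$ for source nodes. I would state and prove by induction on $d_j$ the claim
\begin{equation*}
E_j\leq \mathcal{O}(\log n)\log(2d/\delta)\, n^{-1/r_j}+\mathcal{O}(1)\log(2d/\delta)\sum_{l\in\anc{j}\cup\{j\}}T_l^{1/3^{d_{lj}}}\vee T_l ,
\end{equation*}
possibly with higher powers of $\log(1/\delta)$ accumulating.

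\textbf{Generalization error.} Insert the claim into \eqref{eq:thm_driftConc1} for node $i$. The same exponent arithmetic gives the high-probability version of \eqref{eq:corollaryExpectGenError_claim2}. To pass to expectation, integrate the tail: the polylogarithmic dependence on $1/\delta$ is integrable, since $\int_0^1\log^m(2d/\delta)\,d\delta<\infty$. This step needs a crude almost-sure bound on $\|\hat b_i-b_i^*\|_{L^2(p)}$, polynomial in $1/\lambda$; it follows from Definition \ref{defTikhEst} and the boundedness of $\kappa$ and $B$.

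\textbf{Consistency.} If $\mathcal{H}_j$ is dense in $L^2(p_{\Pa{j}})$, then $T_j(\lambda)\to0$ as $\lambda\to0$ by standard spectral calculus. Choose the $\lambda_j(n)$ recursively in topological order so that they decrease to $0$ slowly enough to satisfy \eqref{eq:corollaryExpectGenError_claim1}. The right side of \eqref{eq:corollaryExpectGenError_claim1} tends to $0$, so such a choice exists, e.g.\ $\lambda_i(n)$ equal to twice the maximum of that right side and something vanishing. The bound in \eqref{eq:corollaryExpectGenError_claim2} then tends to $0$.

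\textbf{Rate.} Under $T_j(\lambda)=\mathcal{O}(\sqrt\lambda\log^{\alpha_j}(1/\lambda))$, plug in $\lambda_j=n^{-2(1-\varepsilon_j)/r_j}$. Then $T_j^{2/3^{d_{ji}}}\approx n^{-2(1-\varepsilon_j)/(r_j3^{d_{ji}})}$ up to logarithms. Since $r_j3^{d_{ji}}\leq r_i$ and $\varepsilon_j<\varepsilon_i$ (because $d_j<d_i$), this is below $\lambda_i$ for large $n$, the logarithms being absorbed by the strict $\varepsilon$ gap. So \eqref{eq:corollaryExpectGenError_claim1} holds, and the bound evaluates to $n^{-(1-2\varepsilon_i)/r_i}$. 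The condition $\varepsilon<1/(2d)$ keeps the exponents positive.

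\textbf{Main obstacle.} The hardest step is the exponent bookkeeping in the induction: verifying that the $\lambda^{-3/2}$ cross terms and the multiple-path sums close up with the factor $3$ per generation. Everything else is routine.
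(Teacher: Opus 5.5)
Your proposal is correct in outline, and its skeleton matches the paper's proof. The shared steps are:
\begin{itemize}
\item a union bound at level $\delta/d$ over all nodes;
\item an induction on $d_i$, in which a parent contribution of order $T_m^{1/3^{d_{mj}}}$ divided by $\lambda_i\ge T_m^{2/3^{d_{mi}}}$ becomes $T_m^{1/3^{d_{mi}}}$. This uses $3^{-d_{mj}}-2\cdot 3^{-d_{mi}}\ge 3^{-d_{mi}}$ for $d_{mi}\ge d_{mj}+1$, with the case $T_m>1$ covered by the $\vee\, T_m$. The same induction uses $n^{-1/r_j}/\lambda_i\le n^{-1/r_i}$, which holds because $r_j\le r_i/3$;
\item tail integration, followed by the same consistency and rate arguments.
\end{itemize}
The base $3$ is forced by the exponent $2$ in \eqref{eq:corollaryExpectGenError_claim1}, since $1-\tfrac{2}{3}=\tfrac{1}{3}$. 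It is not driven by the $\lambda^{-3/2}$ terms of Lemma \ref{lemmaDeterministicTrainingBound}. Those terms were already absorbed in Theorem \ref{theoremDriftConcentration} through $\lambda_i\ge c/\sqrt{n}$, so there are no such cross terms left to track.

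The one genuine difference is how you convert $\|\mathbf{b}_{j,\lambda_j}-\mathbf{b}_j^*\|_{E^n}$ into $T_j(\lambda_j)$.

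\emph{The paper's route.} It never concentrates this quantity. It writes it as $T_j\sqrt{Q_j}$ with $Q_j:=\|\mathbf{b}_{j,\lambda_j}-\mathbf{b}_j^*\|_{E^n}^2/T_j^2$ and splits $\log(\tfrac{2}{\delta})\sqrt{Q_j}\le\log^2(\tfrac{2}{\delta})+Q_j$. It then integrates with Lemma \ref{lemmaExpectHighProb}, which allows a random term $B$, so only $\mathbb{E}\,Q_j=1$ is needed. This avoids any sup-norm bound on $b_{j,\lambda_j}$ and any extra $n$-dependent terms. The price is that the conclusion holds only in expectation.

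\emph{Your route.} It yields a high-probability version of \eqref{eq:corollaryExpectGenError_claim2} with a deterministic right-hand side. The cost is a concentration step, and that step needs more care than you indicate.
\begin{itemize}
\item Plain Hoeffding on the squares $(b_{j,\lambda_j}-b_j^*)^2(\mathbf{x}_k)$, which are bounded only by $\mathcal{O}(1/\lambda_j)$, gives $T_j+\mathcal{O}(\lambda_j^{-1/2}n^{-1/4})$.
\item At a source node with $\lambda_j=n^{-1/3}$ this extra term is $n^{-1/12}$. After division by a child's admissible $\lambda_i=n^{-1/9}$ it grows like $n^{1/36}$, so the induction would not close.
\item The bound you state is nevertheless true, but it needs a variance-sensitive inequality. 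The second moment of $f(\mathbf{x})^2$ is at most $\|f\|_\infty^2\|f\|_{L^2(p)}^2$, so Bernstein gives $\|\mathbf{f}\|_{E^n}\le\|f\|_{L^2(p)}+\|f\|_\infty\sqrt{\log(1/\delta)/(2n)}$.
\item With that bound, the extra term $\lambda_j^{-1/2}n^{-1/2}\le n^{-1/r_j}$ is absorbed and your induction goes through.
\end{itemize}

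Two smaller remarks. First, you do not need an almost-sure bound on $\hat{b}_i$ to pass to expectation. Your bound holds for every $\delta\in(0,1)$, so the tails integrate directly as in Lemma \ref{lemmaExpectHighProb}. Second, for source nodes ($\varepsilon_i=0$) there is no $\varepsilon$-gap to absorb the $\log n$ factors. There the stated rate holds only up to a logarithm, a looseness your argument shares with the paper's.
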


This Corollary guarantees consistency of our method if $\mathcal{H}_i$ is dense in $L^2(p_{\Pa{i}})$, i.e., if $k_i$ is $L^2$ universal. An important example is the Gaussian kernel \citep[4.36]{Steinwart2008SVM}, which we implement in the software package. The condition \eqref{eq:corollaryExpectGenError_claim1} requires that the regularization strength $\lambda_i(n)$ may only decrease as fast as the the ancestor drifts $b_j^*$ can be approximated in the space $\mathcal{H}_j$ with regularization strength $\lambda_j(n)$. In practice, we recommend to choose $\lambda_i$ via cross-validation (Section \ref{subsec:cv}). The final part of Corollary \ref{corollaryExpectGenError} is inspired by $T_j(\lambda)\leq \mathcal{O}(\sqrt{\lambda})$ for $b_j^*\in\mathcal{H}_j$. We allow for additional log factors since drifts of practical interest, e.g.,  $\mathbf{b}(\mathbf{x}) = -\mathbf{x}$, may possess RKHS smoothness locally, but $b_i^*\notin\mathcal{H}_i$ due to tail growth mismatch. If $p$ is sub-Gaussian, the tail can sometimes be controlled by such log-factors, e.g., $b_1^*(x) = ax + f(x)$ satisfies $T_j(\lambda) = \mathcal{O}( \sqrt{\lambda} \log^2 \tfrac{1}{\lambda})$ for any $f$ in the Gaussian RKHS and $a\in\mathbb{R}$ \citep{Viktor2026}. The growth $T_j(\lambda)\approx \sqrt{\lambda}$ influenced our choice of $r_i$ and the exponent $3^{d_{ji}}$, since $\sqrt{\lambda} + (n^{\alpha}\lambda)^{-1}$ is balanced by $\lambda(n)= n^{-2\alpha/3}$ for $\alpha>0$. A different assumption on $T_j(\lambda)$ would imply a different balancing choice $r_i$, however note our discussion on source conditions below and note that the consistency result is unaffected.

In the remaining section, we discuss how to improve the doubly-exponential rate $r_i$ and other open problems. First, we are not aware of any lower bounds for learning structured drifts from the stationary distribution. For the related problem of learning the score function $\nabla\log p=\nabla p / p$, see Section \ref{subsec:accelerating_difusion_theory}, one can achieve minimax optimal $L^2(p)$ generalization error by estimating $\hat{p}$ and essentially returning $\nabla\hat{p}/\hat{p}$, see \citet{Wibisono2024}. The closest parallel to this viewpoint is equation \eqref{eq:complete_dag_identity} in Theorem \ref{theoremDriftExistenceCompleteDAG}, expressing $b_i^*$ in terms of conditionals $p_{i|1,\ldots, i-1}$, their derivatives and partial integrals. This suggests learning $b_i^*$ may be harder than learning the score, however equation \eqref{eq:complete_dag_identity} could potentially simplify. 

A well-known method to guarantee faster rates are \emph{source conditions} of the form $b_i^*= (J_i J_i^*)^r f_i$ for some $r >0$ and $f_i\in L^2(p)$ \citep{Dicker2017, Zhou2020ScoreEstimators}. The assumption $b_i^*\in\mathcal{H}_i$ corresponds to $r=\tfrac{1}{2}$. Regularization methods with sufficient qualification can guarantee $\|b_{i,\lambda} - b_i^*\|_{L^2(p)}=\mathcal{O}(\lambda^r)$, however a source condition for $r >\tfrac{1}{2}$ depends on $p$ through the integral operator $J_i^*$. %Bauer2007 after equation 9
Since $\mathbf{b}^*$ is completely determined by $p$, the source condition becomes a condition on $p$, which we leave to future work. 

The term driving the rate $r_i$ is $\tfrac{1}{\lambda_i}\sum_{j\in\pa{i}}\|\hat{\mathbf{b}}_j - \mathbf{b}_j^*\|_{E^n}$. Could $\hat{b}_i$ be de-coupled from $(b_j^*)_{j\in\pa{i}}$, similar to nonparametric score learning? Unfortunately not, since the stationary density $p$ and the parent set $\pa{i}$ alone do not determine $b_i^*$, see Appendix \ref{subsec:app_theory_discuss}. However, we believe the factor $1/\lambda_i$ is too conservative in general. It arises from the mismatch that replacing $b_i^*$ by some other $b_i\in L^2(p_{\Pa{i}})$ in $z_i$ structurally affects $\ex[\mathbf{x}\sim p]{z_i(\mathbf{x},\cdot)}$, e.g., it may not lie in the image of the integral operator $J_i^*$. As the proof of Theorem \ref{theoremDriftConcentration} illustrated, the integral operator $J_i^*$ controls the inverse power of $\lambda$. In Appendix \ref{subsec:app_theory_discuss}, we show how to potentially link $\ex[\mathbf{x}\sim p]{z_i(\mathbf{x},\cdot)}$ with $J_i^*$ for certain kernels, although tighter theoretical guarantees may require replacing $\hat{\zeta}_i$ by an analytically better behaved estimator. In particular, a sufficiently strong bound on $\|\hat{\zeta}_i - \zeta_i\|_{\mathcal{H}_i}$ would allow us to follow the argument by \citet{Zhou2020ScoreEstimators}, leading to $1/\sqrt{n\lambda_i}$. % Note that Zhou obtains 1/lambda sqrt(n) in the H-norm, not L^2; note the final line in the proof of their Theorem B.1.
That being said, the current choice $\hat{\zeta}_i$ being a sample average has its own advantages, e.g., see Section \ref{subsec:dependent_samples}. 

% \begin{restatable}[Partial knowledge of parent drifts sometimes suffices]{remark}{remarkPartialParentKnowledge}
% Consider a three-variate distribution $(X_1, X_2, X_3)$ with drift graph $1\to 2 \to 3$ including self-loops. A closer look Lemma \ref{lemmaParentMarg} shows $\zeta_3=L_3 b_3$ only depends on $f(x_2, x_3):=\coex{b_2(X_1, x_2)}{X_2=x_2, X_3=x_3}$. It is unclear whether one can learn $f$ without learning $b_2$ first, especially since $X_1 \not\perp X_3\, |\, X_2$ in general \citep{Boege2025LyapunovCI}.  
% \end{restatable}

\subsection{Hyperparameter selection}\label{subsec:cv}
Let the estimate $\hat{b}_{i,\lambda}$ depend on a hyperparameter $\lambda$, e.g., the regularization strength in Tikhonov regularization. We derive a cross-validation scheme to adaptively select $\lambda$, assuming all estimates $\hat{b}_1,\ldots,\hat{b}_{i-1}$ earlier in the causal order have already been tuned.

Our procedure estimates the risk up to an additive constant, similar to cross-validation in nonparametric density estimation. % e.g. see Wassermann's Nonparametric book ch. 6.1
If $\hat{b}_{i,\lambda}\in\mathcal{H}_i$, which for example can be guaranteed for the Gaussian kernel (Section \ref{subsec:estimator_derivation}), we use that
\begin{equation}\label{eq:cv_err_dec}
    \|\hat{b}_{i,\lambda} - b_i^*\|_{L^2(p)}^2 \propto \int_{\mathbb{R}^d} \hat{b}_{i,\lambda}^2 \dd{p} -2 \langle \hat{b}_{i,\lambda}, J^* b_i^*\rangle_{\mathcal{H}_i}
\end{equation} up to the additive constant $ \int_{\mathbb{R}^d} (b_i^*)^2\dd{p}$, where we applied the adjoint relation $\langle J\hat{b}_{i,\lambda}, b_i^*\rangle_{L^2(p)} = \langle \hat{b}_{i,\lambda}, J^*b_i^*\rangle_{\mathcal{H}_i}$ for the inclusion $J\colon\mathcal{H}_i\to L^2(p)$, see Section \ref{subsec:RKHS_theory}.

Let $\hat{b}_{i,\lambda;\mathcal{I}}$ be learned on the training data $(\mathbf{x}_i)_{i\in \mathcal{I}}$. We estimate the right hand side of equation \eqref{eq:cv_err_dec} using test data $(\mathbf{x}_j)_{j\in \mathcal{J}}$, where $\mathcal{I}\cap\mathcal{J}=\emptyset$. Then, $\tfrac{1}{|\mathcal{J}|}\sum_{j\in\mathcal{J}} \hat{b}_{i,\lambda;\mathcal{I}}(\mathbf{x}_j)^2$ is an unbiased estimate of $\int_{\mathbb{R}^d} \hat{b}_{i,\lambda;\mathcal{I}}^2 \dd{p}$. For the second term, recall that $-J^*b_i^*=\zeta_i$ by \eqref{eq:estimation_setup1}. We estimate $\hat{\zeta}_{i;\mathcal{J}}:= \tfrac{1}{|\mathcal{J}|}\sum_{j\in\mathcal{J}}\hat{z}_i(\mathbf{x}_j,\cdot)$. To summarize, we estimate \eqref{eq:cv_err_dec} by\begin{equation*}
    2\langle\hat{b}_{i,\lambda;\mathcal{I}},\hat{\zeta}_{i;\mathcal{J}}\rangle_{\mathcal{H}_i} + \frac{1}{|\mathcal{J}|}\sum_{j\in\mathcal{J}} \hat{b}_{i,\lambda;\mathcal{I}}(\mathbf{x}_j)^2.
\end{equation*}For spectral regularization, $\hat{b}_{i,\lambda:\mathcal{I}}$ is a linear combination of kernel basis functions and we compute $\langle\hat{b}_{i,\lambda;\mathcal{I}},\hat{\zeta}_{i;\mathcal{J}}\rangle_{\mathcal{H}_i}$ using the kernel trick. Otherwise, we project $\hat{\zeta}_{i;\mathcal{J}}$ onto the eigenspaces of $\hat{L}_{i;\mathcal{J}}$ corresponding to the largest eigenvalues $\{\gamma : \gamma\geq\gamma_\mathrm{min}\}$ for some threshold $\gamma_\mathrm{min}>0$, where $\hat{L}_{i;\mathcal{J}} f := \tfrac{1}{|\mathcal{J}|}\sum_{j\in\mathcal{J}}k_i(\mathbf{x}_j,\cdot) f(\mathbf{x}_j)$. This is formally justified if $\zeta_i$ lies in the image of $L_i$, e.g., when $b_i^*\in\mathcal{H}_i$. We can then compute the inner product by the kernel trick, see Appendix \ref{subsec:appendix_cv}.

We use standard $k$-fold crossvalidation, meaning $\{1,\ldots,n\}$ is split into $k$ bins $B_1,\ldots, B_k$ of roughly equal size. Set $\mathcal{J}_l:= B_l$ and $\mathcal{I}_l:= \{1,\ldots,n\}\setminus \mathcal{J}_l$ for $l\leq k$. Apply the method described above on $(\mathcal{I}_i,\mathcal{J}_i)$ and average the resulting $k$ estimates for a final risk estimate. Repeat over a $\lambda$-grid and choose $\lambda$ with lowest risk.

Some practical considerations are as follows. We mostly use kernels for which the eigenvalues of $L_i$ decay very rapidly %\citep{Belkin18}
and set $\gamma_\mathrm{min}=0.01$ by default. How computationally expensive the cross-validation is depends on the regularizer. For example, the Landweber regularizer and its accelerated versions actually obtain $b_{i,\lambda}$ for all $\lambda >\lambda_0$ when computing $b_{i,\lambda_0}$. Due to double descent phenomena, we recommend testing very small $\lambda_0$. Finally, keep in mind that estimation errors compound along the causal order, which affects $\hat{\zeta}_{i;\mathcal{J}}$ and therefore also the accuracy of cross-validation.

\section{Simulations and extensions}\label{sec:sim_and_ext}
\subsection{Synthetic population dynamics in a random environment}\label{subsec:two_dim_population_rand_env}
\begin{figure}
    \centering
    \includegraphics[width=0.45\linewidth]{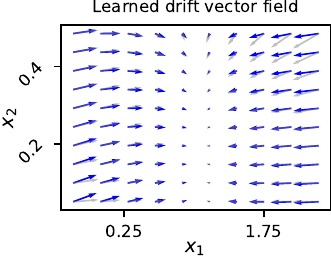}\hfill
    \raisebox{0.15cm}{
    \includegraphics[width=0.45\linewidth]{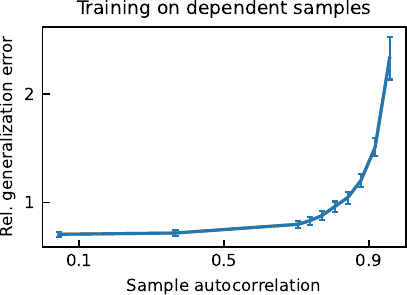}
    }
    \caption{Learning the drift of SDE \eqref{eq:sim_pop_model} from $1000$ equilibrium samples. \textbf{Left:} good agreement between learned (blue) and ground truth drift (grey) up to low data density corners; Section \ref{subsec:two_dim_population_rand_env}. \textbf{Right:} Sampling from a single diffusion path with varying observation gap $\Delta t$; Section \ref{subsec:dependent_samples}. \\ Alt text: Left panel: Arrow plot of learned versus ground truth drift. Right panel: line plot of empirical generalization error versus sample autocorrelation. The error remains roughly stable below a correlation of 0.75 and increases strongly afterwards.}
    \label{fig:population_model_drift}
\end{figure}
Let the population size $x_{2}(t)$ of some species be influenced by an observable exogenous process $x_{1}(t)$, e.g., precipitation. Consider logistic growth with capacity decaying to zero when the exogenous factor $x_{1,t}$ is large: 
\begin{equation}\label{eq:sim_pop_model}
    \dd{\mathbf{x}}(t) = \begin{pmatrix}
        0.5\cdot(1 - x_{1}(t)) \\
        2 x_{2}(t)\left(\frac{1}{1 + \exp(x_{1}(t))} - x_{2}(t)\right)
    \end{pmatrix} \, \dd{t} + \begin{pmatrix}
        1 & 0 \\
        0 & \tfrac{1}{2} x_{2}(t)
    \end{pmatrix} \dd{\mathbf{w}}(t).
\end{equation} The numerical constants were chosen to guarantee stable persistence of the population \citep{Li2011LogisticPopulationErgodicity}. From this stationary distribution, we non-parametrically recover the drift vector field. In particular, we discover that there is a logistic mechanism and that the carrying capacity is $1/(1 + \exp(x_{1}))$.

The synthetic training data are $n=1000$ vectors in $\mathbb{R}^2$, which approximate an independent sample from the stationary distribution of \eqref{eq:sim_pop_model}.  % by generously thinning an Euler-Maruyama simulated path.
Practically, this could mean $n$ isolated populations of the species, and we only require \emph{one} measurement of the population number and the exogenous factor per location. See Section \ref{subsec:dependent_samples} for multiple samples.

We apply our Tikhonov regularized estimator, requiring as input the diffusivity function $(\begin{smallmatrix}
    1 & 0 \\ 0 & .5 x_2
\end{smallmatrix})$ and the structure graph \rotatebox[origin=c]{270}{$\circlearrowright$}\hspace{0.1cm}$1 \to 2$ \rotatebox[origin=c]{90}{$\circlearrowleft$}; see Example \ref{runExample}. We use the Gaussian kernel for both drift components. The regularization parameters $\lambda_1, \lambda_2$ are selected via cross-validation; see Section \ref{subsec:cv}.

The left panel of Figure \ref{fig:population_model_drift} compares the estimated and ground truth drift vector fields on a rectangle encompassing the bulk of the training data. Both fields agree very well. Discrepancies increase towards the top right corner, a low-density region of the stationary distribution, which contribute negligibly to the $L^2(p)$ generalization error, our metric of choice inspired by the score learning literature \citep{Wibisono2024}.

Extending this regression model to multiple covariate processes requires information on the interplay between the covariates, see Appendix \ref{subsec:app_theory_discuss}. A simple model is to assume independent covariate processes \citep[eq. 33]{Gregorio2025DiffusionProcessRegression}. More generally, if a causal hierarchy between the processes can be specified, our estimator can be applied directly. If the covariate processes are coupled, e.g., \citet{Varughese2008} study the effect of a bi-variate coupled exogenous diffusion on a birth-death process, the covariate dynamics may not be identified from the equilibrium alone. If the covariate drifts can be learned from a different data source, our method can still be applied to the outcome drift component, i.e., to learn how the covariate processes affect the outcome process from equilibrium observations only.

\subsection{Synthetic data}
\begin{figure}
    \centering
    \begin{tikzpicture}
        \node[anchor=south west, inner sep=0] (img)
        {\includegraphics{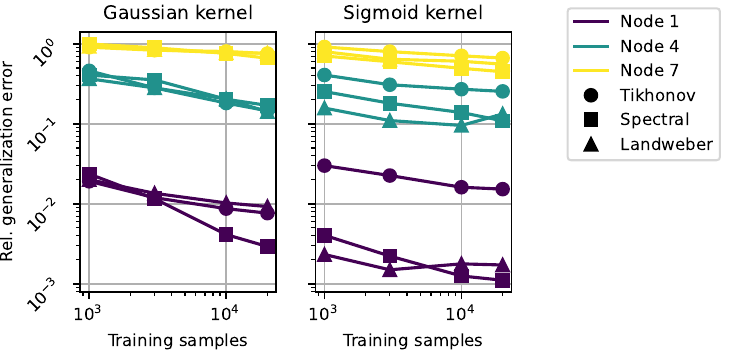}};
    
        % Define anchor
        \coordinate (dagAnchor) at (9.7,-0.5);
        
        \begin{scope}[shift={(dagAnchor)},>=stealth,
        line width=1pt,
        vertex/.style={circle, draw, minimum size=7mm, inner sep=0pt}]
        % Nodes
        \node[vertex] (0) at (0.5,3) {\small 1};
        \node[vertex] (4) at (1.9,3) {\small 5};
        
        \node[vertex] (1) at (0,2) {\small 2};
        \node[vertex] (3) at (1.2,2) {\small 4};
        \node[vertex] (5) at (2.1,2) {\small 6};
        
        \node[vertex] (2) at (0.9,1) {\small 3};
        \node[vertex] (6) at (2.1,1) {\small 7};
        
        % Self-loops
        \draw[->] (0) edge[loop left] ();
        \draw[->] (1) edge[loop below] ();
        \draw[->] (2) edge[loop left] ();
        \draw[->] (3) edge[loop above] ();
        \draw[->] (4) edge[loop right] ();
        \draw[->] (5) edge[loop right] ();
        \draw[->] (6) edge[loop right] ();
        
        \draw[->] (0) -- (1);
        \draw[->] (0) -- (3);
        \draw[->] (1) -- (2);
        \draw[->] (1) -- (3);
        \draw[->] (2) -- (6);
        \draw[->] (3) -- (6);
        \draw[->] (5) -- (6);
        \draw[->] (4) -- (5);

        \end{scope}
    \end{tikzpicture}
    \hspace{0.3cm}
    \caption{Relative generalization error $\|\hat{b}_i - b_i^*\|_{L^2(p)}^2 / \|b_i^*\|_{L^2(p)}^2$ versus training sample size for selected $i=1,4,7$. Bottom right: structure graph of the $7$-variate distribution.\\ Alt text: Two plots compare the generalization error of the Gaussian kernel and the Sigmoid kernel, with multiple lines representing various drift components and regularization methods.}
    \label{fig:sim_results}
\end{figure}
Consider a seven-dimensional drift structured by the graph in Figure \ref{fig:sim_results}, defined via \begin{equation*}
    b_i(\mathbf{x}) := \sum_{j\in \Pa{i}} w_j \tanh(-\mathbf{q}_j^T \mathbf{x}_{\Pa{i}} ),
\end{equation*} where $(\mathbf{q}_j)_{j\in \Pa{i}}$ is a randomly generated orthogonal basis of $\mathbb{R}^{|\Pa{i}|}$ such that $q_{j,i} > 0$, and where $w_j$ are random positive weights. This construction ensures the drift is reverting towards the origin. We take the diffusivity $\mathbf{D}$ to be the constant identity and generate between $n=10^3$ and $n=2\cdot 10^4$ samples from the stationary distribution by MCMC. 

We train with Tikhonov, Spectral, and accelerated Landweber regularization using the Gaussian and the Sigmoid kernel. Regularization strengths are tuned by cross-validation, see Section \ref{subsec:cv}. For each training set size, we record the relative squared generalization errors $\|\hat{b}_i - b_i^*\|_{L^2(p)}^2 / \|b_i^*\|_{L^2(p)}^2$ for all nodes $i=1,\ldots,7$.  To reduce the $\mathcal{O}(n^3)$ computational complexity scaling of Tikhonov and Spectral regularization, we apply Nyström's method, see \citet{Viktor2026}, and reduce $\mathbf{K}$ to  $\mathcal{O}(\sqrt{n})$ rows and columns, covering all cases in \citet{Sutherland2018}. This usually works well, however these two methods produced a handful of  outliers. % Even when scaling m = 0.05*n
In practice, we recommend to keep as many rows and columns as possible, or apply the accelerated Landweber method with naturally only roughly $\mathcal{O}(n^2)$ scaling, which had no performance outliers.

The median relative generalization error over $N=100$ experiments is shown in Figure \ref{fig:sim_results}. The first takeaway is that our method works as intended on a problem of this scale. Predictably, the sigmoid kernel performs better than the Gaussian kernel as it matches the ground truth drift structure. Spectral and Landweber regularization significantly outperform Tikhonov regularization for the sigmoid kernel, consistent with theory that higher qualification methods adapt to drifts which are low complexity in the RKHS. Spectral regularization outperforms for the Gaussian kernel on the root node, possibly because its cross-validation procedure requires one approximation step less. Finally, note that the accelerated Landweber regularization has a rougher error curve due to its discrete number of steps; for example the median step number transitions from two to three after $10^4$ samples for the sigmoid kernel on node four. 

We did not include the method by \citet{Lorch2024StationaryDiffusions} in this experiment since their implementation assumes known linear self-regulation and does not tune hyper-parameters. Instead, we apply it to a linear drift in Appendix \ref{app:kds} and demonstrate its ability to approximate DAG-structured drifts under oracle stopping.

\subsection{Dependent samples}\label{subsec:dependent_samples}
Our main motivation was to require only a single equilibrium sample per individual, however our estimator can also be meaningful when $(\mathbf{x}_j)_{j=1,\ldots,n}$ are collected from one individual at times $t_j:= j\cdot \Delta t$ for some $\Delta t >0$. Namely, if $\Delta t>0$ is independent of $n$, also termed the \emph{low-frequency} regime \citep{Gobet2004LowFrequencySeminal}, and if the evolution is ergodic, the average $\tfrac{1}{n}\sum_{j=1}^n z_i(\mathbf{x}_j, \mathbf{y})$ converges to $\zeta_i(\mathbf{y})$ in probability. This suggests our estimator may also be consistent for ergodic processes under low frequency sampling. The finite sample performance depends on how close to equilibrium the individual was at time $t=0$.

We repeat the simulation from Section \ref{subsec:two_dim_population_rand_env} with low frequency samples, meaning we simulate the SDE \eqref{eq:sim_pop_model} by the Euler-Maruyama method, and take $n=1000$ samples with time distance $\Delta t$. A generous burnin period ensures $\mathbf{x}_1$ approximately follows the stationary distribution. We consider $10$ different choices for $\Delta t$, such that the Lag $1$ autocorrelation of $(\mathbf{x}_j)_{j=1,\ldots,n}$ varies between $0.04$ and $0.96$, and fit the Tikhonov regularized estimator with cross-validation. We estimate the mean squared generalization error relative to $\|\mathbf{b}\|_{L^2(p)}^2$ on a fresh sample and repeat this experiment $N=100$ times. 

The right part of Figure \ref{fig:population_model_drift} shows the average relative generalization error versus the average autocorrelation for each $\Delta t$. As expected, high autocorrelation ($\Delta t \to 0$) decreases performance. Still, performance is reasonable for low and medium autocorrelation. Although our estimator does not make full use of the available information, we include this example here since multivariate nonparametric implementations for low frequency sampled data are rare.

\subsection{Irreversible diffusion models}\label{subsec:accelerating_difusion_theory}
Given observations $\mathbf{x}_1,\ldots,\mathbf{x}_n$ from an unknown density $p$, generative modeling aims to produce a fresh sample $\tilde{\mathbf{x}}_1,\ldots\tilde{\mathbf{x}}_m$ from $p$ as accurately as possible \citep{Song2019}. Diffusion models learn the score function $\mathbf{s}(\mathbf{x}):= \nabla\log p (\mathbf{x})$ and sample via the Langevin SDE \begin{equation}\label{eq:score_sde}
    \dd{\tilde{\mathbf{x}}}(t) = \mathbf{s}(\tilde{\mathbf{x}}(t))\dd{t} + \sqrt{2} \dd{\mathbf{w}(t)},
\end{equation} which has $p$ as its stationary distribution as desired. If a causally structured drift inducing $p$ exists, it could augment the score drift: assume the function $\mathbf{b}$ also induces $p$ as a stationary density when used as a drift in \eqref{eq:score_sde}. Since both then solve the Fokker-Planck equation, we have $\nabla\cdot(\mathbf{b}p) = \Delta p = \nabla\cdot(\mathbf{s}p)$, therefore $\nabla\cdot((\mathbf{b - s})\,p) = 0$. Applying the results by \citet{Hwang2005AcceleratingDiffusions} to the drift functions $\mathbf{s} + \omega \cdot(\mathbf{b} - \mathbf{s})$ for $\omega\in\mathbb{R}$, we find that any $\omega\neq 0$ usually leads to strictly faster convergence against $p$ compared to the score function $\mathbf{s}$ alone, i.e., $\omega=0$. We illustrate this experimentally for a Gaussian distribution $p$ in Figure \ref{fig:Acc_diffusion_theory}; for simulation details see Appendix \ref{subsec:diffusion_sim_details}.

A causally structured drift $\mathbf{b}$ is guaranteed to exist for any given density $p$ when the DAG is complete as we show below. This generalizes a result for Gaussian distributions \citep{Dettling2022}. Note that, even for the complete DAG, $\mathbf{b}\neq \mathbf{s}$; for example by Schwarz's theorem $\partial_i s_j = \partial_j s_i$, which does not hold for $\mathbf{b}$.

\begin{restatable}{theorem}{theoremDriftExistenceCompleteDAG}\label{theoremDriftExistenceCompleteDAG}
    Let $0 <p\in C^2(\mathbb{R}^d)$ be a probability density and $\sigma$ be constant. For any $j\leq d$, define $1:j \, := 1,\ldots, j$, let $f_{j+1}$ and $F_{j+1}$ denote the density and cumulative distribution function of $x_{j+1}$ given $\mathbf{x}_{1:j}$, respectively, and assume $F_{j+1}\in C^3(\mathbb{R}^j)$. % Need \Delta F_{j+1} to have continuous partial_{j+1}. This follows from p in C2 under sufficient regularity.
    
    Define $\mathbf{b}\colon\mathbb{R}^d\to\mathbb{R}^d$ component-wise as $b_1(\mathbf{x}):= \tfrac{\sigma^2}{2}\tfrac{p_1'(x_1)}{p_1(x_1)}$ and $b_{j+1}$ recursively as
    \begin{equation}\label{eq:complete_dag_identity}
        b_{j+1}:=\frac{\sigma^2}{2} \frac{\partial_{j+1} f_{j+1} + \Delta_{1:j} F_{j+1} + 2 (\nabla_{1:j}\log p_{1:j})^T\, \nabla_{1:j} F_{j+1}}{f_{j+1}} + \frac{(\nabla_{1:j}F_{j+1})^T \,\mathbf{b}_{1:j}}{f_{j+1}}.
    \end{equation}
    Then, $\mathbf{b}$ is structured according to the complete DAG with topological order $1,\ldots,d$ (including self loops), and satisfies the Fokker-Planck equation $\tfrac{\sigma^2}{2}\Delta p = \nabla\cdot (\mathbf{b} p)$.
\end{restatable}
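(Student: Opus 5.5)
The plan is to prove, by induction on $j$, the stronger statement that the truncated drift $\mathbf{b}_{1:j}$ solves the Fokker-Planck equation for the marginal density, i.e.
\begin{equation*}
    \tfrac{\sigma^2}{2}\Delta_{1:j}\, p_{1:j} = \nabla_{1:j}\cdot(\mathbf{b}_{1:j}\, p_{1:j}) \qquad\text{on } \mathbb{R}^j .
\end{equation*}
The case $j=d$ is the claim. The DAG structure is immediate from the construction: $b_{j+1}$ is built only from $f_{j+1},F_{j+1}$ (functions of $\mathbf{x}_{1:j+1}$), from $p_{1:j}$, and from $\mathbf{b}_{1:j}$, so it depends only on $\mathbf{x}_{1:j+1}$. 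The base case $j=1$ is the univariate computation of Section \ref{subsec:identif_univ}: with $b_1=\tfrac{\sigma^2}{2}p_1'/p_1$ we get $b_1p_1=\tfrac{\sigma^2}{2}p_1'$, and differentiating gives the equation.

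For the induction step, I would write $p_{1:j+1}=p_{1:j}f_{j+1}$ and use $\int_{-\infty}^{x_{j+1}} f_{j+1}\,\dd{t}=F_{j+1}$. The target equation on $\mathbb{R}^{j+1}$ holds if the flux in direction $j+1$ satisfies
\begin{equation*}
    b_{j+1}p_{1:j+1} = \tfrac{\sigma^2}{2}\partial_{j+1}p_{1:j+1} + \int_{-\infty}^{x_{j+1}}\Big[\tfrac{\sigma^2}{2}\Delta_{1:j}p_{1:j+1} - \nabla_{1:j}\cdot(\mathbf{b}_{1:j}p_{1:j+1})\Big]\dd{t}.
\end{equation*}
Indeed, applying $\partial_{j+1}$ to both sides and rearranging gives exactly $\tfrac{\sigma^2}{2}\Delta_{1:j+1}p_{1:j+1}=\nabla_{1:j+1}\cdot(\mathbf{b}_{1:j+1}p_{1:j+1})$.

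To evaluate the integral, I pull the derivatives in $\mathbf{x}_{1:j}$ outside the $t$-integral (here $\mathbf{b}_{1:j}$ does not depend on $x_{j+1}$). This turns the integral into
\begin{equation*}
    \tfrac{\sigma^2}{2}\Delta_{1:j}(p_{1:j}F_{j+1})-\nabla_{1:j}\cdot(\mathbf{b}_{1:j}p_{1:j}F_{j+1}).
\end{equation*}
Expanding both terms with the product rule, the terms multiplied by $F_{j+1}$ combine to
\begin{equation*}
    F_{j+1}\big[\tfrac{\sigma^2}{2}\Delta_{1:j}p_{1:j}-\nabla_{1:j}\cdot(\mathbf{b}_{1:j}p_{1:j})\big],
\end{equation*}
which vanishes by the induction hypothesis. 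What remains is
\begin{equation*}
    \tfrac{\sigma^2}{2}\big(p_{1:j}\Delta_{1:j}F_{j+1}+2\nabla_{1:j}p_{1:j}^T\nabla_{1:j}F_{j+1}\big)-p_{1:j}\,\mathbf{b}_{1:j}^T\nabla_{1:j}F_{j+1}.
\end{equation*}
Dividing everything by $p_{1:j}f_{j+1}$ and writing $\nabla p_{1:j}/p_{1:j}=\nabla\log p_{1:j}$ reproduces the structure of \eqref{eq:complete_dag_identity}. I would carefully recheck the sign of the $\mathbf{b}_{1:j}^T\nabla F_{j+1}$ term against the stated formula: my computation gives a minus sign, which would match if $F_{j+1}$ were replaced by the upper tail $F_{j+1}-1$ (or if there is a sign slip somewhere). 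Since $\nabla_{1:j}(F_{j+1}-1)=\nabla_{1:j}F_{j+1}$, this is a matter of the choice of integration constant / lower integration limit, and I would fix the convention accordingly.

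The main obstacle is the analytic justification rather than the algebra. First, I must differentiate under the integral sign in $\mathbf{x}_{1:j}$; this needs the assumed $C^3$ regularity of $F_{j+1}$, continuity of $\partial_{j+1}\Delta_{1:j}F_{j+1}$, and locally uniform domination. Second, the boundary contribution at $t\to-\infty$ must vanish: $p_{1:j+1}$ and its $\mathbf{x}_{1:j}$-derivatives must tend to $0$, which holds since $F_{j+1}\to0$ together with its derivatives under the regularity assumptions. Third, I need differentiability of $\mathbf{b}_{1:j}$ for the product rule; this comes inductively from the recursion together with $p\in C^2$ and $p>0$, so that divisions by $f_{j+1}$ and $p_{1:j}$ are legitimate. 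I would handle these by working locally on compacts, where all quantities are continuous and positive.
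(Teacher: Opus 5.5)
Your proof is the paper's proof. Both induct on the marginal equation $\tfrac{\sigma^2}{2}\Delta_{1:j}p_{1:j}=\nabla_{1:j}\cdot(\mathbf{b}_{1:j}p_{1:j})$. Both expand $\tfrac{\sigma^2}{2}\Delta_{1:j}(p_{1:j}F_{j+1})-\nabla_{1:j}\cdot(\mathbf{b}_{1:j}p_{1:j}F_{j+1})$ with the product rules, cancel the terms carrying a factor $F_{j+1}$ by the induction hypothesis, and differentiate in $x_{j+1}$.

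The only difference is direction. The paper verifies instead of deriving. It multiplies the defined $b_{j+1}$ by $p_{1:(j+1)}$ and adds the vanishing term $F_{j+1}\bigl(\tfrac{\sigma^2}{2}\Delta_{1:j}p_{1:j}-\nabla_{1:j}\cdot(p_{1:j}\mathbf{b}_{1:j})\bigr)$. This identifies $p_{1:(j+1)}b_{j+1}=\tfrac{\sigma^2}{2}\Delta_{1:(j+1)}(p_{1:j}F_{j+1})-\nabla_{1:j}\cdot(\mathbf{b}_{1:j}p_{1:j}F_{j+1})$. It then applies $\partial_{j+1}$ using Schwarz's theorem, which is where $F_{j+1}\in C^3$ is used.

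You can do the same. Your flux identity only needs \emph{some} antiderivative in $x_{j+1}$ of the bracket, and the closed form is one. So two issues you list as the main obstacle never arise:
\begin{itemize}
\item interchanging $\nabla_{1:j}$ with $\int_{-\infty}^{x_{j+1}}$;
\item the decay of $x_{1:j}$-derivatives of $F_{j+1}$ as $x_{j+1}\to-\infty$, which would need extra justification beyond $F_{j+1}\in C^3$.
\end{itemize}

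\textbf{The sign.} Your minus sign is correct. It agrees with the paper's own proof, whose expansion of $p_{1:(j+1)}b_{j+1}$ ends in $-(p_{1:j}\mathbf{b}_{1:j})^T\nabla_{1:j}F_{j+1}$. The $+$ in \eqref{eq:complete_dag_identity} is a typo.

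Your proposed reconciliation does not work, and you should drop it. The formula involves $F_{j+1}$ only through $\nabla_{1:j}F_{j+1}$ and $\Delta_{1:j}F_{j+1}$, together with $\partial_{j+1}f_{j+1}$. So replacing $F_{j+1}$ by $F_{j+1}-1$ changes nothing. Equivalently, the two antiderivatives differ by $\tfrac{\sigma^2}{2}\Delta_{1:j}p_{1:j}-\nabla_{1:j}\cdot(\mathbf{b}_{1:j}p_{1:j})$, which is $0$ by the induction hypothesis.

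No convention can rescue the plus sign, because that version is false. Take $d=2$, $\sigma^2=2$, $x_1\sim N(0,1)$ and $x_2\mid x_1\sim N(ax_1,1)$. Then $b_1=-x_1$ and $\partial_1F_2/f_2=-a$. The recursion gives $b_2=-(1+a^2)(x_2-ax_1)+ax_1$ with the minus sign, but $b_2=-(1+a^2)(x_2-ax_1)+3ax_1$ with the plus sign. Only the former satisfies the Lyapunov equation $\mathbf{B}\Sigma+\Sigma\mathbf{B}^T+2\mathbf{I}=\mathbf{0}$ for $\Sigma=\bigl(\begin{smallmatrix}1&a\\ a&1+a^2\end{smallmatrix}\bigr)$: the off-diagonal entry is $0$ versus $2a$.

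So state and prove the theorem with the last term of the recursion replaced by $-(\nabla_{1:j}F_{j+1})^T\mathbf{b}_{1:j}/f_{j+1}$. Your argument then goes through as written.
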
  

\begin{figure}
    \centering
    \includegraphics{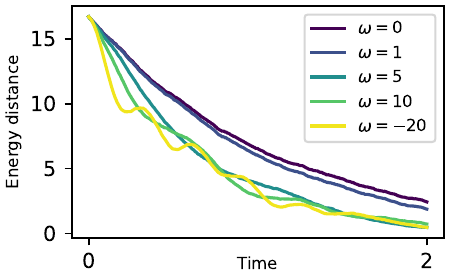}
    \caption{Consider score $\mathbf{s}$ and causal drift $\mathbf{b}$ from Theorem \ref{theoremDriftExistenceCompleteDAG} for some $5$-variate Gaussian density $p$; see Appendix \ref{subsec:diffusion_sim_details}.
    We simulate $\dd{\mathbf{x}}(t) = (\mathbf{s} - \omega\cdot(\mathbf{s} - \mathbf{b}))(\mathbf{x}(t))\dd{t} + \sqrt{2}\dd{\mathbf{w}(t)}$ and plot the energy distance between $\mathbf{x}(t)$ and the target $p$ for various $\omega,t$. As discussed in Section \ref{subsec:accelerating_difusion_theory}, $\omega\neq 0$ speeds up convergence.\\ Alt text: Multiple lines show the decrease in energy distance over time. The larger the absolute value of omega, the faster the decay.}
    \label{fig:Acc_diffusion_theory}
\end{figure}

Note that a different topological ordering than $1,\ldots,d$ could further affect the convergence speed. 
On a high level, perturbing the score by the causal drift makes the diffusion process \emph{irreversible}, which has long been known to speed up convergence \citep{Hwang2005AcceleratingDiffusions, ReyBellet2015IrreversibleLargeDeviation}. For linear drifts, the non-reversible perturbation leading to the fastest convergence has been characterized by \citet{Lelivere2013OptimalLinearIrreversible}. %There also exist perturbations that preserve reversibility but still improve convergence \citep{Abdulle2019AcceleratingDiffusionWithReversible}. 
While the aforementioned literature takes a sampling perspective, i.e., assuming the target density $p$ is (partially) known, generative modeling assumes only observations from $p$ are available.

\section{Discussion and limitations}\label{sec:discussion}
We established nonparametric identifiability under very weak conditions from which future estimators, also parametric ones, can profit. A canonical extension would be to consider cyclic models and models with unknown structure graphs, in parallel with recent advances for linear drifts. A further important research direction concerns the case of unknown diffusivity, which remains open even for linear systems.

Our proposed estimator is computationally feasible, even when the Fokker-Planck forward problem becomes intractable due to the curse of dimensionality. We theoretically guarantee the estimator's consistency in $L^2(p)$, which practically can be achieved by the Gaussian kernel in our package. Although the consistency result assumes bounded drifts for technical convenience, the estimator shows strong empirical performance even when the drift is unbounded. Extending our method to new kernels requires access to its univariate partial integrals. We have so far only used kernels from Table \ref{tab:kernels_with_antideriv} allowing analytical expressions, however numerical approximations to the univariate integral should also work. A notable theoretical limitation of our estimator is the recursive plug-in strategy, which potentially could accumulate estimation errors exponentially along the topological order. Experimentally, the more significant contributor to the estimation error seems to be the number of parent variables, also see \citet{Viktor2026}. Our approach, similar to the referenced score literature, %when using a diagonal kernel, the optimization problems in Zhou et al. de-couple
trades global optimality in a joint loss for computational efficiency and interpretability of the component functions, as these are optimized individually. Practically, our method could potentially be improved by taking biases of cross-validation into account \citep{Sylvain2010BiasCV}. Finally, note that many popular kernels do not produce drift estimates which can be sampled from with constant diffusivity since they decay to zero for large input. One can instead fit kernels weighted with a linear component at the cost of an extra hyperparameter \citep{Sriperumbudur2017KernelExponentialFamily}.

\subsection*{Acknowledgements}
We thank Viktor Vasilev for his contributions to this project through his master’s thesis, conducted under the supervision of R.S. and M.D., as referenced throughout the paper. We acknowledge support from the DAAD programme Konrad Zuse Schools of Excellence in Artificial Intelligence, sponsored by the Federal Ministry of Research, Technology and Space, from the European Research
Council (ERC) under the European Union’s Horizon 2020 research and innovation programme
(grant agreement No 883818), and from the German Federal Ministry of Education and Research and the Bavarian State Ministry for Science and the Arts. The authors take full responsibility for the contents of this article.

\bibliographystyle{imsart-nameyear} % Style BST file (imsart-number.bst or imsart-nameyear.bst)
\bibliography{bibliography}% common bib file

\begin{thebibliography}{44}
% BibTex style file: imsart-nameyear.bst, 2017-11-03
% Default style options (sort=1,type=nameyear).
% Used options (sort=1,type=nameyear).

\bibitem[\protect\citeauthoryear{Amorino and
  Gloter}{2023}]{Amorino2023InvariantDensityEstimation}
\begin{barticle}[author]
\bauthor{\bsnm{Amorino},~\bfnm{Chiara}\binits{C.}} \AND
  \bauthor{\bsnm{Gloter},~\bfnm{Arnaud}\binits{A.}}
(\byear{2023}).
\btitle{Estimation of the invariant density for discretely observed diffusion
  processes}.
\bjournal{Statistics}
\bvolume{57}
\bpages{213--259}.
\end{barticle}
\endbibitem

\bibitem[\protect\citeauthoryear{Améndola et~al.}{2025}]{Amendola2025}
\begin{barticle}[author]
\bauthor{\bsnm{Améndola},~\bfnm{Carlos}\binits{C.}},
  \bauthor{\bsnm{Boege},~\bfnm{Tobias}\binits{T.}},
  \bauthor{\bsnm{Hollering},~\bfnm{Benjamin}\binits{B.}} \AND
  \bauthor{\bsnm{Misra},~\bfnm{Pratik}\binits{P.}}
(\byear{2025}).
\btitle{Structural Identifiability of Graphical Continuous Lyapunov Models}.
\end{barticle}
\endbibitem

\bibitem[\protect\citeauthoryear{Arcones and Gin\'e}{1993}]{Arcones1993}
\begin{barticle}[author]
\bauthor{\bsnm{Arcones},~\bfnm{Miguel~A.}\binits{M.~A.}} \AND
  \bauthor{\bsnm{Gin\'e},~\bfnm{Evarist}\binits{E.}}
(\byear{1993}).
\btitle{Limit theorems for {$U$}-processes}.
\bjournal{Ann. Probab.}
\bvolume{21}
\bpages{1494--1542}.
\end{barticle}
\endbibitem

\bibitem[\protect\citeauthoryear{Arlot and Celisse}{2010}]{Sylvain2010BiasCV}
\begin{barticle}[author]
\bauthor{\bsnm{Arlot},~\bfnm{Sylvain}\binits{S.}} \AND
  \bauthor{\bsnm{Celisse},~\bfnm{Alain}\binits{A.}}
(\byear{2010}).
\btitle{A survey of cross-validation procedures for model selection}.
\bjournal{Stat. Surv.}
\bvolume{4}
\bpages{40--79}.
\end{barticle}
\endbibitem

\bibitem[\protect\citeauthoryear{Bhattacharya and
  Waymire}{2023}]{Bhattacharya2023DiffusionBook}
\begin{bbook}[author]
\bauthor{\bsnm{Bhattacharya},~\bfnm{Rabi}\binits{R.}} \AND
  \bauthor{\bsnm{Waymire},~\bfnm{Edward~C.}\binits{E.~C.}}
(\byear{2023}).
\btitle{Continuous parameter {M}arkov processes and stochastic differential
  equations}.
\bseries{Graduate Texts in Mathematics}
\bvolume{299}.
\bpublisher{Springer}, \baddress{Cham}.
\end{bbook}
\endbibitem

\bibitem[\protect\citeauthoryear{Boege et~al.}{2025}]{Boege2025LyapunovCI}
\begin{barticle}[author]
\bauthor{\bsnm{Boege},~\bfnm{Tobias}\binits{T.}},
  \bauthor{\bsnm{Drton},~\bfnm{Mathias}\binits{M.}},
  \bauthor{\bsnm{Hollering},~\bfnm{Benjamin}\binits{B.}},
  \bauthor{\bsnm{Lumpp},~\bfnm{Sarah}\binits{S.}},
  \bauthor{\bsnm{Misra},~\bfnm{Pratik}\binits{P.}} \AND
  \bauthor{\bsnm{Schkoda},~\bfnm{Daniela}\binits{D.}}
(\byear{2025}).
\btitle{Conditional independence in stationary distributions of diffusions}.
\bjournal{Stochastic Process. Appl.}
\bvolume{184}
\bpages{Paper No. 104604, 16}.
\end{barticle}
\endbibitem

\bibitem[\protect\citeauthoryear{Bongers and Mooij}{2018}]{bongers2018random}
\begin{barticle}[author]
\bauthor{\bsnm{Bongers},~\bfnm{Stephan}\binits{S.}} \AND
  \bauthor{\bsnm{Mooij},~\bfnm{Joris~M}\binits{J.~M.}}
(\byear{2018}).
\btitle{From random differential equations to structural causal models: The
  stochastic case}.
\bjournal{arXiv preprint arXiv:1803.08784}
\bvolume{3}.
\end{barticle}
\endbibitem

\bibitem[\protect\citeauthoryear{Botvinick-Greenhouse
  et~al.}{2023}]{Botvinick2023}
\begin{barticle}[author]
\bauthor{\bsnm{Botvinick-Greenhouse},~\bfnm{Jonah}\binits{J.}},
  \bauthor{\bsnm{Martin},~\bfnm{Robert}\binits{R.}} \AND
  \bauthor{\bsnm{Yang},~\bfnm{Yunan}\binits{Y.}}
(\byear{2023}).
\btitle{Learning dynamics on invariant measures using {PDE}-constrained
  optimization}.
\bjournal{Chaos}
\bvolume{33}
\bpages{Paper No. 063152, 22}.
\bdoi{10.1063/5.0149673}
\end{barticle}
\endbibitem

\bibitem[\protect\citeauthoryear{De~Gregorio
  et~al.}{2025}]{Gregorio2025DiffusionProcessRegression}
\begin{barticle}[author]
\bauthor{\bsnm{De~Gregorio},~\bfnm{Alessandro}\binits{A.}},
  \bauthor{\bsnm{Frisardi},~\bfnm{Dario}\binits{D.}},
  \bauthor{\bsnm{Iacus},~\bfnm{Stefano}\binits{S.}} \AND
  \bauthor{\bsnm{Iafrate},~\bfnm{Francesco}\binits{F.}}
(\byear{2025}).
\btitle{Adaptive elastic-net estimation for sparse diffusion processes}.
\bjournal{Stat. Inference Stoch. Process.}
\bvolume{28}
\bpages{Paper No. 22, 35}.
\end{barticle}
\endbibitem

\bibitem[\protect\citeauthoryear{De~Vito et~al.}{2005}]{Vito2005}
\begin{barticle}[author]
\bauthor{\bsnm{De~Vito},~\bfnm{Ernesto}\binits{E.}},
  \bauthor{\bsnm{Rosasco},~\bfnm{Lorenzo}\binits{L.}},
  \bauthor{\bsnm{Caponnetto},~\bfnm{Andrea}\binits{A.}},
  \bauthor{\bsnm{De~Giovannini},~\bfnm{Umberto}\binits{U.}} \AND
  \bauthor{\bsnm{Odone},~\bfnm{Francesca}\binits{F.}}
(\byear{2005}).
\btitle{Learning from examples as an inverse problem}.
\bjournal{J. Mach. Learn. Res.}
\bvolume{6}
\bpages{883--904}.
\end{barticle}
\endbibitem

\bibitem[\protect\citeauthoryear{Dettling et~al.}{2023}]{Dettling2022}
\begin{barticle}[author]
\bauthor{\bsnm{Dettling},~\bfnm{Philipp}\binits{P.}},
  \bauthor{\bsnm{Homs},~\bfnm{Roser}\binits{R.}},
  \bauthor{\bsnm{Am\'endola},~\bfnm{Carlos}\binits{C.}},
  \bauthor{\bsnm{Drton},~\bfnm{Mathias}\binits{M.}} \AND
  \bauthor{\bsnm{Hansen},~\bfnm{Niels~Richard}\binits{N.~R.}}
(\byear{2023}).
\btitle{Identifiability in continuous {L}yapunov models}.
\bjournal{SIAM J. Matrix Anal. Appl.}
\bvolume{44}
\bpages{1799--1821}.
\end{barticle}
\endbibitem

\bibitem[\protect\citeauthoryear{Dicker et~al.}{2017}]{Dicker2017}
\begin{barticle}[author]
\bauthor{\bsnm{Dicker},~\bfnm{Lee~H.}\binits{L.~H.}},
  \bauthor{\bsnm{Foster},~\bfnm{Dean~P.}\binits{D.~P.}} \AND
  \bauthor{\bsnm{Hsu},~\bfnm{Daniel}\binits{D.}}
(\byear{2017}).
\btitle{Kernel ridge vs. principal component regression: minimax bounds and the
  qualification of regularization operators}.
\bjournal{Electron. J. Stat.}
\bvolume{11}
\bpages{1022--1047}.
\end{barticle}
\endbibitem

\bibitem[\protect\citeauthoryear{Evans}{2010}]{EvansPDEbook}
\begin{bbook}[author]
\bauthor{\bsnm{Evans},~\bfnm{Lawrence~C.}\binits{L.~C.}}
(\byear{2010}).
\btitle{Partial differential equations},
\bedition{second} ed.
\bseries{Graduate Studies in Mathematics}
\bvolume{19}.
\bpublisher{American Mathematical Society}, \baddress{Providence, RI}.
\end{bbook}
\endbibitem

\bibitem[\protect\citeauthoryear{Gobet
  et~al.}{2004}]{Gobet2004LowFrequencySeminal}
\begin{barticle}[author]
\bauthor{\bsnm{Gobet},~\bfnm{Emmanuel}\binits{E.}},
  \bauthor{\bsnm{Hoffmann},~\bfnm{Marc}\binits{M.}} \AND
  \bauthor{\bsnm{Rei\ss},~\bfnm{Markus}\binits{M.}}
(\byear{2004}).
\btitle{Nonparametric estimation of scalar diffusions based on low frequency
  data}.
\bjournal{Ann. Statist.}
\bvolume{32}
\bpages{2223--2253}.
\end{barticle}
\endbibitem

\bibitem[\protect\citeauthoryear{Hwang
  et~al.}{2005}]{Hwang2005AcceleratingDiffusions}
\begin{barticle}[author]
\bauthor{\bsnm{Hwang},~\bfnm{Chii-Ruey}\binits{C.-R.}},
  \bauthor{\bsnm{Hwang-Ma},~\bfnm{Shu-Yin}\binits{S.-Y.}} \AND
  \bauthor{\bsnm{Sheu},~\bfnm{Shuenn-Jyi}\binits{S.-J.}}
(\byear{2005}).
\btitle{Accelerating diffusions}.
\bjournal{Ann. Appl. Probab.}
\bvolume{15}
\bpages{1433--1444}.
\end{barticle}
\endbibitem

\bibitem[\protect\citeauthoryear{Inglese}{2002}]{Inglese2002}
\begin{barticle}[author]
\bauthor{\bsnm{Inglese},~\bfnm{G.}\binits{G.}}
(\byear{2002}).
\btitle{Recovering a vector field with the aid of controlled noise}.
\bjournal{J. Inverse Ill-Posed Probl.}
\bvolume{10}
\bpages{187--193}.
\end{barticle}
\endbibitem

\bibitem[\protect\citeauthoryear{Karatzas and
  Ruf}{2016}]{Karatzas2016ExplosionTimeDistribution}
\begin{barticle}[author]
\bauthor{\bsnm{Karatzas},~\bfnm{Ioannis}\binits{I.}} \AND
  \bauthor{\bsnm{Ruf},~\bfnm{Johannes}\binits{J.}}
(\byear{2016}).
\btitle{Distribution of the time to explosion for one-dimensional diffusions}.
\bjournal{Probab. Theory Related Fields}
\bvolume{164}
\bpages{1027--1069}.
\end{barticle}
\endbibitem

\bibitem[\protect\citeauthoryear{Khasminskii}{2012}]{KhasminskiiBookStochasticProcesses}
\begin{bbook}[author]
\bauthor{\bsnm{Khasminskii},~\bfnm{Rafail}\binits{R.}}
(\byear{2012}).
\btitle{Stochastic stability of differential equations},
\bedition{second} ed.
\bseries{Stochastic Modelling and Applied Probability}
\bvolume{66}.
\bpublisher{Springer}, \baddress{Heidelberg}.
\end{bbook}
\endbibitem

\bibitem[\protect\citeauthoryear{Lacker and
  Zhang}{2023}]{Lacker2023StationaryDiffusionOnTree}
\begin{barticle}[author]
\bauthor{\bsnm{Lacker},~\bfnm{Daniel}\binits{D.}} \AND
  \bauthor{\bsnm{Zhang},~\bfnm{Jiacheng}\binits{J.}}
(\byear{2023}).
\btitle{Stationary solutions and local equations for interacting diffusions on
  regular trees}.
\bjournal{Electron. J. Probab.}
\bvolume{28}
\bpages{Paper No. 4, 37}.
\end{barticle}
\endbibitem

\bibitem[\protect\citeauthoryear{Lee and
  Trutnau}{2021}]{Haesung2021FullSDEPackageUnderLocalIntegrability}
\begin{barticle}[author]
\bauthor{\bsnm{Lee},~\bfnm{Haesung}\binits{H.}} \AND
  \bauthor{\bsnm{Trutnau},~\bfnm{Gerald}\binits{G.}}
(\byear{2021}).
\btitle{Existence, uniqueness and ergodic properties for time-homogeneous
  {I}t\^o-{SDE}s with locally integrable drifts and {S}obolev diffusion
  coefficients}.
\bjournal{Tohoku Math. J. (2)}
\bvolume{73}
\bpages{159--198}.
\end{barticle}
\endbibitem

\bibitem[\protect\citeauthoryear{Leli\`evre
  et~al.}{2013}]{Lelivere2013OptimalLinearIrreversible}
\begin{barticle}[author]
\bauthor{\bsnm{Leli\`evre},~\bfnm{T.}\binits{T.}},
  \bauthor{\bsnm{Nier},~\bfnm{F.}\binits{F.}} \AND
  \bauthor{\bsnm{Pavliotis},~\bfnm{G.~A.}\binits{G.~A.}}
(\byear{2013}).
\btitle{Optimal non-reversible linear drift for the convergence to equilibrium
  of a diffusion}.
\bjournal{J. Stat. Phys.}
\bvolume{152}
\bpages{237--274}.
\end{barticle}
\endbibitem

\bibitem[\protect\citeauthoryear{Li
  et~al.}{2011}]{Li2011LogisticPopulationErgodicity}
\begin{barticle}[author]
\bauthor{\bsnm{Li},~\bfnm{Xiaoyue}\binits{X.}},
  \bauthor{\bsnm{Gray},~\bfnm{Alison}\binits{A.}},
  \bauthor{\bsnm{Jiang},~\bfnm{Daqing}\binits{D.}} \AND
  \bauthor{\bsnm{Mao},~\bfnm{Xuerong}\binits{X.}}
(\byear{2011}).
\btitle{Sufficient and necessary conditions of stochastic permanence and
  extinction for stochastic logistic populations under regime switching}.
\bjournal{J. Math. Anal. Appl.}
\bvolume{376}
\bpages{11--28}.
\end{barticle}
\endbibitem

\bibitem[\protect\citeauthoryear{Lorch
  et~al.}{2024}]{Lorch2024StationaryDiffusions}
\begin{binproceedings}[author]
\bauthor{\bsnm{Lorch},~\bfnm{Lars}\binits{L.}},
  \bauthor{\bsnm{Krause},~\bfnm{Andreas}\binits{A.}} \AND
  \bauthor{\bsnm{Sch{\"{o}}lkopf},~\bfnm{Bernhard}\binits{B.}}
(\byear{2024}).
\btitle{Causal Modeling with Stationary Diffusions}.
In \bbooktitle{AISTATS 2024}.
\bseries{Proceedings of Machine Learning Research}
\bvolume{238}
\bpages{1927--1935}.
\bpublisher{{PMLR}}.
\end{binproceedings}
\endbibitem

\bibitem[\protect\citeauthoryear{Lorch et~al.}{2026}]{Lorch2026}
\begin{barticle}[author]
\bauthor{\bsnm{Lorch},~\bfnm{Lars}\binits{L.}},
  \bauthor{\bsnm{Zhang},~\bfnm{Jiaqi}\binits{J.}},
  \bauthor{\bsnm{Bunne},~\bfnm{Charlotte}\binits{C.}},
  \bauthor{\bsnm{Krause},~\bfnm{Andreas}\binits{A.}},
  \bauthor{\bsnm{Schölkopf},~\bfnm{Bernhard}\binits{B.}} \AND
  \bauthor{\bsnm{Uhler},~\bfnm{Caroline}\binits{C.}}
(\byear{2026}).
\btitle{Latent Causal Diffusions for Single-Cell Perturbation Modeling}.
\end{barticle}
\endbibitem

\bibitem[\protect\citeauthoryear{Nickl and Ray}{2020}]{Nickl2020}
\begin{barticle}[author]
\bauthor{\bsnm{Nickl},~\bfnm{Richard}\binits{R.}} \AND
  \bauthor{\bsnm{Ray},~\bfnm{Kolyan}\binits{K.}}
(\byear{2020}).
\btitle{Nonparametric statistical inference for drift vector fields of
  multi-dimensional diffusions}.
\bjournal{Ann. Statist.}
\bvolume{48}
\bpages{1383--1408}.
\bdoi{10.1214/19-AOS1851}
\end{barticle}
\endbibitem

\bibitem[\protect\citeauthoryear{Pedretscher
  et~al.}{2019}]{Pedretscher2019FokkerPlanckMaximumLikelihood}
\begin{barticle}[author]
\bauthor{\bsnm{Pedretscher},~\bfnm{B.}\binits{B.}},
  \bauthor{\bsnm{Kaltenbacher},~\bfnm{B.}\binits{B.}} \AND
  \bauthor{\bsnm{Pfeiler},~\bfnm{O.}\binits{O.}}
(\byear{2019}).
\btitle{Parameter identification and uncertainty quantification in stochastic
  state space models and its application to texture analysis}.
\bjournal{Appl. Numer. Math.}
\bvolume{146}
\bpages{38--54}.
\end{barticle}
\endbibitem

\bibitem[\protect\citeauthoryear{Pereverzyev}{2022}]{Pereverzyev2022}
\begin{bbook}[author]
\bauthor{\bsnm{Pereverzyev},~\bfnm{Sergei}\binits{S.}}
(\byear{2022}).
\btitle{An introduction to artificial intelligence based on reproducing kernel
  {H}ilbert spaces}.
\bseries{Compact Textbooks in Mathematics}.
\bpublisher{Birkh\"auser/Springer}, \baddress{Cham}.
\end{bbook}
\endbibitem

\bibitem[\protect\citeauthoryear{Peters
  et~al.}{2022}]{Peters2022CausalModelsDynamicalSystems}
\begin{bincollection}[author]
\bauthor{\bsnm{Peters},~\bfnm{Jonas}\binits{J.}},
  \bauthor{\bsnm{Bauer},~\bfnm{Stefan}\binits{S.}} \AND
  \bauthor{\bsnm{Pfister},~\bfnm{Niklas}\binits{N.}}
(\byear{2022}).
\btitle{Causal Models for Dynamical Systems}.
In \bbooktitle{Probabilistic and Causal Inference: The Works of Judea Pearl}.
\bseries{{ACM} Books}
\bvolume{36}
\bpages{671--690}.
\bpublisher{{ACM}}.
\end{bincollection}
\endbibitem

\bibitem[\protect\citeauthoryear{Peters et~al.}{2017}]{Peters2017CausalityBook}
\begin{bbook}[author]
\bauthor{\bsnm{Peters},~\bfnm{Jonas}\binits{J.}},
  \bauthor{\bsnm{Janzing},~\bfnm{Dominik}\binits{D.}} \AND
  \bauthor{\bsnm{Sch\"olkopf},~\bfnm{Bernhard}\binits{B.}}
(\byear{2017}).
\btitle{Elements of causal inference}.
\bseries{Adaptive Computation and Machine Learning}.
\bpublisher{MIT Press}, \baddress{Cambridge, MA}.
\end{bbook}
\endbibitem

\bibitem[\protect\citeauthoryear{Pratapa et~al.}{2020}]{Pratapa2020b}
\begin{barticle}[author]
\bauthor{\bsnm{Pratapa},~\bfnm{Aditya}\binits{A.}},
  \bauthor{\bsnm{Jalihal},~\bfnm{Amogh~P.}\binits{A.~P.}},
  \bauthor{\bsnm{Law},~\bfnm{Jeffrey~N.}\binits{J.~N.}},
  \bauthor{\bsnm{Bharadwaj},~\bfnm{Aditya}\binits{A.}} \AND
  \bauthor{\bsnm{Murali},~\bfnm{T.~M.}\binits{T.~M.}}
(\byear{2020}).
\btitle{Benchmarking algorithms for gene regulatory network inference from
  single-cell transcriptomic data}.
\bjournal{Nature Methods}
\bvolume{17}
\bpages{147--154}.
\end{barticle}
\endbibitem

\bibitem[\protect\citeauthoryear{Rey-Bellet and
  Spiliopoulos}{2015}]{ReyBellet2015IrreversibleLargeDeviation}
\begin{barticle}[author]
\bauthor{\bsnm{Rey-Bellet},~\bfnm{Luc}\binits{L.}} \AND
  \bauthor{\bsnm{Spiliopoulos},~\bfnm{Konstantinos}\binits{K.}}
(\byear{2015}).
\btitle{Irreversible {L}angevin samplers and variance reduction: a large
  deviations approach}.
\bjournal{Nonlinearity}
\bvolume{28}
\bpages{2081--2103}.
\end{barticle}
\endbibitem

\bibitem[\protect\citeauthoryear{Rohbeck et~al.}{2024}]{Rohbeck2024Bicycle}
\begin{binproceedings}[author]
\bauthor{\bsnm{Rohbeck},~\bfnm{Martin}\binits{M.}},
  \bauthor{\bsnm{Clarke},~\bfnm{Brian}\binits{B.}},
  \bauthor{\bsnm{Mikulik},~\bfnm{Katharina}\binits{K.}},
  \bauthor{\bsnm{Pettet},~\bfnm{Alexandra}\binits{A.}},
  \bauthor{\bsnm{Stegle},~\bfnm{Oliver}\binits{O.}} \AND
  \bauthor{\bsnm{Ueltzh\"offer},~\bfnm{Kai}\binits{K.}}
(\byear{2024}).
\btitle{Bicycle: Intervention-Based Causal Discovery with Cycles}.
In \bbooktitle{Proceedings of the Third Conference on Causal Learning and
  Reasoning}
\bvolume{236}
\bpages{209--242}.
\bpublisher{PMLR}.
\end{binproceedings}
\endbibitem

\bibitem[\protect\citeauthoryear{Rosasco et~al.}{2010}]{Rosasco2010}
\begin{barticle}[author]
\bauthor{\bsnm{Rosasco},~\bfnm{Lorenzo}\binits{L.}},
  \bauthor{\bsnm{Belkin},~\bfnm{Mikhail}\binits{M.}} \AND
  \bauthor{\bsnm{De~Vito},~\bfnm{Ernesto}\binits{E.}}
(\byear{2010}).
\btitle{On learning with integral operators}.
\bjournal{J. Mach. Learn. Res.}
\bvolume{11}
\bpages{905--934}.
\end{barticle}
\endbibitem

\bibitem[\protect\citeauthoryear{Song and Ermon}{2019}]{Song2019}
\begin{binproceedings}[author]
\bauthor{\bsnm{Song},~\bfnm{Yang}\binits{Y.}} \AND
  \bauthor{\bsnm{Ermon},~\bfnm{Stefano}\binits{S.}}
(\byear{2019}).
\btitle{Generative Modeling by Estimating Gradients of the Data Distribution}.
In \bbooktitle{Annual Conference on Neural Information Processing Systems 2019,
  NeurIPS 2019, Vancouver, BC, Canada}
\bpages{11895--11907}.
\end{binproceedings}
\endbibitem

\bibitem[\protect\citeauthoryear{Sriperumbudur
  et~al.}{2017}]{Sriperumbudur2017KernelExponentialFamily}
\begin{barticle}[author]
\bauthor{\bsnm{Sriperumbudur},~\bfnm{Bharath~K.}\binits{B.~K.}},
  \bauthor{\bsnm{Fukumizu},~\bfnm{Kenji}\binits{K.}},
  \bauthor{\bsnm{Gretton},~\bfnm{Arthur}\binits{A.}},
  \bauthor{\bsnm{Hyv{\"{a}}rinen},~\bfnm{Aapo}\binits{A.}} \AND
  \bauthor{\bsnm{Kumar},~\bfnm{Revant}\binits{R.}}
(\byear{2017}).
\btitle{Density Estimation in Infinite Dimensional Exponential Families}.
\bjournal{J. Mach. Learn. Res.}
\bvolume{18}
\bpages{57:1--57:59}.
\end{barticle}
\endbibitem

\bibitem[\protect\citeauthoryear{Steinwart and
  Christmann}{2008}]{Steinwart2008SVM}
\begin{bbook}[author]
\bauthor{\bsnm{Steinwart},~\bfnm{Ingo}\binits{I.}} \AND
  \bauthor{\bsnm{Christmann},~\bfnm{Andreas}\binits{A.}}
(\byear{2008}).
\btitle{Support vector machines}.
\bseries{Information Science and Statistics}.
\bpublisher{Springer}, \baddress{New York}.
\end{bbook}
\endbibitem

\bibitem[\protect\citeauthoryear{Sun and Kumar}{2014}]{Sun2014StatFP}
\begin{barticle}[author]
\bauthor{\bsnm{Sun},~\bfnm{Yifei}\binits{Y.}} \AND
  \bauthor{\bsnm{Kumar},~\bfnm{Mrinal}\binits{M.}}
(\byear{2014}).
\btitle{Numerical solution of high dimensional stationary {F}okker-{P}lanck
  equations via tensor decomposition and {C}hebyshev spectral differentiation}.
\bjournal{Comput. Math. Appl.}
\bvolume{67}
\bpages{1960--1977}.
\end{barticle}
\endbibitem

\bibitem[\protect\citeauthoryear{Sutherland et~al.}{2018}]{Sutherland2018}
\begin{binproceedings}[author]
\bauthor{\bsnm{Sutherland},~\bfnm{Danica~J.}\binits{D.~J.}},
  \bauthor{\bsnm{Strathmann},~\bfnm{Heiko}\binits{H.}},
  \bauthor{\bsnm{Arbel},~\bfnm{Michael}\binits{M.}} \AND
  \bauthor{\bsnm{Gretton},~\bfnm{Arthur}\binits{A.}}
(\byear{2018}).
\btitle{Efficient and principled score estimation with Nystr{\"{o}}m kernel
  exponential families}.
In \bbooktitle{{AISTATS} 2018}.
\bseries{Proceedings of Machine Learning Research}
\bpages{652--660}.
\bpublisher{{PMLR}}.
\end{binproceedings}
\endbibitem

\bibitem[\protect\citeauthoryear{Varando and
  Hansen}{2020}]{pmlr-v124-varando20a}
\begin{binproceedings}[author]
\bauthor{\bsnm{Varando},~\bfnm{Gherardo}\binits{G.}} \AND
  \bauthor{\bsnm{Hansen},~\bfnm{Niels~Richard}\binits{N.~R.}}
(\byear{2020}).
\btitle{Graphical continuous Lyapunov models}.
In \bbooktitle{Proceedings of the 36th Conference on Uncertainty in Artificial
  Intelligence (UAI)}
\bpages{989--998}.
\bpublisher{PMLR}.
\end{binproceedings}
\endbibitem

\bibitem[\protect\citeauthoryear{Varughese and Fatti}{2008}]{Varughese2008}
\begin{barticle}[author]
\bauthor{\bsnm{Varughese},~\bfnm{M.~M.}\binits{M.~M.}} \AND
  \bauthor{\bsnm{Fatti},~\bfnm{L.~P.}\binits{L.~P.}}
(\byear{2008}).
\btitle{Incorporating environmental stochasticity within a biological
  population model}.
\bjournal{Theoretical Population Biology}
\bvolume{74}
\bpages{115--129}.
\end{barticle}
\endbibitem

\bibitem[\protect\citeauthoryear{Vasilev}{2026}]{Viktor2026}
\begin{bmastersthesis}[author]
\bauthor{\bsnm{Vasilev},~\bfnm{Viktor}\binits{V.}}
(\byear{2026}).
\btitle{Nonparametric Estimation of Causal Lyapunov Models},
\btype{Master's thesis},
\bpublisher{Technical University of Munich}.
\end{bmastersthesis}
\endbibitem

\bibitem[\protect\citeauthoryear{Wibisono et~al.}{2024}]{Wibisono2024}
\begin{binproceedings}[author]
\bauthor{\bsnm{Wibisono},~\bfnm{Andre}\binits{A.}},
  \bauthor{\bsnm{Wu},~\bfnm{Yihong}\binits{Y.}} \AND
  \bauthor{\bsnm{Yang},~\bfnm{Kaylee~Yingxi}\binits{K.~Y.}}
(\byear{2024}).
\btitle{Optimal score estimation via empirical Bayes smoothing}.
In \bbooktitle{{COLT}}.
\bseries{Proceedings of Machine Learning Research}
\bvolume{247}
\bpages{4958--4991}.
\bpublisher{{PMLR}}.
\end{binproceedings}
\endbibitem

\bibitem[\protect\citeauthoryear{Zhao et~al.}{2026}]{Zhao2025}
\begin{barticle}[author]
\bauthor{\bsnm{Zhao},~\bfnm{Wenjun}\binits{W.}},
  \bauthor{\bsnm{Fertig},~\bfnm{Elana~J.}\binits{E.~J.}} \AND
  \bauthor{\bsnm{Stein-O{\textquoteright}Brien},~\bfnm{Genevieve}\binits{G.}}
(\byear{2026}).
\btitle{CycleGRN: Inferring Gene Regulatory Networks from Cyclic Flow Dynamics
  in Single-Cell RNA-seq}.
\bjournal{bioRxiv}.
\bdoi{10.1101/2025.11.12.688126}
\end{barticle}
\endbibitem

\bibitem[\protect\citeauthoryear{Zhou et~al.}{2020}]{Zhou2020ScoreEstimators}
\begin{binproceedings}[author]
\bauthor{\bsnm{Zhou},~\bfnm{Yuhao}\binits{Y.}},
  \bauthor{\bsnm{Shi},~\bfnm{Jiaxin}\binits{J.}} \AND
  \bauthor{\bsnm{Zhu},~\bfnm{Jun}\binits{J.}}
(\byear{2020}).
\btitle{Nonparametric Score Estimators}.
In \bbooktitle{{ICML} 2020}.
\bseries{Proceedings of Machine Learning Research}
\bvolume{119}
\bpages{11513--11522}.
\bpublisher{{PMLR}}.
\end{binproceedings}
\endbibitem

\end{thebibliography}
%% if required, the content of .bbl file can be included here once bbl is generated
%%\input sn-article.bbl

\newpage
\begin{appendices}

\section{Proofs for the identifiability section}

\theoremIdentifiability*
\begin{proof}
    We show the claim by induction over the topological ordering of $\mathcal{D}$, which we assume to be $1,\ldots, d$ without loss of generality as discussed. Setting $i=1$ in Lemma \ref{lemmaMargRecursion} and recalling that $\pa{1}=\emptyset$ by assumption, we obtain \begin{equation*}
        \int_{\mathbb{R}} \beta \varphi'\,\dd{p_1} = \frac{1}{2}\int_{\mathbb{R}^d} D_{11}(\mathbf{x}) \varphi''(x_1)p(\mathbf{x})\dd{\mathbf{x}}\qquad\forall\varphi\in C^\infty_0(\mathbb{R}),
    \end{equation*} where $\beta$ can be either of $b_1$ and $b_1^*$. Taking the difference of both versions of this equation, once for $\beta:=b_1$ and once for $\beta:=b_1^*$, we find $\int_\mathbb{R} \delta \varphi'\,\dd{p_1}=0$ where $\delta:= b_1 - b_1^*$. This means $\delta \cdot p_1$ has a weak derivative which is zero almost everywhere, implying that $\delta\cdot p_1=0$ almost everywhere (Lemma \ref{lemmaZeroWeakDeriv}). As $p_1>0$ by $p>0$, we conclude $\delta=0$ almost everywhere, i.e. $b_1 = b_1^*$ almost everywhere.

    Assume $\mathbf{b}_{S}=\mathbf{b}_S^*$ with $S=\{1,\ldots,i-1\}$ for some $i\geq 2$. In particular, $\mathbf{b}_{\pa{i}}=\mathbf{b}_{\pa{i}}^*$. Taking the difference of equation \eqref{eq:lemmaMargRecursion} with $\mathbf{b}$ and equation \eqref{eq:lemmaMargRecursion} with $\mathbf{b}^*$, we find $\int_{\mathbb{R}^{|\Pa{i}|}} \delta \,\partial_i \varphi\,\dd{p_{\Pa{i}}} = 0$ with $\delta:= b_i - b_i^*$. Since $\varphi$ ranges through $C^\infty_0(\mathbb{R}^{|\Pa{i}|})$, the function $\delta\cdot p_\Pa{i}$ has one zero weak derivative, which implies $\delta\cdot p_\Pa{i}=0$ almost everywhere (Lemma \ref{lemmaZeroWeakDeriv}). By $p_{\Pa{i}}>0$ we conclude $b_i = b_i^*$ almost everywhere. 
\end{proof}

\begin{restatable}[Feller's explosion test \citep{Bhattacharya2023DiffusionBook}]{lemma}{lemmaFellerExplosionTest}\label{lemmaFellerExplosionTest}
    Let $I(z):=\int_0^z\tfrac{2}{\sigma^2}b(v)\dd{v}$. Consider\begin{align*}
        \int_0^\infty e^{-I(y)}\int_0^y e^{I(v)}\dd{v}\dd{y} &&\text{and}&&\int_{-\infty}^0 e^{-I(y)}\int_y^0 e^{I(v)}\dd{v}\dd{y}.
    \end{align*} If the left integral is finite, $x_t$ solving the SDE \eqref{eq:univ_sde} explodes towards $+\infty$ with positive probability. Similarly, the right integral indicates explosion towards $-\infty$.
\end{restatable}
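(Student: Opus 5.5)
We would prove Feller's test by the classical argument of Karatzas and Shreve (their Theorem 5.5.29), specialised to constant $\sigma$. The idea is to build a bounded, increasing positive solution of the resolvent equation $\mathcal{L}u=u$ and use it to derive a contradiction from optional stopping. Here $\mathcal{L}f:=\tfrac{\sigma^2}{2}f''+bf'$ is the generator of \eqref{eq:univ_sde}. We only treat explosion towards $+\infty$; the case $-\infty$ follows by the reflection $x\mapsto -x$, which turns the right integral into the left one.

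First, we introduce the scale density $s'(y):=e^{-I(y)}$ and note that $\mathcal{L}f = \tfrac{\sigma^2}{2}\,s'\,(f'/s')'$. We define
\begin{equation*}
v(x):=\int_0^x e^{-I(y)}\int_0^y \tfrac{2}{\sigma^2}e^{I(z)}\,\dd{z}\,\dd{y},
\end{equation*}
which up to the factor $2/\sigma^2$ is the left integral. The hypothesis is therefore $v(+\infty)<\infty$.

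Next, we define iteratively $u_0\equiv 1$ and
\begin{equation*}
u_{n+1}(x):=\int_0^x e^{-I(y)}\int_0^y \tfrac{2}{\sigma^2}e^{I(z)}u_n(z)\,\dd{z}\,\dd{y}.
\end{equation*}
By induction, on $[0,\infty)$ we have $0\le u_n\le v^n/n!$, since $v$ is nondecreasing there. Hence $u:=\sum_n u_n$ converges locally uniformly, satisfies $1\le u\le e^{v}$, and is $C^2$ with $\mathcal{L}u=u$ and $u'(0)=0$. On $(0,\infty)$, $u$ is strictly increasing with $u(x)>1$ for $x>0$. Because $v(+\infty)<\infty$, we get $u(+\infty)\le e^{v(+\infty)}<\infty$.

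Now start the process at $x_0>0$ and let $\tau_m$ be the exit time of $(0,m)$. By Itô's formula and $\mathcal{L}u=u$, the process $e^{-t\wedge\tau_m}u(x(t\wedge\tau_m))$ is a bounded local martingale, hence a martingale. Optional stopping and $m\to\infty$ give
\begin{equation*}
u(x_0)=\ex{e^{-\tau_0\wedge\tau_\infty}\,u\big(x(\tau_0\wedge\tau_\infty)\big)},
\end{equation*}
where $\tau_0$ is the hitting time of $0$ and $\tau_\infty=\lim_m \tau_m'$ is the explosion time to $+\infty$, with $\tau_m'$ the hitting time of $m$. Passing to the limit is justified by dominated convergence, since $u$ is bounded on $[0,\infty)$; this is exactly where finiteness of the integral enters.

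Suppose, for contradiction, that explosion to $+\infty$ has probability zero. On $\{\tau_0<\infty\}$ the integrand equals $e^{-\tau_0}u(0)\le 1$. On the complementary event, $\tau_m'\to\infty$ without explosion, so $e^{-\tau_m'}u(m)\to 0$, and the integrand on the event $\{\tau_0=\tau_\infty=\infty\}$ is $0$. Hence $u(x_0)\le 1$, contradicting $u(x_0)>1$. Therefore $\pr{\tau_\infty<\infty}>0$.

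The main obstacle is the limiting step $m\to\infty$: we must handle paths that neither hit $0$ nor explode, and justify interchanging limit and expectation. The boundedness of $u$ on $[0,\infty)$ handles both. Existence of a solution up to explosion is unproblematic, since $b$ is continuous and $\sigma>0$ is constant, so a unique strong solution exists up to the explosion time by localisation.
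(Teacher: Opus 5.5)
The paper gives no proof of this lemma. It is quoted from \citet{Bhattacharya2023DiffusionBook}, and only the direction ``finite integral $\Rightarrow$ explosion'' is used later, for the drifts $b_c$ with $c\neq 0$. You instead reprove exactly that direction via the classical Karatzas--Shreve resolvent argument, and your argument is correct. The relevant facts all check out:
\begin{itemize}
\item $\mathcal{L}v=1$;
\item the iterated series gives $u\in C^2$ with $\mathcal{L}u=u$ and $u(0)=1$;
\item $u'(x)=e^{-I(x)}\int_0^x \tfrac{2}{\sigma^2}e^{I(z)}u(z)\,\dd{z}>0$ for $x>0$;
\item $\sup_{[0,\infty)}u\le e^{v(+\infty)}<\infty$.
\end{itemize}
The last bound is precisely what justifies letting $m\to\infty$ under dominated convergence, and it forces $u(x_0)\le 1$ if there is no explosion.

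The citation buys the full equivalence and covers state-dependent diffusion coefficients. Your route buys a self-contained proof that shows exactly where finiteness of the integral enters. It also yields the slightly stronger conclusion that explosion to $+\infty$ happens before hitting $0$ with positive probability.

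Two small additions would make it complete for the paper's use.

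First, the lemma is stated for an arbitrary initial point, whereas you start at $x_0>0$. You can rerun the argument with a base point $c<x_0$ in place of $0$. Finiteness of the integral does not depend on the base point: changing it only adds a constant and a multiple of the scale function $\int^x e^{-I}$, and $v(+\infty)<\infty$ already forces $\int_0^\infty e^{-I}<\infty$. Alternatively, use the strong Markov property. From $x_0\le 0$ the process exits $(x_0-1,1)$ through $1$ with positive probability, and exit from a bounded interval precedes explosion.

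Second, for existence and pathwise uniqueness up to the explosion time, continuity of $b$ alone does not feed the Lipschitz localisation you invoke. The paper's standing assumption $b\in C^1$, which makes $b$ locally Lipschitz, does.
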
 % \citet[Thm 12.2]{Bhattacharya2023DiffusionBook}

\lemmaUnivExplodes*
\begin{proof}
    We consider the case $c>0$ and prove explosion towards $+\infty$ by showing that the left integral in Feller's explosion test (Lemma \ref{lemmaFellerExplosionTest}) is finite. When $c<0$, then the right integral is finite and explosion is towards $-\infty$.

    First, compute \begin{equation*}
        I(z) = \int_0^z \frac{2}{\sigma^2} b_c(v)\dd{v} = \int_0^z (\log p)'(v) + \frac{2c}{\sigma^2 p(v)} \dd{v} = \log p(z) - \log p(0) + \int_0^z \frac{2c}{\sigma^2 p(v)} \dd{v}.
    \end{equation*}
    We show that $\int_0^\infty \int_v^\infty e^{I(v) - I(y)}\dd{y}\dd{v} < \infty$. By Fubini's theorem, $\int_0^\infty e^{-I(y)}\int_0^y e^{I(v)}\dd{v}\dd{y}<\infty$, which is the claim. For $0\leq v\leq y$, we compute \begin{multline*}
        \int_0^\infty \int_v^\infty e^{I(v) - I(y)}\dd{y}\dd{v} = \int_0^\infty \int_v^\infty \frac{p(v)}{p(y)}e^{-\int_v^y\frac{2c}{\sigma^2p(w)}\dd{w}}\dd{y}\dd{v} = \\
        \int_0^\infty \frac{-\sigma^2 p(v)}{2}\int_v^\infty \frac{\dd{}}{\dd{y}}\left( e^{-\int_v^y \frac{2c}{\sigma^2 p(w)}\dd{w}}\right)\dd{y}\dd{v} = \\
        \int_0^\infty \frac{\sigma^2}{2} p(v) \left(1 - e^{-\int_v^\infty \frac{2c}{\sigma^2 p(w)}\dd{w}}\right)\dd{v} \leq \int_0^\infty \frac{\sigma^2}{2} p(v) \cdot 1\, \dd{v} < \infty.
    \end{multline*}
\end{proof}

\lemmaUniqueOneDim*
\begin{proof}
    Taking the difference of the Fokker-Planck equation \eqref{eq:stat_FP} for $b$ and $b^*$ respectively, we find \begin{equation*}
        \int_\mathbb{R} ((b - b^*)p)(x)\,\varphi'(x)\dd{x} = 0 \qquad\forall\varphi\in C^\infty_0(\mathbb{R}).
    \end{equation*} From Lemma \ref{lemmaZeroWeakDeriv} it follows that $(b - b^*)p = 0$ almost everywhere. By $p>0$ it follows that $b=b^*$ almost everywhere.
\end{proof}

\lemmaMargRecursion*
\begin{proof}
    Abbreviate $A:=\Pa{i}$. When $|A|=d$, the equation to prove is simply the Fokker-Planck equation \eqref{eq:stat_FP} which holds by assumption. In the following, we consider the case $|A|<d$.
    
    For $n\in\mathbb{N}$, define the approximation $\varphi_n(\mathbf{x}) := \varphi(\mathbf{x}) \cdot \gamma_n(\mathbf{x}_{-A})$ with $\gamma_n\in C^\infty_0(\mathbb{R}^{d-|A|})$ from Lemma \ref{lemmaOneApprox}. Particularly, $C:= \sup_{n\in\mathbb{N}} \|\gamma_n\|_{W^{2,\infty}} < \infty$ and $\varphi_n\in C^\infty_0(\mathbb{R}^d)$, which implies \begin{equation}\label{eq:proof_lemmaMargRecursion}
        \int_{\mathbb{R}^d}\mathbf{b}^T \nabla\varphi_n\dd{p} = \frac{1}{2}\int_{\mathbb{R}^d}\langle \mathbf{D}, \nabla^2\varphi_n\rangle_F\,\dd{p}
    \end{equation} by the Fokker-Planck relation \eqref{eq:stat_FP}. 
    
    We examine the limit as $n\to\infty$. Let $j\leq d$. 
    When $j\in A$, we have $|\partial_j\varphi_n(\mathbf{x})| = |\partial_j \tilde{\varphi}(\mathbf{x}_{A}) \cdot \gamma_n(\mathbf{x}_{-A})| \leq \|\partial_j \tilde{\varphi}\|_{L^\infty} \cdot C < \infty$ and $\partial_j \varphi_n(\mathbf{x})\to \partial_j\tilde{\varphi}(\mathbf{x}_{-A})\cdot1 = \varphi(\mathbf{x})$ as $n\to\infty$ for all $\mathbf{x}\in\mathbb{R}^d$. The dominated convergence theorem implies $\int_{\mathbb{R}^d}b_j\partial_j\varphi_n\,\dd{p} \to \int_{\mathbb{R}^d}b_j\partial_j \varphi \, \dd{p}$ as $n\to\infty$.

    When $j\notin A$, we have $|\partial_j\varphi_n(\mathbf{x})| = |\tilde{\varphi}(\mathbf{x}_{A}) \cdot \partial_j\gamma_n(\mathbf{x}_{-A})| \leq \| \tilde{\varphi}\|_{L^\infty} \cdot C < \infty$ and $\partial_j \varphi_n(\mathbf{x})\to \tilde{\varphi}(\mathbf{x}_{-A})\cdot0 = 0$ as $n\to\infty$ for all $\mathbf{x}\in\mathbb{R}^d$. The dominated convergence theorem implies $\int_{\mathbb{R}^d}b_j\partial_j\varphi_n\,\dd{p} \to 0$ as $n\to\infty$.

    With a similar case distinction for $\langle\mathbf{D}, \nabla^2\varphi_n\rangle_F$, by letting $n\to\infty$ in equation \eqref{eq:proof_lemmaMargRecursion} we obtain that \begin{equation*}
        \int_{\mathbb{R}^d} \mathbf{b}_{A}^T \nabla_{A}\varphi \,\dd{p} = \int_{\mathbb{R}^d}\frac{1}{2}\langle\mathbf{D}_{A, A}, \nabla_{A}^2 \varphi\rangle_F\,\dd{p}.
    \end{equation*} All that remains for proving the Lemma's claim is to solve for the $b_i\partial_i\varphi$-term and to recall that $\int_{\mathbb{R}^d}b_i\partial_i\varphi\,\dd{p} = \int_{\mathbb{R}^{|A|}}b_i \partial_i \varphi\,\dd{p_{A}}$ since both $b_i$ and $\partial_j \varphi$ only depend on $\mathbf{x}_{A}$.
\end{proof}

\section{Proofs and details for the theoretical analysis}
\subsection{Setup}
\lemmaZetaUnbiased*
\begin{proof}
    Let $y\in\mathbb{R}^d$, let $i\leq d$, and abbreviate $A:= \Pa{i}$. By assumption, there is $\tilde{K}\in W^{2,\infty}(\mathbb{R}^{|A|})$ such that $\tilde{K}(\mathbf{x}_A) = \mathcal{K}_{i,\mathbf{y}}(\mathbf{x})$ almost everywhere. As $p$ was assumed to be bounded, there is a sequence $(\tilde{K}_n)_{n\in\mathbb{N}}\subset C^\infty_0(\mathbb{R}^{|A|})$ with $\|\tilde{K} - \tilde{K_n}\|_{W^{2,2}(\mathbb{R}^{|A|}, p_A)}\to 0$ as $n\to\infty$ by Lemma \ref{lemmaDenseInInftySobolev}. Set $K_n(\mathbf{x}):= \tilde{K}_n(\mathbf{x}_A)$. By Lemma \ref{lemmaMargRecursion}, we have \begin{equation}\label{eq:proof_lemmaParentMarg}
        \int_{\mathbb{R}^{|A|}} b_i \, \partial_i K_n \, \dd{p_A} = - \int_{\mathbb{R}^d}(\mathbf{b}^*_{\pa{i}})^T \nabla_{\pa{i}} K_n \, \dd{p} + \frac{1}{2}\int_{\mathbb{R}^d}\langle\mathbf{D}_{A,A}, \nabla_{A}^2 K_n\rangle_F\,\dd{p}.
    \end{equation} Using Hölder's inequality, we have \begin{multline*}
        \left|\int_{\mathbb{R}^d} (\mathbf{b}_A^*)^T\nabla_A \mathcal{K}_{i,\mathbf{y}}\,\dd{p} - \int_{\mathbb{R}^d}(\mathbf{b}_A^*)^T\nabla_A K_n\,\dd{p}\right| \leq \int_{\mathbb{R}^d} \|\mathbf{b}_A^*\|_2 \|\nabla_A \mathcal{K}_{i,\mathbf{y}} - \nabla_A K_n\|_2 \,\dd{p} 
        \\\leq \|\mathbf{b}^*\|_{L^2(\mathbb{R}^d, \mathbb{R}^d, p)} \cdot \|\tilde{K} - \tilde{K}_n\|_{{W^{2,2}(\mathbb{R}^{|A|}, p_A)}} \xrightarrow{n\to\infty} 0.
    \end{multline*} Together with an analogous result for $\langle\mathbf{D},\nabla^2_A K_n\rangle_F$, this implies the Lemma's claim by letting $n\to\infty$ in equation \eqref{eq:proof_lemmaParentMarg}.
\end{proof}

\subsection{Statistical analysis}

\lemmaZetaGeneralizationBound*
\begin{proof}
    By the triangle inequality and $\|\partial_j \mathcal{K}_{i,\cdot}(\cdot)\|_{L^\infty}\leq \kappa$ for all $j\in\pa{i}$, we find that\begin{equation*}
        \|\hat{\zeta}_i - \zeta_i\|_{L^2(p)} \leq \left\|\frac{1}{n}\sum_{j=1}^nz_i(\mathbf{x}_j, \cdot) - \zeta_i\right\|_{L^2(p)} + \kappa \sum_{j\in\pa{i}} \|\hat{\mathbf{b}}_j - \mathbf{b}_j^*\|_{E^n}.
    \end{equation*} Note $\ex{z_i(\mathbf{x}_1,\cdot)} = \zeta_i(\cdot)$ using Lemma \ref{lemmaZetaUnbiased} since $\partial_i\mathcal{K}_{i,\mathbf{y}} = k_i(\cdot,\mathbf{y})$ almost everywhere for all $\mathbf{y}\in\mathbb{R}^d$, and that  \begin{equation*}
        \|\zeta_i - z_i(\mathbf{x}_1,\cdot)\|_{L^2(p)} \leq % \|z_i(\mathbf{x}_1,\cdot) - \zeta_i\|_{L^\infty(\mathbb{R}^d)} \leq
        \|\zeta_i\|_{L^\infty(\mathbb{R}^d)} + \sum_{j\in\pa{i}}\kappa B + \sum_{j,k\in \Pa{i}} \kappa B \leq a_i^2\kappa B.
    \end{equation*} Then, by Hoeffding's inequality for separable Hilbert spaces \citep{Pereverzyev2022} % e.g., Proposition 4.1
    it holds that \begin{equation*}
        \left\|\frac{1}{n}\sum_{j=1}^nz_i(\mathbf{x}_j, \cdot) - \zeta_i\right\|_{L^2(p)} \leq \frac{a_i^2\kappa B \sqrt{2 \log(\tfrac{2}{\delta})}}{\sqrt{n}}
    \end{equation*} with probability at least $1-\delta$. 
\end{proof}

\lemmaZetaTrainingBound*
\begin{proof}
    By $\hat{\zeta}_i = \frac{1}{n}\sum_{k=1}^n z_i(\mathbf{x}_k,\cdot) + \sum_{j\in\pa{i}}\frac{1}{n}\sum_{l=1}^n (\hat{b}_j(\mathbf{x}_l) - b_j^*(\mathbf{x}_l))\cdot\partial_{x_j}\mathcal{K}_{i,\cdot}(\mathbf{x}_l)$, the triangle inequality gives that \begin{equation}\label{eq:proof_lemmaZetaTrainingBound1}
        \|\hat{\bs{\zeta}}_i- \bs{\zeta}_i\|_{E^n} \leq \sqrt{\frac{1}{n}\sum_{j=1}^n \left(\frac{1}{n}\sum_{k=1}^n z_i(\mathbf{x}_k,\mathbf{x}_j) - \zeta_i(\mathbf{x}_j)\right)^2} + \kappa \sum_{j\in\pa{i}} \|\hat{\mathbf{b}}_j - \mathbf{b}_j^*\|_{E^n}.
    \end{equation}
    Define the following kernel $h_\mathbf{x}$ indexed by the parameter $\mathbf{x}\in\mathbb{R}^d$: \begin{equation*}
        h_\mathbf{x}(\mathbf{a},\tilde{\mathbf{a}}):= \left(z_i(\mathbf{a}, \mathbf{x}) - \zeta_i(\mathbf{x})\right)\cdot \left(z_i(\mathbf{\tilde{a}}, \mathbf{x}) - \zeta_i(\mathbf{x})\right).
    \end{equation*} Further define the index sets $S(j):= \{(k,l)\in\{1,\ldots,n\}^2 \mid k\neq l, k\neq j, l\neq j\}$. We multiply out the square under the root in \eqref{eq:proof_lemmaZetaTrainingBound1}. Using $\sup_{\mathbf{a}, \tilde{\mathbf{a}}\in\mathbb{R}^d} |h_\mathbf{x}(\mathbf{a}, \tilde{\mathbf{a}})| \leq (a_i^2 \kappa B)^2$ and that $\{1,\ldots,n\}^2\setminus S(j)$ contains at most $3n$ elements, we find that\begin{equation*}
        \frac{1}{n}\sum_{j=1}^n \left(\frac{1}{n}\sum_{k=1}^n z_i(\mathbf{x}_k,\mathbf{x}_j) - \zeta_i(\mathbf{x}_j)\right)^2 \leq \frac{3 a_i^4 \kappa^2 B^2}{n} + \frac{1}{n}\sum_{j=1}^n \frac{1}{n^2}\sum_{k,l\in S(j)} h_{\mathbf{x}_j}(\mathbf{x}_k, \mathbf{x}_l).
    \end{equation*} Next, we apply a $U$-statistic bound to the second term. Let $\mathbf{x}\in\mathbb{R}^d$ be a deterministic constant. The kernel $h_\mathbf{x}$ is symmetric and bounded by $h_\infty := (a_i^2 \kappa B)^2$ independently of $\mathbf{x}$. For $k\neq l$, we have $\ex{h_\mathbf{x}(\mathbf{x}_k, \mathbf{x}_l)} = 0$ and $\mathbf{a}\mapsto \ex{h_\mathbf{x}(\mathbf{a}, \mathbf{x}_l)}$ is the constant zero function, hence $h_\mathbf{x}$ is $p$-canonical. By \citet{Arcones1993}, %To check $P$-canonical, see directly above equation (1.5) in Arcones. The symmetrization $S_m$ does not change the symmetric $h$; note that they prove 2.3.d only for symmetric kernels 
    there are positive global constants $c_1, c_2$ such that for any $j\in \{1,\ldots, d\}$ and $t>0$: \begin{multline*}
        \pr{\left|\frac{1}{n-1}\sum_{k,l\in S(j)}h_\mathbf{x}(\mathbf{x}_k, \mathbf{x}_l)\right| \geq t} \leq c_1 \exp\left(- \frac{c_2 t}{h_\infty}\right) \Rightarrow\\
        \pr{\left|\frac{1}{n^2}\sum_{k,l\in S(j)}h_\mathbf{x}(\mathbf{x}_k, \mathbf{x}_l)\right| \geq t} \leq c_1 \exp\left(- \frac{c_2 t \cdot n^2}{h_\infty\cdot (n-1)}\right) \leq c_1 \exp\left(- \frac{c_2 t\cdot n}{h_\infty}\right).
    \end{multline*} By independence of $\mathbf{x}_j$ and $(\mathbf{x}_k, \mathbf{x}_l)_{k,l\in S(j)}$ the above inequality also holds for the conditional probability given $\mathbf{x}_j = \mathbf{x}$. % If A, B are indep, then P(A) = P(A|B)
    Taking the expectation with respect to $\mathbf{x}_j$, we have shown: \begin{equation*}
         \pr{\left|\frac{1}{n^2}\sum_{k,l\in S(j)}h_\mathbf{x_j}(\mathbf{x}_k, \mathbf{x}_l)\right| \geq t} \leq c_1 \exp\left(- \frac{c_2 t\cdot n}{h_\infty}\right).
    \end{equation*} Using the triangle inequality and a union bound, it follows that \begin{equation*}
        \pr{\left|\frac{1}{n}\sum_{j=1}^n \frac{1}{n^2}\sum_{k,l\in S(j)} h_{\mathbf{x}_j}(\mathbf{x}_k, \mathbf{x}_l)\right| \geq t + \frac{h_\infty \log(n)}{c_2 n}} \leq c_1 \exp\left(- \frac{c_2 t\cdot n}{h_\infty}\right).
    \end{equation*} The claim follows by taking $t:= \tfrac{h_\infty \log(\max(c_1, 2)/\delta)}{c_2 n}$ and from sub-additivity of the square root.
\end{proof}

\lemmaMercerKernelApprox*
\begin{proof}
    Apply Hoeffding's inequality on the space of Hilbert-Schmidt operators using $\|L_i\|_{\text{HS}}\leq \kappa$ from Section \ref{subsec:RKHS_theory} % Bound HS by nuclear norm
    and $\|\hat{L}_i\|_{\text{HS}}\leq \kappa$. % Discrete probabiliy measure
    Also see \citep{Pereverzyev2022}.
\end{proof}

\lemmaDeterministicTrainingBound*
\begin{proof}
    Fix $i\in\{1,\ldots,d\}$; we write $\lambda$ instead of $\lambda_i$ to reduce indices. To apply functional analysis tools, we note Lemma \ref{lemmaSampleTikhonovInverse} may equivalently be stated as \begin{equation}\label{eq:proof_DeterministicTrainingBound1}
        (\hat{L}_i + \lambda I)^{-1} f = -\frac{1}{\lambda}J_\mathbf{x}^*(J_\mathbf{x} J_\mathbf{x}^* + \lambda I)^{-1} J_\mathbf{x} f + \frac{f}{\lambda}
    \end{equation} for all $f\in \mathcal{H}_i$. Further, note our definition of $\hat{b}_i$ can be written as \begin{equation}\label{eq:proof_DeterministicTrainingBound2}
        \hat{b}_i = \frac{1}{\lambda}J_\mathbf{x}^*(J_\mathbf{x} J_\mathbf{x}^* + \lambda I)^{-1} \hat{\bs{\zeta}}_i - \frac{1}{\lambda}\hat{\zeta}_i.
    \end{equation}Our proof uses the following decomposition: 
    \begin{equation}\label{eq:proof_DeterministicTrainingBound3}
    \begin{aligned}
        b_i^* - \hat{b}_i &= b_i^* - b_{i,\lambda}\\
        &+b_{i,\lambda} - (\hat{L}_i + \lambda I)^{-1}(-\zeta_i) \\
        &+ (\hat{L}_i + \lambda I)^{-1}(-\zeta_i) - \hat{b}_i,
    \end{aligned}\end{equation} where $b_{i,\lambda}:= (L_i + \lambda I)^{-1}(-\zeta_i) = (L_i + \lambda I)^{-1}J^* b_i^*$ is the Tikhonov regularizer of $b_i^*$.

    For the bound on $\hat{\mathbf{b}}_i - \mathbf{b}_i^*$, we evaluate both sides of \eqref{eq:proof_DeterministicTrainingBound3} at the random variables $\mathbf{x}_1,\ldots,\mathbf{x}_n$ and take $\|\cdot\|_{E^n}$ on both sides. As that Evaluation of RKHS elements can be written by applying the sampling operator $J_\mathbf{x}\colon\mathcal{H}_i\to E^n$, we have found that \begin{multline}\label{eq:proof_DeterministicTrainingBound4}
        \|\mathbf{b}_i^* - \hat{\mathbf{b}}_i\|_{E^n} \leq \|\mathbf{b}_i^* - \mathbf{b}_{i,\lambda}\|_{E^n} + \|J_\mathbf{x}((L_i + \lambda I)^{-1} - (\hat{L_i} + \lambda I)^{-1})\zeta_i\|_{E^n} + \\
        \frac{1}{\lambda} \|J_\mathbf{x}J_\mathbf{x}^*(J_\mathbf{x} J_\mathbf{x}^* + \lambda I)^{-1}(\hat{\bs{\zeta}}_i - \bs{\zeta}_i)\|_{E^n} + \frac{1}{\lambda} \|\hat{\bs{\zeta}}_i - \bs{\zeta}_i\|_{E^n},
    \end{multline} where we denoted $\mathbf{b}_{i,\lambda}=(b_{i,\lambda}(\mathbf{x}_1), \ldots,b_{i,\lambda}(\mathbf{x}_n))\in\mathbb{R}^n$. Using $A^{-1} - B^{-1} = B^{-1}(B - A)A^{-1}$ for invertible operators $A,B$, we have that\begin{equation}\label{eq:proof_DeterministicTrainingBound5}
    (L_i + \lambda I)^{-1} -(\hat{L}_i + \lambda I)^{-1}= (\hat{L}_i + \lambda I)^{-1}(\hat{L}_i - L_i)(L_i + \lambda I)^{-1}. 
\end{equation}Recall that $\hat{L}_i = J_\mathbf{x}^*J_\mathbf{x}$, that $L_i=J^*J$ and finally that $\zeta_i = -J^*b_i^*$. We therefore bound: \begin{align}\label{eq:proof_DeterministicTrainingBound6}
     \|J_\mathbf{x}(J_\mathbf{x}^*J_\mathbf{x} + \lambda I)^{-1}\|_{\mathcal{L}(\mathcal{H}_i, E^n)} & \leq \frac{1}{\sqrt{\lambda}} \\\label{eq:proof_DeterministicTrainingBound7}
     \|(J^*J+\lambda I)^{-1}J^*\|_{\mathcal{L}(L^2(p), \mathcal{H}_i)} & \leq \frac{1}{\sqrt{\lambda}}.
\end{align} One way to see \eqref{eq:proof_DeterministicTrainingBound8} is to multiply the operator with its adjoint and to subsequently apply the spectral theorem for $J^*J$. Using $\sup_{\mu\geq 0} \tfrac{\mu}{(\mu + \lambda)^2}\leq \tfrac{1}{\lambda}$ and taking the square root yields \eqref{eq:proof_DeterministicTrainingBound8}, also see \citet{Vito2005}. Equation \eqref{eq:proof_DeterministicTrainingBound7} follows by replacing $J$ with $J_\mathbf{x}$. Applying an operator norm factorization bound, the second term on the right hand side of \eqref{eq:proof_DeterministicTrainingBound4} is bounded by $\varepsilon_3 \|b_i^*\|_{L^2(p)}\cdot\lambda^{-1}$.  Using  $\|J_\mathbf{x}J_\mathbf{x}^*(J_\mathbf{x} J_\mathbf{x}^* + \lambda I)^{-1}\|_{\mathcal{L}(E^n, E^n)}\leq 1$ and $\|\hat{\bs{\zeta}}_i - \bs{\zeta}_i\|_{E^n} \leq \varepsilon_1$ by assumption, the bottom line of \eqref{eq:proof_DeterministicTrainingBound4} is bounded by $2 \varepsilon_1\cdot\lambda^{-1}$. Thus, \begin{equation*}
    \|\mathbf{b}_i^* - \hat{\mathbf{b}}_i\|_{E^n} \leq \|\mathbf{b}_i^* - \mathbf{b}_{i,\lambda}\|_{E^n} + \frac{\varepsilon_3 \|b_i^*\|_{L^2(p)}}{\lambda} + \frac{2\varepsilon_1}{\lambda}
\end{equation*} as claimed. 

Next, we study $\hat{b}_i - b_i^*$ in the $L^2(p)$ norm and again use the decomposition \eqref{eq:proof_DeterministicTrainingBound3}. Making the embedding $J\colon\mathcal{H}_i\to L^2(p)$ explicit for all elements from $\mathcal{H}_i$ on the right side of \eqref{eq:proof_DeterministicTrainingBound3} and applying the triangle inequality, we find that \begin{multline}\label{eq:proof_DeterministicTrainingBound8}
    \|b_i^* - \hat{b}_i\|_{L^2(p)} \leq \|b_i^* - Jb_{i,\lambda}\|_{L^2(p)} + \|J((L_i + \lambda I)^{-1} - (\hat{L_i} + \lambda I)^{-1})\zeta_i\|_{L^2(p)} + \\
    \frac{1}{\lambda}\|J J_\mathbf{x}^*(J_\mathbf{x} J_\mathbf{x}^* + \lambda I)^{-1}(\hat{\bs{\zeta}}_i - \bs{\zeta}_i)\|_{L^2(p)} + \frac{1}{\lambda}\|\hat{\zeta}_i - J\zeta_i\|_{L^2(p)}.
\end{multline} Treating the second term on the right hand side with \eqref{eq:proof_DeterministicTrainingBound5} and \eqref{eq:proof_DeterministicTrainingBound7}, we find that\begin{equation*}
    \|J((L_i + \lambda I)^{-1} - (\hat{L_i} + \lambda I)^{-1})\zeta_i\|_{L^2(p)} \leq \|J(\hat{L}_i + \lambda I)^{-1}\|_{\mathcal{L}(\mathcal{H}_i, L^2(p))} \cdot \varepsilon_3 \cdot \frac{1}{\sqrt{\lambda}} \cdot \|b_i^*\|_{L^2(p)}.
\end{equation*}Let $O_\lambda:= (\hat{L}_i + \lambda I)^{-1}$. To resolve the mismatch between population operator $J$ and sample operator $\hat{L}_i$, we consider \begin{equation*}
    (JO_\lambda)^*JO_\lambda = O_\lambda J_\mathbf{x}^* J_\mathbf{x} O_\lambda + O_\lambda (J^*J - J_\mathbf{x}^* J_\mathbf{x})O_\lambda. 
\end{equation*}Using $\|O_\lambda J_\mathbf{x}^*\|_{\mathcal{L}(E^n,\mathcal{H}_i)} = \|J_\mathbf{x} O_\lambda\|_{\mathcal{L}(\mathcal{H}_i, E^n)}\leq \tfrac{1}{\sqrt{\lambda}}$ by \eqref{eq:proof_DeterministicTrainingBound6} and $O_\lambda^*=O_\lambda$, and using $\|O_\lambda\|_{\mathcal{L}(\mathcal{H}_i, \mathcal{H}_i)} \leq \tfrac{1}{\lambda}$, we have shown \begin{equation*}
    \|J(\hat{L}_i + \lambda I)^{-1}\|_{\mathcal{L}(\mathcal{H}_i, L^2(p))} = \|JO_\lambda\|_{\mathcal{L}(\mathcal{H}_i, L^2(p))} \leq \sqrt{ \frac{1}{\lambda} + \frac{\varepsilon_3}{\lambda^2}} \leq \frac{1}{\sqrt{\lambda}} + \frac{\sqrt{\varepsilon_3}}{\lambda}.
\end{equation*}% where we used $\|A\| = \sqrt{\|A^*A\|}$ for operators $A$. 
We treat the third term in \eqref{eq:proof_DeterministicTrainingBound8} similarly. With $U_\lambda := (J_\mathbf{x}J_\mathbf{x}^* + \lambda I)^{-1}$, we find that \begin{equation*}
(JJ_\mathbf{x}^*U_\lambda)^*JJ_\mathbf{x}^*U_\lambda = U_\lambda J_\mathbf{x} J_\mathbf{x}^* J_\mathbf{x} J_\mathbf{x}^*U_\lambda + U_\lambda J_\mathbf{x} (J^*J - J_\mathbf{x}^* J_\mathbf{x})J_\mathbf{x}^*U_\lambda.
\end{equation*} Using $\|J_\mathbf{x} J_\mathbf{x}^* U_\lambda\|_{\mathcal{L}(\mathcal{H}_i, E^n)} \leq 1$ and $\|J_\mathbf{x}^*U_\lambda\|_{\mathcal{L}(E^n, \mathcal{H}_i)} \leq \tfrac{1}{\sqrt{\lambda}}$, we have shown that \begin{equation*}
    \|J J_\mathbf{x}^*(J_\mathbf{x}^* J_\mathbf{x} + \lambda I)^{-1}(\hat{\bs{\zeta}}_i - \bs{\zeta}_i)\|_{L^2(p)} \leq \varepsilon_1\cdot\sqrt{1 + \frac{\varepsilon_3}{\lambda}}.
\end{equation*} Collecting all terms, we have shown that \begin{equation*}
    \|b_i^* - \hat{b}_i\|_{L^2(p)} \leq \|b_i^* - Jb_{i,\lambda}\|_{L^2(p)} + \left(\frac{\varepsilon_3}{\lambda} + \left(\frac{\varepsilon_3}{\lambda}\right)^{3/2}\right)\|b_i^*\|_{L^2(p)} + \frac{\varepsilon_1}{\lambda} + \frac{\varepsilon_1 \sqrt{\varepsilon_3}}{\lambda^{3/2}} + \frac{\varepsilon_2}{\lambda}
\end{equation*}as claimed.
\end{proof} 

\theoremDriftConcentration*
\begin{proof}
    Any $\mathcal{O}(\cdot)$-notation we use is with respect to $n\to\infty$. We define the shorthand $\PaTrEr:= \sum_{j\in\pa{i}} \|\hat{\mathbf{b}}_j - \mathbf{b}_j^*\|_{E^n}$ for the cumulative parent training error. Further, note that for any any $0\leq \alpha \leq \gamma\leq 1$ and $\beta \geq 2$, it holds that \begin{equation}\label{eq:proof_StatsThm1}
    \sup_{\delta\in(0,1)}\frac{\log^\alpha(\beta/\delta) }{\log^\gamma(2/\delta)} \leq \frac{\log^\alpha(\beta/2)}{\log^\gamma(2)} + \log(2)^{\alpha-\gamma}.
    \end{equation} % Use $\log^\alpha(\beta/\delta) = (\log(\beta/2) + \log(2/\delta))^\alpha \leq \log^\alpha(\beta/2) + \log^\alpha(2/\delta)$

    Let $\delta\in(0,1)$ and insert $\delta/3$ into Lemmas \ref{lemmaMercerKernelApprox}, \ref{lemmaZetaTrainingBound}, and \ref{lemmaZetaGeneralizationBound}. By equation \eqref{eq:proof_StatsThm1}, we may unify $\log^{1/2} \tfrac{6}{\delta} = \log^{1/2}(\tfrac{2}{\delta}) \cdot \mathcal{O}(1)$ and $a_i^2\kappa B\sqrt{3} = \log^{1/2}(\tfrac{2}{\delta})\cdot\mathcal{O}(1)$, to name two examples. Finally, applying a union bound over $\tfrac{\delta}{3}$, we obtain an event of probability at least $1-\delta$ on which simultaneously 
    \begin{align}
        \|\hat{\bs{\zeta}}_i- \bs{\zeta}_i\|_{E^n} \leq& \frac{\mathcal{O}(\sqrt{\log n})}{\sqrt{n}}\cdot\log^{1/2}\tfrac{2}{\delta} + \kappa\PaTrEr ,\\ 
        \|\hat{\zeta}_i - \zeta_i\|_{L^2(p)} \leq & \frac{\mathcal{O}(1)}{\sqrt{n}}\cdot\log^{1/2} \tfrac{2}{\delta} + \kappa\PaTrEr,\\
        \|\hat{L}_i - L_i\|_{\mathcal{L}(\mathcal{H}_i, \mathcal{H}_i)} \leq & \frac{\mathcal{O}(1)}{\sqrt{n}}\cdot\log^{1/2}\tfrac{2}{\delta}.
    \end{align}
    Inserting these bounds into Lemma \ref{lemmaDeterministicTrainingBound}, we have with probability at least $1-\delta$ that \begin{multline*}
        \|\hat{\mathbf{b}}_i - \mathbf{b}_i^*\|_{E^n} \leq \|\mathbf{b}_{i,\lambda_i} - \mathbf{b}_i^*\|_{E^n} + \frac{\mathcal{O}(1)}{\lambda_i\sqrt{n}}\cdot\log ^{1/2 }\tfrac{2}{\delta} + \frac{\mathcal{O}(\sqrt{\log n})}{\lambda_i\sqrt{n}}\cdot\log ^{1/2}\tfrac{2}{\delta} +\frac{2\kappa}{\lambda_i} \PaTrEr.
    \end{multline*} Combining $\mathcal{O}(1) + \mathcal{O}(\sqrt{\log n}) =\mathcal{O}(\log n)$ and using $\log^{1/2}\tfrac{2}{\delta} = \mathcal{O}(1)\cdot \log \tfrac{2}{\delta}$ by equation \eqref{eq:proof_StatsThm1}, the claim \eqref{eq:thm_driftConc1} follows. Continuing, Lemma \ref{lemmaDeterministicTrainingBound} shows that \begin{multline*}
        \|\hat{b}_i - b_i^*\|_{L^2(p)}  \leq \|b_{i,\lambda_i} - b_i^*\|_{L^2(p)} + \frac{\mathcal{O}(1)}{\lambda_i\sqrt{n}}\cdot\log^{1/2}(\tfrac{2}{\delta}) + \frac{\mathcal{O}(1)}{(\lambda_i\sqrt{n})^{3/2}}\cdot\log^{3/4}(\tfrac{2}{\delta}) \\ + \frac{\mathcal{O}(\sqrt{\log n})}{\lambda_i\sqrt{n}} \log^{1/2} \tfrac{2}{\delta} + \frac{2\kappa}{\lambda_i}\PaTrEr + \frac{1}{\lambda_i^{3/2}} \left(\frac{\mathcal{O}(\sqrt{\log n})}{\sqrt{n}}\log^{1/2}\tfrac{2}{\delta} + \kappa\PaTrEr\right)\frac{\mathcal{O}(1)}{n^{1/4}}\log^{1/4}\tfrac{2}{\delta} \\
        = \|b_{i,\lambda_i} - b_i^*\|_{L^2(p)} + \frac{\mathcal{O}(\sqrt{\log n})}{\lambda_i\sqrt{n}}\log^{1/2}(\tfrac{2}{\delta}) + \frac{\mathcal{O}(\sqrt{\log n})}{(\lambda_i\sqrt{n})^{3/2}}\log^{3/4}(\tfrac{2}{\delta}) + \frac{\kappa}{\lambda_i
        }\left(2 + \frac{\mathcal{O}(1)\cdot \log^{1/4}\tfrac{2}{\delta}}{(\lambda_i \sqrt{n})^{1/2}}\right)\PaTrEr.
    \end{multline*} By assumption, $(\lambda_i\sqrt{n})^{-1/2} = \mathcal{O}(1)$. Applying equation \eqref{eq:proof_StatsThm1} to the log terms, in particular $2 + \log^{1/4} \tfrac{2}{\delta} = \mathcal{O}(1)\cdot \log \tfrac{2}{\delta}$, yields the claim \eqref{eq:thm_driftConc2}.
\end{proof}

\corollayExpectGenError*
\begin{proof}
    Let $\delta\in(0,1)$. By inserting $\delta/d$ in Theorem \ref{theoremDriftConcentration}, absorbing $d$ into the $\mathcal{O}(\cdot)$ terms via \eqref{eq:proof_StatsThm1}, and applying a union bound, we have that 
    \begin{align}\label{eq:proof_corollaryExpectGenErr0}
        \begin{aligned}
        \|\hat{b}_i - b_i^*\|_{L^2(p)}   \leq  \|b_{i,\lambda_i} \hspace{-0.1cm}- b_i^*\|_{L^2(p)} + \frac{\mathcal{O}(\log n)}{\lambda_i \sqrt{n}} \log( \tfrac{2}{\delta}) 
          +\frac{\mathcal{O}(1)}{\lambda_i} \log \tfrac{2}{\delta}\hspace{-0.2cm} \sum_{j\in\pa{i}} \|\hat{\mathbf{b}}_j - \mathbf{b}_j^*\|_{E^n},
    \end{aligned}\\\label{eq:proof_corollaryExpectGenErr1}
         \|\hat{\mathbf{b}}_i - \mathbf{b}_i^*\|_{E^n} \leq \|\mathbf{b}_{i,\lambda_i} - \mathbf{b}_i^*\|_{E^n} + \frac{\mathcal{O}(\log n)}{\lambda_i \sqrt{n}}\cdot\log(\tfrac{2}{\delta}) + \frac{2\kappa}{\lambda_i} \sum_{j\in\pa{i}} \|\hat{\mathbf{b}}_j - \mathbf{b}_j^*\|_{E^n}
    \end{align} for \emph{all} $i\in\{1,\ldots,d\}$ with probability at least $1-\delta$. We now recursively plug equation \eqref{eq:proof_corollaryExpectGenErr1} into equation \eqref{eq:proof_corollaryExpectGenErr0} and transform to something that's amenable to taking expectation. Let $(\lambda_i)_{i=1,\ldots,d}$ satisfy \eqref{eq:corollaryExpectGenError_claim1}. Via induction over $d_i\geq1$, we show that\begin{equation*}
      \frac{\log \tfrac{2}{\delta}}{\lambda_i} \sum_{j\in\pa{i}} \|\hat{\mathbf{b}}_j - \mathbf{b}_j^*\|_{E^n} \leq \frac{\mathcal{O}(\log n)}{n^{1/r_i}} \log^2\tfrac{2}{\delta} + \mathcal{O}(1)\sum_{m\in\anc{i}} (T_m^{1/3^{d_{mi}}}(\lambda_m) \vee T_m(\lambda_m))(Q_m + \log^2\tfrac{2}{\delta})
    \end{equation*} for all $i\leq d$ on this event with probability at least $1-\delta$, where $Q_m:=\tfrac{\|\mathbf{b}_{m,\lambda_m} - \mathbf{b}_m^*\|_{E^n}^2}{\|b_{m,\lambda_m} - b_m^*\|^2_{L^2(p)}}$. First consider $d_i=1$. Then, $\anc{i}=\pa{i}$, otherwise we had $d_i\geq 2$. In particular, $\pa{j}=\emptyset$ for all $j\in\anc{i}$. Further, $d_{mi}=1$ for all $m\in\anc{i}$. By \eqref{eq:proof_corollaryExpectGenErr1} we have that \begin{multline*}
        \frac{\log \tfrac{2}{\delta}}{\lambda_i} \sum_{j\in\pa{i}} \|\hat{\mathbf{b}}_j - \mathbf{b}_j^*\|_{E^n} \leq \sum_{j\in\pa{i}} \frac{\mathcal{O}(\log n)}{\lambda_i\lambda_j\sqrt{n}}\log^2(\tfrac{2}{\delta}) + \sum_{j\in\pa{i}}\frac{\|b_{j,\lambda_i} - b_j^*\|_{L^2(p)}}{\lambda_i} \log(\tfrac{2}{\delta}) \sqrt{Q_i}. 
    \end{multline*} For the second term, use $\lambda_i^{-1}\leq \|b_{j,\lambda_j} - b_j^*\|_{L^2(p)}^{-2/3}$ for all $j\in\pa{i}$ by construction, and apply $\log(\tfrac{2}{\delta})\sqrt{Q_i} \leq \log^2(\tfrac{2}{\delta}) + Q_i$. Regarding the first term, consider that \begin{equation*}
        \tfrac{1}{\lambda_i\lambda_j\sqrt{n}} \leq n^{2/r_i}\cdot n^{2/r_j}\cdot n^{-1/2} = n^{2/18}\cdot n^{2/6}\cdot n^{-1/2} = n^{-1/18} = n^{-1/r_i},
    \end{equation*} and absorb $|\pa{i}|\cdot\mathcal{O}(\log n)=\mathcal{O}(\log n)$ to see the induction claim. Let $i$ be a node such that the induction claim has been shown for $d_i-1$. Note that $d_j\leq d_i - 1$ for all $j\in\anc{i}$, otherwise there would exist a path of length $d_i+1$ with $i$ as endpoint via node $j$. By \eqref{eq:proof_corollaryExpectGenErr1}, \begin{multline}\label{eq:proof_corollaryExpectGenErr2}
        \frac{\log \tfrac{2}{\delta}}{\lambda_i} \sum_{j\in\pa{i}} \|\hat{\mathbf{b}}_j - \mathbf{b}_j^*\|_{E^n} \leq \sum_{j\in\pa{i}} \frac{\mathcal{O}(\log n)}{\lambda_i\lambda_j\sqrt{n}}\log^2(\tfrac{2}{\delta}) + \sum_{j\in\pa{i}}\frac{\|b_{j,\lambda_j} - b_j^*\|_{L^2(p)}}{\lambda_i} \log(\tfrac{2}{\delta}) \sqrt{Q_i} \\+ \frac{1}{\lambda_i}\sum_{j\in\pa{i}} \frac{2\kappa \log \tfrac{2}{\delta}}{\lambda_j} \sum_{m\in\pa{j}} \|\hat{\mathbf{b}}_m - \mathbf{b}_m^*\|_{E^n}.
    \end{multline}By $\tfrac{1}{r_j} = \tfrac{1}{6\cdot 3^{d_j}}\geq \tfrac{1}{6\cdot 3^{d_i -1}} = \tfrac{3}{r_i}$, we have \begin{equation}\label{eq:proof_corollaryExpectGenErr3}
        \tfrac{1}{\lambda_i n^{1/r_j}}\leq n^{2/r_i}\cdot n^{-3/r_i} = n^{-1/r_i}.
    \end{equation}Since $(\lambda_j\sqrt{n})^{-1} \leq n^{2/r_j}\cdot n^{-1/2}\leq n^{-1/r_j}$, % this bound is not tight; the critical case is when the above display is applied to the induction case
    the first term on the right hand side of \eqref{eq:proof_corollaryExpectGenErr2} is bounded by $\tfrac{\mathcal{O}(\log n)}{n^{1/r_i}}\log^2\tfrac{2}{\delta}$ using \eqref{eq:proof_corollaryExpectGenErr3}. Treat the second term like in the induction start, using $\lambda_i^{-1}\leq T_j(\lambda_j)^{-2/3}$ for $j\in\pa{i}$. Recalling the induction hypothesis, we find that the third term on the right hand side of \eqref{eq:proof_corollaryExpectGenErr2} is bounded by \begin{multline*}
        \sum_{j\in\pa{i}} \frac{\mathcal{O}(\log n)}{\lambda_i n^{1/r_j}} \log^2\tfrac{2}{\delta} + \mathcal{O}(1)\sum_{j\in\pa{i}}\sum_{m\in\anc{j}} \frac{(Q_m + \log^2\tfrac{2}{\delta})}{\lambda_i}(T_m^{1/3^{d_{mj}}}(\lambda_m) \vee T_m(\lambda_m))\\
        \overset{\eqref{eq:proof_corollaryExpectGenErr3}}{\leq} \frac{\mathcal{O}(\log n)}{n^{1/r_i}}\log^2\tfrac{2}{\delta} + \mathcal{O}(1) \sum_{j\in\pa{i}}\sum_{m\in\anc{j}}\frac{T_m^{1/3^{d_{mj}}}(\lambda_m) \vee T_m(\lambda_m)}{T_m^{2/3^{d_{mi}}}(\lambda_m)}(Q_m + \log^2\tfrac{2}{\delta}). 
    \end{multline*} If $T_m(\lambda_m)\leq 1$, use $d_{mi} \geq d_{mj}+1$ for $j\in\pa{i}$, and therefore $3^{-d_{mj}} - 2\cdot 3^{-d_{mi}} \geq \tfrac{1}{3}3^{-d_{mj}} \geq 3^{- d_{mi}}$. Otherwise, bound $T_m^{-1/2^{d_{mi}}}(\lambda_m)\leq 1$. Absorbing $|\pa{j}|$ into $\mathcal{O}(1)$, we have shown the induction claim. Now, consider \eqref{eq:proof_corollaryExpectGenErr0}, and for $d_i\geq 1$ plug in the induction claim to obtain: \begin{multline*}
        \|\hat{b}_i - b_i^*\|_{L^2(p)}  \leq \|b_{i,\lambda_i} - b_i^*\|_{L^2(p)} + \frac{\mathcal{O}(\log n)}{\lambda_i \sqrt{n}} \log\left( \tfrac{2}{\delta}\right)  +1_{d_i\geq 1}\cdot\frac{\mathcal{O}(\log n)}{n^{1/r_i}} \log^2(\tfrac{2}{\delta}) \\
        + \mathcal{O}(1)\sum_{m\in\anc{i}} \|b_{m,\lambda_m} - b_m^*\|_{L^2(p)}^{1/3^{d_{mi}}}\vee  \|b_{m,\lambda_m} - b_m^*\|_{L^2(p)}(Q_m + \log^2(\tfrac{2}{\delta})).
    \end{multline*} for all $i\leq d$ with probability at least $1-\delta$, where $1_{d_i\geq 1}$ is $1$ if $d_{i}\geq 1$ and zero otherwise. Now, insert $(\lambda_i\sqrt{n})^{-1} \leq n^{-1/r_i}$. Then, integrate the bound as demonstrated in Lemma \ref{lemmaExpectHighProb} and use $\ex{Q_m}=1$. Using $3^{d_{ii}}=3^0=1$ yields the theorem claim.

    If $\mathcal{H}_i$ is dense in $L^2(\mathbb{R}^{|\Pa{i}|}, p_{\Pa{i}})$ for all $i\leq d$, it follows that $\inf_{b\in\mathcal{H}_i} \|b - b_i^*\|_{L^2(p)}^2 =0$. This implies $\lim_{\lambda\to0} T_i(\lambda)=0$ \citep[Sct. 5.4]{Steinwart2008SVM}. Setting $\lambda_i$ to the right hand side of \eqref{eq:corollaryExpectGenError_claim1} yields $\lim_{n\to\infty}\lambda_i(n)=0$ and  $\lim_{n\to\infty} \ex{\|\hat{b}_i - b_i^*\|_{L^2(p)}} = 0$ by \eqref{eq:corollaryExpectGenError_claim2}.

    Let $T_j(\lambda) = \|b_{j,\lambda} - b_j^*\|_{L^2(p)} = \mathcal{O}(\sqrt{\lambda}\cdot \log^{\alpha_j}\tfrac{1}{\lambda})$ as $\lambda\to0$ for all $j\leq d$ and let $0<\varepsilon<\tfrac{1}{2d}$. Set $\lambda_j(n):= n^{-2(1-\varepsilon_j)/r_j}$ for all $j\leq d$. Let $i\leq d$. For any $j\in\anc{i}$ we have by $\lambda_j\to 0$ that \begin{equation*}
        \|b_{j,\lambda_j} - b_j^*\|_{L^2(p)}^{1/3^{d_{ji}}} = \mathcal{O}(n^{-(1-\varepsilon_j)/(r_j 3^{d_{ji}})} \cdot\log^{\alpha_j/3^{d_{ji}}}(n^{2(1-\varepsilon_j)/r_j})) = \mathcal{O}(n^{-(1-\varepsilon_i)/r_i}),
    \end{equation*} since $r_j 3^{d_{ji}} \leq r_i$ and $n^{(\varepsilon_j - \varepsilon_i)/r_i}$ decays faster to zero than the log-term as $n\to\infty$. In particular, $(\lambda_i)_{i\leq d}$ satisfies \eqref{eq:corollaryExpectGenError_claim1} for $n$ large enough. The leading term in equation \eqref{eq:corollaryExpectGenError_claim2} is $T_i(\lambda_i)= \mathcal{O}(n^{-(1-\varepsilon_i)/r_i} \cdot \log^{\alpha_i} n^{2(1-\varepsilon_i)/r_i})=\mathcal{O}(n^{-(1-2\varepsilon_i)/r_i})$, which proves the claim. 
\end{proof}

\subsection{Theory discussion details}\label{subsec:app_theory_discuss}
First, we demonstrate that the stationary distribution and parent set alone do not identify $b_3^*$ in a three variable example. Consider the following matrices \begin{align*}
    \Sigma &:= \begin{pmatrix}
        0.5& 0.125& 0.1563\\ 0.125& 0.5625& 0.21094\\ 0.1563& 0.21094&
   0.683594
    \end{pmatrix} & B_1 &:= \begin{pmatrix}
        -1 & 0 & 0 \\ 0.5 & -1 & 0 \\ 0.5 & 0.5 & -1
    \end{pmatrix} & B_2 := \begin{pmatrix}
        -1.1111& 0.44444& 0\\ 0& -0.88889& 0\\ 0.3125& 0.63889& -1
    \end{pmatrix}.
\end{align*} Then, both drift functions $B_1\mathbf{x}$ and $B_2\mathbf{x}$ induce a Gaussian with mean zero and covariance $\Sigma$ as stationary distribution when the constant identity matrix is used as diffusivity, as seen by the continuous Lyapunov equation. Note that the third drift component, i.e., the third row, differs even though node $3$ has the same parent set in both drifts functions. In other words, the stationary distribution and the parent set alone do \emph{not} identify the third drift component here. It also depends on the parent drifts, which differ in both cases since $1$ is a parent of $2$ in $B_1$ and $2$ a parent of $1$ in $B_2$.  

For the second part of the discussion, consider the setup from example \ref{runExample}. We learn the drift $b_2^*(x_1, x_2)$ using a  shift-invariant product kernel $k_2((x_1, x_2), (y_1, y_2)) = k(x_1 - y_1)\cdot k(x_2 - y_2)$ for some univariate function $k$. We assume $k(-x)=k(x)$ and $k'(-x)=-k'(x)$, and that $k$ is integrable. We use $\mathcal{K}_{2,\mathbf{y}}(\mathbf{x}):= \int_{-\infty}^{x_2}k_2((x_1,  z),\mathbf{y})\dd{z}$. Under sufficient tail decay of $k$, \begin{multline*}
    \partial_1 \mathcal{K}_{2,\mathbf{y}}(\mathbf{x}) = \int_{-\infty}^{x_2}\partial_{x_1}k_2((x_1,  z),\mathbf{y})\dd{z} = k'(x_1 - y_1) \int_{-\infty}^{x_2}k(z - y_2)\dd{z} = \\-k'(y_1 - x_1)\int_{-\infty}^{x_2-y_2}k(z)\dd{z} = -\partial_{y_1}k(x_1 - y_1)\int_{y_2}^{\infty}k(x_2 - z)\dd{z}.
\end{multline*} Let $\varepsilon(x_1):= b_1^*(x_1) - b(x_1)$ for some proposal drift $b$. Then,  
\begin{multline*}
    \zeta_2(\mathbf{y}) - \ex{\partial_1 \mathcal{K}_{2,\mathbf{y}} \cdot b - \tfrac{1}{2}\Delta \mathcal{K}_{2,\mathbf{y}}} = \int_{\mathbb{R}^2} \partial_1\mathcal{K}_{2,\mathbf{y}}(\mathbf{x}) \varepsilon(x_1)\dd{p(\mathbf{x})} = \\
    \int_{y_2}^{\infty} -\partial_{y_1} \int_{\mathbb{R}^2} k_2(\mathbf{x},(y_1, z)) \varepsilon(x_1)\dd{p}(\mathbf{x})\dd{z} =  -\int_{y_2}^\infty \partial_{y_1} \, (J_2^* \varepsilon)(y_1,z)\,\dd{z}.
\end{multline*} In other words, using the incorrect root drift $b$ for learning $\zeta_2$ leads to a bias, which in the infinite sample limit can be expressed in terms of $J_2^* \varepsilon$. The bias in $\zeta_2$ induces a bias on the second drift component via  $L^2(p)\ni b_{2,\lambda_2} = J_2 (J_2^*J_2 + \lambda_2 I)^{-1}\zeta_2$. Therefore, the final bias induced by $b\neq b_1^*$ heuristically is: \begin{equation*}
    J_2 (J_2^*J_2 + \lambda_2 I)^{-1} \left(-\int_{y_2}^\infty \partial_{y_1} \, (J_2^* \varepsilon)(y_1,z)\,\dd{z}\right).
\end{equation*} Since $\|J_2 (J_2^*J_2 + \lambda I)^{-1}J_2^*\|_{\mathcal{L}(L^2(p))}\leq 1$, the bias could be up to order $\|\varepsilon\|_{L^2(p)} = \|b_1^* - b\|_{L^2(p)}$. This would improve the current bound $\lambda^{-1} \cdot\|b_1^* - b\|_{L^2(p)}$, which is due to $\|(J_2^*J_2 + \lambda I)^{-1}\|_{\mathcal{L}(\mathcal{H}_2,\mathcal{H}_2)} \leq \lambda^{-1}$. However, it is difficult to bound the effect of $\int_{y_2}^\infty \partial_{y_1} \, \cdot\,\dd{z}$ on $J_2^* \varepsilon$, in particular since $\int_{y_2}^\infty \cdot \dd{z}$ does not map into $\mathcal{H}$ due to boundary value mismatch, making the above bias heuristic actually ill-defined. Future research could derive alternative estimators of $\zeta_2$ beyond sample averages of $\mathcal{K}_{2,\mathbf{y}}(\mathbf{x}_j)$, which may be better behaved analytically.

\subsection{Cross-validation}\label{subsec:appendix_cv}
The following describes how the cross-validation algorithm approximates $\langle\hat{b}_{i,\lambda;\mathcal{I}},\hat{\zeta}_{i;\mathcal{J}}\rangle_{\mathcal{H}_i}$. Let $\mathbf{K}_i=(k_i(\mathbf{x}_j, \mathbf{x}_l))_{j,l\in\mathcal{J}}$ be the kernel matrix on the test data and let $\gamma$ be an eigenvalue of $\tfrac{1}{|\mathcal{J}|}\mathbf{K}$ with eigenvector $\mathbf{u}$. Then, $\gamma$ is an eigenvalue of $\hat{L}_{i,\mathcal{J}}$ with eigenfunction $v(\mathbf{y}):= \tfrac{1}{\sqrt{|\mathcal{J}|\cdot\gamma}}\sum_{j\in\mathcal{J}} u_j  k_i(\mathbf{x}_j, \mathbf{y})$ \citep{Rosasco2010}. Let $\gamma_\text{min}>0$ and $\gamma_1,\ldots,\gamma_s$ be all eigenvalues of $\tfrac{1}{|\mathcal{J}|}\mathbf{K}$ with magnitude greater than $\gamma_\text{min}$. The projection onto the corresponding eigenfunctions $(v_j)_{j=1,\ldots,s}$ can be written as \begin{multline*}
    \argmin_{\bs{\alpha}\in\mathbb{R}^s} \|\sum_{j=1}^s \alpha_jv_j - \hat{\zeta}_{i;\mathcal{J}}\|_\mathcal{H}^2 = \argmin_{\bs{\alpha}\in\mathbb{R}^s} \sum_{jklm} \frac{\alpha_j \alpha_k}{|\mathcal{J}| \sqrt{\gamma_j\gamma_k}} u_{j,l}u_{k,m}k_i(x_l, x_m)\\
    -2 \sum_{j=1}^s \frac{\alpha_j}{\sqrt{|\mathcal{J}| \gamma_j}} \sum_{k\in\mathcal{J}} u_{j,k} \hat{\zeta}_{i;\mathcal{J}}(\mathbf{x}_k) = \argmin_{\bs{\alpha}\in\mathbb{R}^s}\frac{1}{|\mathcal{J}|} (\mathbf{U}\bs{\Gamma}\bs{\alpha})^T\mathbf{K} \mathbf{U} \bs{\Gamma}\bs{\alpha} - \frac{2}{\sqrt{|\mathcal{J}|}} (\mathbf{U}\bs{\Gamma}\bs{\alpha})^T \mathbf{h},
\end{multline*} where $\mathbf{U}\in\mathbb{R}^{|\mathcal{J}|\times s}$ has the $u$-vectors as columns, where we defined $\bs{\Gamma}:= \diag(\sqrt{\bs{\gamma}}^{-1})\in\mathbb{R}^{s\times s}$ and $h_k := \hat{\zeta}_{i;\mathcal{J}}(x_k)$, and where we ignored the additive constant $\|\hat{\zeta}_{i;\mathcal{J}}\|_\mathcal{H}^2$ in the optimization. Using $\tfrac{1}{|\mathcal{J}|}\bs{\Gamma}\mathbf{U}^T\mathbf{K}\mathbf{U}\bs{\Gamma} =\mathbf{I}$, the minimizer is given by 
\begin{equation*}
    \bs{\alpha}^* = \frac{1}{\sqrt{|\mathcal{J}|}} \bs{\Gamma}\mathbf{U}^T\mathbf{h}.
\end{equation*}
For an RKHS function $f$, we therefore approximate $\langle f, \hat{\zeta}_{i;\mathcal{J}}\rangle_{\mathcal{H}_i}$ by $\tfrac{1}{\sqrt{|\mathcal{J}|}}(\mathbf{U}\bs{\Gamma}\bs{\alpha}^*)^T \mathbf{f}$, where $f_j := f(\mathbf{x}_j)$ for $j\in\mathcal{J}$. All in all, our crossvalidation method optimizes \begin{equation*}
    \frac{1}{|\mathcal{J}|} \|\mathbf{b}_i\|_2^2 + \frac{2}{\sqrt{|\mathcal{J}|}} (\mathbf{U}\bs{\Gamma}\bs{\alpha}^*)^T \mathbf{b}_i,
\end{equation*} where $\mathbf{b}_i=(\hat{b}_{i,\lambda;\mathcal{I}}(\mathbf{x}_j))_{j\in\mathcal{J}}$.

\section{Simulation and extension details}
\subsection{KDS for DAG-structured drifts}\label{app:kds}
The kernel deviation from stationarity (KDS) by \citet{Lorch2024StationaryDiffusions} is an alternative approach to fit nonparametric drift functions to samples from the stationary distribution of an SDE. We extend the accompanying Python package to DAG-structured drifts by simply adding input masks in the existing Neural network class MLPSDE. Note that the current KDS package version assumes $b_i(\mathbf{x}) = -x_i + f_{\bs{\Theta}}(\mathbf{x}_{-i})$ for a two-layer neural network $f$ with trainable weights $\bs{\Theta}$. We tried activating some low-level functions in the package which generalize to arbitrary self-regulation, but obtained NaNs in the KDS training, even when using the MLPSDE class. Therefore, we train on data from a Gaussian distribution with drift $b_1(x_1) = - x_1$, $b_2(x_1, x_2) = x_1 - x_2$, and $b_3(x_1, x_2, x_3) = -\tfrac{1}{2}x_1 + \tfrac{1}{2}x_2 - x_3$ matching the package assumptions. 

We train on $n=1000$ samples using the default package hyperparameters and record the generalization error for varying numbers of gradient steps. After $40000$ steps, the average generalization error noticeably increased and we did not observe double descent even when training $10$ times longer. The optimal recorded generalization error averaged $0.07\pm 0.006$ on $N=50$ experiments. This confirms that KDS approximates the true drift when oracle-stopped. We are not aware of package functionality for adaptive early stopping; reducing the learning rate significantly could lead to convergence at the expense of longer training.

For comparison, our Tikhonov regularized estimator achieves an average generalization error of $0.2\pm 0.01$ on this problem while adaptively choosing the regularization strength. Additionally it learns the self-regulation $-x_i$; in particular it must learn $b_1$ and also fit a three-variable function $b_3$ instead of a two-variable function like KDS.

\subsection{Irreversible diffusion models}\label{subsec:diffusion_sim_details}
We consider a $5$-dimensional Gaussian density $p$ with mean zero and score function $\mathbf{s}(\mathbf{x})$ rounded to three digits shown below. Also consider the alternative drift function $\mathbf{b}(\mathbf{x})$ defined by \begin{align*}
    \mathbf{b}(\mathbf{x}) & = \begin{pmatrix}
        -1 &  0 &  0 &  0 &  0 \\
        0.5& -1 &  0 &  0 &  0 \\
       -0.1&  0.3& -1 &  0 &  0 \\
        0.4& -0.6&  1 & -1 &  0 \\
        0.7& -0.1& -0.6&  0.1& -1
    \end{pmatrix}\mathbf{x};&\mathbf{s}(\mathbf{x}) & = \begin{pmatrix}
        -1.185&  0.275& -0.032  &  0.147&  0.282 \\
        0.275& -1.016&  0.241& -0.201&  0.015\\
       -0.032  &  0.241& -1.210&  0.348& -0.243\\
        0.147& -0.201&  0.348& -0.756& -0.007\\
        0.282 &  0.015& -0.243& -0.007& -0.832
    \end{pmatrix} \mathbf{x}.
\end{align*} Both matrices are linked via the Lyapunov equation such that for all $\omega\in\mathbb{R}$ the SDE \begin{equation*}
    \dd{\mathbf{x}}(t) = (\mathbf{s} - \omega\cdot(\mathbf{s} - \mathbf{b}))(\mathbf{x}(t))\dd{t} + \sqrt{2}\dd{\mathbf{w}(t)}
\end{equation*} has the density $p$ as its stationary density. One also could obtain $\mathbf{b}$ via Theorem \ref{theoremDriftExistenceCompleteDAG} from $p$ directly, however the Lyapunov method is simpler to compute and must yield the same result by our identifiability theory. 

For $\omega\in \{0, 1, 5, 10, -20\}$ we simulate $i=1,\ldots,100$ trajectories $\mathbf{x}_{i}^{(\omega)}(t)$ from $t=0$ to $T=2$ using the Euler-Maruyama method with $5000$ steps. The initialization points are drawn from a Gaussian distribution with mean vector $(5,5,5,5,5)$ and identity covariance and the initialization is shared across different $\omega$. At each time $t$ we compute the energy distance between $\mathbf{x}_{1}^{(\omega)}(t), \ldots, \mathbf{x}_{100}^{(\omega)}(t)$ and an independent sample of size $1000$ from $p$. We then plot this distance depending on $t$ and $\omega$. 

\theoremDriftExistenceCompleteDAG*
\begin{proof}
    For reference, note the following identities for $j\geq 1$: \begin{align*}
        f_{j+1}(\mathbf{x}) &= \frac{p_{1:(j+1)}(\mathbf{x}_{1:(j+1)})}{p_{1:{j}}(\mathbf{x}_{1:j})} & F_{j+1}(\mathbf{x}) &= \int_{-\infty}^{x_{j+1}} f_{j+1}(\mathbf{x}_{1:j}, z)\,\dd{z}.
    \end{align*}
    We show via induction over $j$ that $b_j$ only depends on $\mathbf{x}_{1:j}$ (thus proving drift structuring according to the complete DAG) and that \begin{equation*}
        \frac{\sigma^2}{2} \Delta_{1:j}\, p_{1:j} = \nabla_{1:j}\cdot(\mathbf{b}_{1:j}\, p_{1:j}),
    \end{equation*} which proves the theorem claim when taking $j=d$.

    The case $j=1$ holds by definition. We next show that if the induction claim holds for $j$, it also holds for $j+1$. From the above identities, we see that $f_{j+1}$ and $F_{j+1}$ only depend on $\mathbf{x}_{1:(j+1)}$. Since $\mathbf{b}_{1:j}$ only depends on $\mathbf{x}_{1:j}$ by induction hypothesis, it follows that $b_{j+1}$ only depends on $\mathbf{x}_{1:(j+1)}$, as desired. To see that the PDE holds, multiply the constructed $b_{j+1}$ with $p_{1:(j+1)}$, obtaining \begin{multline*}
        p_{1:(j+1)}b_{j+1} = \frac{\sigma^2}{2}p_{1:j}\,\partial_{j+1}f_{j+1} + \frac{\sigma^2}{2}p_{1:j}\Delta_{1:j}F_{j+1} + \\
        \frac{\sigma^2}{2}\cdot 2\cdot(\nabla_{1:j}\,p_{1:j})^T\cdot\nabla_{1:j}F_{j+1} -(p_{1:j}\mathbf{b}_{1:j})^T\cdot\nabla_{1:j}F_{j+1}.
    \end{multline*}The first two terms can be simplified to $\tfrac{\sigma^2}{2}p_{1:j}\Delta_{1:(j+1)}F_{j+1}$ by definition of $F_{j+1}$. We now add zero to the right hand side, more specifically \begin{equation*}
        F_{j+1}\cdot\left(\tfrac{\sigma^2}{2}\Delta_{1:j}\, p_{1:j} - \nabla_{1:j}\cdot(p_{1:j}\mathbf{b}_{1:j})\right),
    \end{equation*}which is zero by the induction hypothesis. All terms involving $\tfrac{\sigma^2}{2}$ read \begin{equation*}
        \frac{\sigma^2}{2}p_{1:j}\Delta_{1:(j+1)}F_{j+1} + \frac{\sigma^2}{2}\cdot 2\cdot(\nabla_{1:j}\,p_{1:j})^T\cdot\nabla_{1:j}F_{j+1} + \frac{\sigma^2}{2}F_{j+1}\cdot\Delta_{1:j}\, p_{1:j}.
    \end{equation*}Since $\partial_{j+1}p_{1:j}=0$, we have $\Delta_{1:(j+1)}p_{1:j}=\Delta_{1:j}p_{1:j}$ and also \begin{equation*}
        (\nabla_{1:j}\,p_{1:j})^T\cdot\nabla_{1:j}F_{j+1} = (\nabla_{1:(j+1)}p_{1:j})^T\cdot\nabla_{1:(j+1)}F_{j+1}.
    \end{equation*} Recalling the Laplacian product rule $\Delta (fg) = f\Delta g + 2 \nabla f\cdot\nabla g + g\Delta f$, we can simplify all terms involving $\tfrac{\sigma^2}{2}$ to \begin{equation*}
        \frac{\sigma^2}{2}\Delta_{1:(j+1)}(p_{1:j}F_{j+1}). 
    \end{equation*} The remaining terms not involving $\tfrac{\sigma^2}{2}$ read \begin{equation*}
        -(p_{1:j}\mathbf{b}_{1:j})^T\cdot\nabla_{1:j}F_{j+1} - F_{j+1}\cdot\nabla_{1:j}\cdot(p_{1:j}\mathbf{b}_{1:j}) = -\nabla_{1:j}\cdot(\mathbf{b}_{1:j}\,p_{1:j}\,F_{j+1})
    \end{equation*} by the divergence product rule. Collecting the results, we have shown \begin{equation*}
        p_{1:(j+1)}b_{j+1} = \frac{\sigma^2}{2}\Delta_{1:(j+1)}(p_{1:j}F_{j+1}) -\nabla_{1:j}\cdot(\mathbf{b}_{1:j}\,p_{1:j}\,F_{j+1}).
    \end{equation*} Taking $\partial_{j+1}$ on both sides, changing the derivative order by Schwarz's theorem, % note F_{j+1} is three times continuously differentiable since it's an integral
    and using $\partial_{j+1}F_{j+1}=f_{j+1}$ as well as that neither $p_{1:j}$ nor $\mathbf{b}_{1:j}$ depend on $x_{j+1}$, we obtain \begin{equation*}
        \partial_{j+1}(p_{1:(j+1)}b_{j+1}) = \frac{\sigma^2}{2}\Delta_{1:(j+1)}(p_{1:j}\,f_{j+1}) -\nabla_{1:j}\cdot(\mathbf{b}_{1:j}\,p_{1:j}\,f_{j+1}).
    \end{equation*}Simplifying $p_{1:j}\,f_{j+1} = p_{1:(j+1)}$ and rearranging yields the induction claim.  
\end{proof}

\section{Technical Lemmas}
\begin{restatable}[Standard Mollifier]{definition}{defStandardMollifier}\label{defStandardMollifier}
    Define $\eta\colon\mathbb{R}^d\to\mathbb{R}$ by \begin{equation*}
        \eta(\mathbf{x}):= \begin{cases}
            C\exp\left(\frac{1}{\|\mathbf{x}\|_2^2 - 1}\right) & \|\mathbf{x}\|_2 \leq 1 \\
            0 & \|\mathbf{x}\|_2 > 1
        \end{cases}
    \end{equation*} with $C$ such that $\int_{\mathbb{R}^d}\eta(\mathbf{x})\dd{\mathbf{x}} = 1$. Finally, let $\eta_\varepsilon(\mathbf{x}) := \tfrac{1}{\varepsilon^d}\eta(\frac{\mathbf{x}}{\varepsilon})$ for $\varepsilon>0$ and note $\int_{\mathbb{R}^d}\eta_\varepsilon(\mathbf{x})\dd{\mathbf{x}} = 1$.
\end{restatable}

\begin{restatable}{lemma}{lemmaZeroWeakDeriv}\label{lemmaZeroWeakDeriv}
    Let $u\in L^1(\mathbb{R}^k)$, $k\geq 1$, with $\int_{\mathbb{R}^k}u\, \partial_l\varphi\,\dd{\mathbf{x}} = 0$ for all $\varphi\in C^\infty_0(\mathbb{R}^k)$ and one index $l\in\{1,\ldots,k\}$. Then $u=0$ almost everywhere on $\mathbb{R}^k$.
\end{restatable}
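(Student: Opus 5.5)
The argument reduces to the one-dimensional case by integrating out the remaining coordinates, and then uses that the only integrable function on $\mathbb{R}$ with zero weak derivative is zero. By relabeling coordinates assume $l=1$, and write $\mathbf{x}=(x_1,\mathbf{x}')$ with $\mathbf{x}'\in\mathbb{R}^{k-1}$.

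The first step is a tensor-product test function. Take $\varphi(\mathbf{x})=\psi(x_1)\chi(\mathbf{x}')$ with $\psi\in C^\infty_0(\mathbb{R})$ and $\chi\in C^\infty_0(\mathbb{R}^{k-1})$. The hypothesis then gives
\begin{equation*}
\int_{\mathbb{R}}\psi'(x_1)\,g_\chi(x_1)\,\dd{x_1}=0,\qquad g_\chi(x_1):=\int_{\mathbb{R}^{k-1}}u(x_1,\mathbf{x}')\chi(\mathbf{x}')\,\dd{\mathbf{x}'}.
\end{equation*}
By Fubini, $g_\chi\in L^1(\mathbb{R})$ with $\|g_\chi\|_{L^1}\le\|\chi\|_\infty\|u\|_{L^1}$.

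The second step is the one-dimensional claim: if $g\in L^1(\mathbb{R})$ and $\int g\psi'=0$ for all $\psi\in C^\infty_0(\mathbb{R})$, then $g=0$ a.e. To prove it, mollify with $\eta_\varepsilon$ from Definition~\ref{defStandardMollifier}. The smooth function $g_\varepsilon=g*\eta_\varepsilon$ satisfies $g_\varepsilon'=0$, since $g_\varepsilon'(x)=-\int g(y)\,\partial_y\eta_\varepsilon(x-y)\,\dd{y}=0$ by the hypothesis applied to $\psi=\eta_\varepsilon(x-\cdot)$. Hence $g_\varepsilon$ is constant. It also lies in $L^1(\mathbb{R})$, so the constant is $0$. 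Since $g_\varepsilon\to g$ in $L^1$, we get $g=0$ a.e. This is where integrability is essential: without it, nonzero constants would be allowed.

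The third step returns to $\mathbb{R}^k$. By the second step, $g_\chi=0$ a.e. for every $\chi$, so $\int u(x_1,\mathbf{x}')\psi(x_1)\chi(\mathbf{x}')\,\dd{\mathbf{x}}=0$ for all $\psi,\chi$. Linear combinations of such products are dense in $C_0(\mathbb{R}^k)$ (Stone--Weierstrass on compacts). Therefore $\int u\phi=0$ for all $\phi\in C^\infty_0(\mathbb{R}^k)$, and the fundamental lemma of the calculus of variations gives $u=0$ a.e.

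The main obstacle is the third step, specifically the density of tensor products in a topology strong enough to pass to the limit. I would first try to avoid it altogether. An alternative is to mollify $u$ directly on $\mathbb{R}^k$: $u_\varepsilon=u*\eta_\varepsilon$ is smooth and in $L^1(\mathbb{R}^k)$, and it satisfies $\partial_1u_\varepsilon=0$. So $u_\varepsilon(\mathbf{x})=h(\mathbf{x}')$ for some function $h$. Fubini then gives $\int|u_\varepsilon|=\int_{\mathbb{R}}\|h\|_{L^1(\mathbb{R}^{k-1})}\,\dd{x_1}$, which is finite only if $h=0$. Hence $u_\varepsilon=0$, and letting $\varepsilon\to0$ in $L^1$ yields $u=0$ a.e. This mirrors the running-example argument in the paper, and it is probably the cleanest route.
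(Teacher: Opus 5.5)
Your proposal is correct. The closing argument is the paper's route for $k\ge 2$ in a tighter form. The paper also mollifies on $\mathbb{R}^k$, observes that $u*\eta_\varepsilon$ is smooth and independent of $x_l$, and uses integrability to force it to vanish. You add two global facts: $u_\varepsilon=u*\eta_\varepsilon$ lies in $L^1(\mathbb{R}^k)$ with $\|u_\varepsilon\|_{L^1}\le\|u\|_{L^1}$, and $u_\varepsilon\to u$ in $L^1(\mathbb{R}^k)$. With these, Tonelli makes each $u_\varepsilon$ vanish identically for every fixed $\varepsilon$.

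The paper invokes only local $L^1$ convergence on compact sets. It therefore needs a diagonal choice $\varepsilon(n)$ on boxes $B\times[-n,n]$ to show $\tilde{u}_{\varepsilon(n)}\to 0$ in $L^1(B)$. It also treats $k=1$ separately, via absolute continuity of $W^{1,1}(\mathbb{R})$ functions. Your argument covers $k=1$ uniformly, since there $h$ is just a constant.

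Your first route, through tensor-product test functions, is a genuinely different reduction to one dimension and is also valid. The obstacle you flag in step 3 is not real. Because $u\in L^1$, uniform convergence of test functions suffices: $\bigl|\int u(\phi-\phi_m)\bigr|\le\|u\|_{L^1}\|\phi-\phi_m\|_\infty$. To get such $\phi_m$, take a Weierstrass polynomial approximation of $\phi$ on a box and multiply it by a product cutoff $\theta_1(x_1)\theta'(\mathbf{x}')$ with values in $[0,1]$, equal to $1$ on the support of $\phi$. This gives finite sums of products $\psi(x_1)\chi(\mathbf{x}')$, with $\psi,\chi$ smooth and compactly supported, converging uniformly to $\phi$ on all of $\mathbb{R}^k$. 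That route is simply longer than direct mollification, which is the better choice here.
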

\begin{proof}
    The assumptions imply that $u$ has a weak derivative in $x_l$ direction, given by zero almost everywhere. We write $\partial_l u = 0$.

    First, we study the case $k=1$. Then, $u\in W^{1,1}(\mathbb{R})$ with $u'=0$. In this case, $u$ is a.e. equal to an absolutely continuous function with (ordinary) derivative $u'=0$ on every finite interval $(a,b)$; see problem 5.10.5 in \citep{EvansPDEbook}. It follows that $u$ is zero globally by $u\in L^1(\mathbb{R})$.
    
    Next, consider $k\geq 2$. By reordering input arguments, we can assume $l=k$ without loss of generality. For $\varepsilon>0$, set $u_\varepsilon := u *\eta_\varepsilon$, the convolution with the standard mollifier $\eta_\varepsilon$ from Definition \ref{defStandardMollifier}. Then, by the proof of theorem 1 in section 5.3.1 of \citep{EvansPDEbook},\begin{itemize}
        \item $u_\varepsilon\in C^\infty(\mathbb{R}^k)$ with
        \item $\partial_k u_\varepsilon = \eta_\varepsilon * \partial_k u = \eta_\varepsilon * 0 = 0$, and
        \item $u_\varepsilon \to u$ in $L^1(V)$ as $\varepsilon\to 0$ for every $V\subset\subset \mathbb{R}^k$.
    \end{itemize}

    The first two bullet points imply that $u_\varepsilon(\mathbf{x}) = \tilde{u}_\varepsilon(\mathbf{x}_{-k})$ for functions $\tilde{u}_\varepsilon\in C^\infty(\mathbb{R}^{k-1})$. Let $B\subset\mathbb{R}^{k-1}$ be an Euclidean ball of arbitrary (finite) radius around the origin. For any $\varepsilon>0$ and $n\in\mathbb{N}$, by the triangle inequality \begin{equation*}
        \|u_\varepsilon\|_{L^1(B\times[-n,n])} \leq \|u_\varepsilon - u\|_{L^1(B\times[-n,n])} + \|u\|_{L^1(\mathbb{R}^k)}.
    \end{equation*} Construct a sequence $\varepsilon(n)$, such that $\varepsilon(n)\leq\tfrac{1}{n}$ and $\|u_\varepsilon - u\|_{L^1(B\times[-n,n])}\leq 1$ for all $\varepsilon \leq \varepsilon(n)$. Noting that $\|u_\varepsilon\|_{L^1(B\times[-n,n])} = 2n\|\tilde{u}_\varepsilon\|_{L^1(B)}$, this implies \begin{equation*}
        2n\|\tilde{u}_{\varepsilon(n)}\|_{L^1(B)} \leq 1 + \|u\|_{L^1(\mathbb{R}^k)}\iff \|\tilde{u}_{\varepsilon(n)}\|_{L^1(B)} \leq \frac{1 + \|u\|_{L^1(\mathbb{R}^k)}}{2n}.
    \end{equation*} By assumption, $\|u\|_{L^1(\mathbb{R}^k)} < \infty$, so the right hand side converges to zero and consequently $\tilde{u}_{\varepsilon(n)}\to 0$ in $L^1(B)$. For arbitrary $a\in\mathbb{R}$, it holds that \begin{equation*}
        \|u_{\varepsilon(n)}\|_{L^1(B\times[a,a+1])} = \|\tilde{u}_{\varepsilon(n)}\|_{L^1(B)}\to 0,
    \end{equation*} implying $u_{\varepsilon(n)}\to 0$ in $L^1(B\times[a,a+1])$. Recalling that $\varepsilon(n)\to 0$ by construction, the third bullet point implies $u=0$ almost everywhere on $B\times[a,a+1]$. Covering $\mathbb{R}^k$ by varying $B$ and $a$ yields $u=0$ almost everywhere on $\mathbb{R}^k$.
\end{proof}

\begin{restatable}{lemma}{lemmaDenseInInftySobolev}\label{lemmaDenseInInftySobolev}
    When the Lebesgue density $p$ is bounded, then for any $\varphi\in W^{2,\infty}(\mathbb{R}^k)$ there exists $(\varphi_n)_{n\in\mathbb{N}}\subset C^\infty_0(\mathbb{R}^k)$ with $\|\varphi_n - \varphi\|_{W^{2,2}(\mathbb{R}^k, p)}\to 0$ as $n\to\infty$.
\end{restatable}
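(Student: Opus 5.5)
The plan is a two-stage approximation: first truncate $\varphi$ to compact support with a uniformly bounded cutoff, then mollify the truncated function. Boundedness of $p$ is what makes this work. Write $M:=\|p\|_{L^\infty}$. Then $\|g\|_{W^{2,2}(\mathbb{R}^k,p)}^2\le M\|g\|^2_{W^{2,2}(\mathbb{R}^k)}$ whenever the right side is finite. So on compact sets, convergence in the unweighted space $W^{2,2}$ implies convergence in the weighted one. Away from compact sets, we instead use that $p$ is a probability density and hence has small mass outside large balls.

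\textbf{Step 1 (truncation).} Choose cutoffs $\gamma_n\in C^\infty_0(\mathbb{R}^k)$ with $0\le\gamma_n\le1$, $\gamma_n=1$ on the ball $B_n$, and $\sup_n\|\gamma_n\|_{W^{2,\infty}}=:C<\infty$; for example $\gamma_n(\mathbf{x})=\gamma(\mathbf{x}/n)$ for a fixed bump $\gamma$ (or those from Lemma \ref{lemmaOneApprox}). Set $\psi_n:=\gamma_n\varphi$. By the Leibniz rule, every weak derivative $\partial^{\bs{\alpha}}(\psi_n-\varphi)$ with $|\bs{\alpha}|\le2$ is bounded in $L^\infty$ by a constant times $C\|\varphi\|_{W^{2,\infty}}$, and it vanishes on $B_n$. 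Therefore
\begin{equation*}
\|\psi_n-\varphi\|_{W^{2,2}(p)}^2\le c\,C^2\|\varphi\|_{W^{2,\infty}}^2\int_{\mathbb{R}^k\setminus B_n}p\,\dd{\mathbf{x}}\to0 .
\end{equation*}
Here $\int_{\mathbb{R}^k\setminus B_n}p\,\dd{\mathbf{x}}\to0$ holds by dominated convergence, since $p$ is integrable.

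\textbf{Step 2 (mollification).} Fix $n$. The function $\psi_n$ lies in $W^{2,\infty}$ and has compact support, so $\psi_n\in W^{2,2}(\mathbb{R}^k)$. Consider $\psi_n*\eta_\varepsilon$ with the standard mollifier of Definition \ref{defStandardMollifier}. This function is in $C^\infty_0(\mathbb{R}^k)$, and its support lies in a fixed compact set for $\varepsilon\le1$. Its weak derivatives satisfy $\partial^{\bs{\alpha}}(\psi_n*\eta_\varepsilon)=(\partial^{\bs{\alpha}}\psi_n)*\eta_\varepsilon$, and these converge to $\partial^{\bs{\alpha}}\psi_n$ in $L^2(\mathbb{R}^k)$ by standard mollifier theory \citep{EvansPDEbook}. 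By the bound with $M$ above, $\|\psi_n*\eta_\varepsilon-\psi_n\|_{W^{2,2}(p)}\to0$ as $\varepsilon\to0$. Choose $\varepsilon_n$ small enough that this distance is at most $1/n$, and set $\varphi_n:=\psi_n*\eta_{\varepsilon_n}$. The triangle inequality then gives $\|\varphi_n-\varphi\|_{W^{2,2}(p)}\to0$.

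\textbf{Main obstacle.} Step 2 involves a subtlety: mollification converges only in $L^q$ for $q<\infty$, not in $L^\infty$. We must therefore use the unweighted $L^2$ convergence on the compact support together with the bound $p\le M$. This is the only place boundedness of $p$ enters. Without it, $p$ could concentrate mass near the points where $\varphi$'s second derivatives jump, and $L^2(\mathbb{R}^k)$ convergence would not transfer to $L^2(p)$. The remaining steps are routine applications of Leibniz's rule and dominated convergence.
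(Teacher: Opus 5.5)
Your proof is correct and uses the same ingredients as the paper's: cutoffs with $\sup_n\|\gamma_n\|_{W^{2,\infty}}<\infty$, mollification converging in unweighted $W^{2,2}$ on a bounded set, $\|p\|_{L^\infty}$ to pass to the weighted norm there, and the vanishing tail mass $\int_{\mathbb{R}^k\setminus B_n}p\,\dd{\mathbf{x}}$. The difference is that you apply the two operations in the opposite order.

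The paper mollifies first and cuts off afterwards, $\varphi_n=\gamma_n\cdot(\varphi*\eta_{\varepsilon(n)})$. It then splits the weighted norm over two regions:
\begin{itemize}
\item on $B_n$, where $\varphi_n=\varphi*\eta_{\varepsilon(n)}$, local $W^{2,2}(B_n)$ convergence applies;
\item on the complement, it must establish $\sup_n\|\varphi_n\|_{W^{2,\infty}}<\infty$ via Leibniz and $|\eta_\varepsilon*D^{\bs{\alpha}}\varphi|\le\|D^{\bs{\alpha}}\varphi\|_\infty$.
\end{itemize}
Cutting off first, as you do, decouples the two errors through the triangle inequality. The truncation error involves only $\varphi$ itself. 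The mollification step becomes a global statement about the fixed compactly supported function $\psi_n\in W^{2,2}(\mathbb{R}^k)$, so no uniform bound on mollified functions is needed.

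Your closing remark about the hypothesis needs a correction: boundedness of $p$ is sufficient here but not necessary. The derivatives $(D^{\bs{\alpha}}\psi_n)*\eta_\varepsilon$ are bounded by $\|D^{\bs{\alpha}}\psi_n\|_\infty$ and converge Lebesgue-a.e. Since $p$ is a Lebesgue density, they also converge $p\,\dd{\mathbf{x}}$-a.e. Dominated convergence with majorant $4\|D^{\bs{\alpha}}\psi_n\|_\infty^2\,p$ then yields convergence in $L^2(p)$ for any probability density $p$. An absolutely continuous $p$ therefore cannot break the argument by concentrating mass near the jump set of the second derivatives.
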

\begin{proof}
    Let $n\in\mathbb{N}$, set $B_r:= \{\mathbf{x}\in\mathbb{R}^k \mid \|\mathbf{x}\|_2 < r\}$ for $r\geq 0$ and let $\varphi\in W^{2,\infty}(\mathbb{R}^k)$. It holds that $\varphi\in W^{2,\infty}(B_{n+1})\subset W^{2,2}(B_{n+1})$. %Evans 5.2.3 Theorem 1 iii, and the subset is due to $L^\infty(B_{n+1})\subset L^1(B_{n+1})$
    By theorem 1 in Section 5.3.1 of \citep{EvansPDEbook}, there is $\varepsilon(n)\leq 0.5$ such that $\|(\varphi * \eta_{\varepsilon(n)}) - \varphi\|_{W^{2,2}(B_n)} \leq \tfrac{1}{n}$, where $*$ denotes convolution and $\eta_\varepsilon$ is the standard mollifier from Definition \ref{defStandardMollifier}. % Convergence in W^{k,p}_loc(U) in Evans is defined as convergence in W^{k,p}(V) for all V compactly embedded in U
    Further, for any multi-index $\alpha$ with $|\alpha|\leq 2$ we have $D^\alpha (\varphi * \eta_{\varepsilon(n)}) = \eta_{\varepsilon(n)} * (D^\alpha \varphi)$ on $B_{n+0.5}$. % See the proof of the referenced theorem in Evans.

    Define $\varphi_n := \gamma_n \cdot (\varphi * \eta_{\varepsilon(n)})$ with $\gamma_n$ from Lemma \ref{lemmaOneApprox}. Recall $\gamma_n\in C^\infty_0(\mathbb{R}^k)$ with support in $B_{n+0.25}$. Since $\varphi * \eta_{\varepsilon(n)} \in C^\infty(B_{n+0.5})$, it follows $\varphi_n\in C^\infty_0(\mathbb{R}^k)$.  Further, for any multi-index $\alpha$ with $|\alpha|\leq 2$, by Leibnitz' formula %see theorem 1 in 5.2.3 (iv) of Evans
    \begin{equation*}
        D^\alpha \varphi_n = \sum_{\beta\leq \alpha} \begin{pmatrix}
            \alpha \\ \beta
        \end{pmatrix} D^\beta \gamma_n \cdot (\eta_{\varepsilon(n)} * D^{\alpha - \beta}\varphi).
    \end{equation*} First, notice $|(\eta_{\varepsilon(n)} * D^{\alpha - \beta}\varphi)(\mathbf{x})| \leq \|D^{\alpha - \beta}\varphi\|_\infty \cdot \int_{\mathbb{R}^d} \eta_{\varepsilon(n)}(\mathbf{y})\dd{\mathbf{y}} = \|D^{\alpha - \beta}\varphi\|_\infty$ for all $\mathbf{x}\in\mathbb{R}^k$. Second, $\|D^\beta\gamma_n\|_{L^\infty(\mathbb{R}^k)}\leq \sup_n \|\gamma_n\|_{W^{2,\infty}(\mathbb{R}^k)} < \infty$ by Lemma \ref{lemmaOneApprox}.
    Combining these facts yields $ \sup_n \|\varphi_n\|_{W^{2,\infty}(\mathbb{R}^d)} < \infty $. Thus,\begin{multline*}
        \|\varphi_n - \varphi\|^2_{W^{2,2}(\mathbb{R}^d, p)} = \|\varphi_n - \varphi\|^2_{W^{2,2}(B_n,\, p)} + \|\varphi_n - \varphi\|^2_{W^{2,2}(\mathbb{R}^d\setminus B_n,\, p)} \leq\\
        \|\varphi_n - \varphi\|^2_{W^{2,2}(B_n)} \|p\|_{L^\infty(\mathbb{R}^k)} + 2\left(\|\varphi\|^2_{W^{2,\infty}(\mathbb{R}^d)} + \sup_n \|\varphi_n\|^2_{W^{2,\infty}(\mathbb{R}^d)}\right)\int_{\mathbb{R}^d\setminus B_n} p(\mathbf{x})\dd{\mathbf{x}},
    \end{multline*} which converges to $0$ when $n\to\infty$ as desired.

\end{proof}

\begin{restatable}{lemma}{lemmaOneApprox}\label{lemmaOneApprox}
    There is a sequence $(\gamma_n)_{n\in\mathbb{N}}\subset C^\infty_0(\mathbb{R}^k)$ satisfying $\gamma_n(\mathbf{x})= 1$ for $\|\mathbf{x}\|_2\leq n$, $\gamma_n(\mathbf{x})=0$ for $\|\mathbf{x}\|_2\geq n+\tfrac{1}{4}$, and
        $\sup_{n\in\mathbb{N}} \|\gamma_n\|_{W^{2,\infty}(\mathbb{R}^k)} < \infty$.
\end{restatable}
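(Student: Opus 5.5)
We build the cutoffs by radially rescaling a single one-dimensional profile rather than rescaling one fixed bump, because rescaling $\mathbf{x}\mapsto\mathbf{x}/n$ would make the cutoff support end at $(1+\text{const})n$ instead of $n+\tfrac14$. Concretely, we fix once and for all a smooth function $h\colon\mathbb{R}\to[0,1]$ with $h(t)=1$ for $t\leq 0$ and $h(t)=0$ for $t\geq \tfrac14$. A standard choice uses $g(t):=e^{-1/t}$ for $t>0$ and $g(t):=0$ otherwise, and sets
\begin{equation*}
h(t):=\frac{g(\tfrac14-t)}{g(\tfrac14-t)+g(t)}.
\end{equation*}
Its denominator never vanishes, so $h\in C^\infty(\mathbb{R})$. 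Moreover $h'$ and $h''$ vanish outside $[0,\tfrac14]$, and hence are bounded by constants $M_1,M_2$ that do not depend on anything. We then define $\gamma_n(\mathbf{x}):=h(\|\mathbf{x}\|_2-n)$.

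The proof proceeds in four steps.

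\textbf{Support and value properties.} These are immediate from the properties of $h$: $\gamma_n=1$ on $\|\mathbf{x}\|_2\leq n$ and $\gamma_n=0$ on $\|\mathbf{x}\|_2\geq n+\tfrac14$. In particular $\gamma_n$ has compact support.

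\textbf{Smoothness.} The norm $\|\mathbf{x}\|_2$ is not smooth at the origin. However, for $n\geq1$ the function $\gamma_n$ is identically $1$ on the ball of radius $n$, so $\gamma_n$ is smooth there. Away from the origin, $\gamma_n$ is a composition of smooth maps. Hence $\gamma_n\in C^\infty_0(\mathbb{R}^k)$.

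\textbf{Uniform $W^{2,\infty}$ bound.} We have $\|\gamma_n\|_\infty\leq1$. The nonconstant region is the annulus $n\leq \|\mathbf{x}\|_2\leq n+\tfrac14$, where $r:=\|\mathbf{x}\|_2\geq 1$. On this annulus we compute, with $\hat{\mathbf{x}}:=\mathbf{x}/r$,
\begin{equation*}
\nabla\gamma_n = h'(r-n)\,\hat{\mathbf{x}}, \qquad
\nabla^2\gamma_n = h''(r-n)\,\hat{\mathbf{x}}\hat{\mathbf{x}}^T + \frac{h'(r-n)}{r}\bigl(\mathbf{I}-\hat{\mathbf{x}}\hat{\mathbf{x}}^T\bigr).
\end{equation*}
This gives $|\partial_j\gamma_n|\leq M_1$ and $|\partial_{jl}\gamma_n|\leq M_2+2M_1$, uniformly in $n$, because $r\geq1$.

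\textbf{Conclusion.} Summing over the finitely many multi-indices with $|\bs{\alpha}|\leq 2$ yields $\sup_n\|\gamma_n\|_{W^{2,\infty}(\mathbb{R}^k)}<\infty$.

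The only real obstacle is the point above: translating a fixed profile, rather than rescaling a bump, is what keeps both the transition width $\tfrac14$ and the derivative bounds independent of $n$. The $1/r$ term in the Hessian is harmless precisely because the transition happens at radius at least $1$.
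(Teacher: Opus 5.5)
Your proposal is correct and follows the paper's proof: the paper also sets $\gamma_n(\mathbf{x}) := \gamma(\|\mathbf{x}\|_2 - n)$ for a fixed smooth profile $\gamma$ that equals $1$ on $(-\infty,0]$ and $0$ on $[\tfrac14,\infty)$. It then bounds the first and second derivatives by the chain rule, using that the derivatives of $\|\cdot\|_2$ up to order two are bounded on $\{\|\mathbf{x}\|_2\geq 1\}$; your explicit profile and closed-form Hessian just make these steps concrete.
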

\begin{proof}
    Choose $\gamma\in C^\infty(\mathbb{R})$ with  $\gamma = 1$ for $x\leq 0$ and $\gamma(x) = 0$ for $x\geq \tfrac{1}{4}$. Define $\gamma_n(\mathbf{x}):=\gamma(\|\mathbf{x}\|_2 - n)$. By construction, $\gamma_n(\mathbf{x})= 1$ for $\|\mathbf{x}\|_2\leq n$ and $\gamma_n(\mathbf{x})=0$ for $\|\mathbf{x}\|_2\geq n+\tfrac{1}{4}$.

    Since $\|\cdot\|\in C^\infty(\mathbb{R}^d\setminus\{0\})$, it holds that $\gamma_n\in C^\infty_0(\mathbb{R}^k)$. Further, $\|\gamma_n\|_{L^\infty(\mathbb{R}^k)} \leq \|\gamma\|_{L^\infty([0, 1/4])} < \infty$. For any $i\leq k$, we have \begin{equation*}
        \|\partial_i\gamma_n(\mathbf{x})\|_{L^\infty(\mathbb{R}^k)} = \sup_{\mathbf{x}\in\mathbb{R}^k} |\gamma'(\|\mathbf{x}\|_2 - n) \cdot (\partial_i \|\cdot\|_2)(\mathbf{x})| \leq \|\gamma'\|_{L^\infty([0,1/4])} \cdot \big\|\partial_i \|\cdot\|_2\big\|_{L^\infty(\|\mathbf{x}\| \geq 1)}.
    \end{equation*} For any $i,j\leq k$ we have \begin{multline*}
        \|\partial_{ij}\gamma_n(\mathbf{x})\|_{L^\infty(\mathbb{R}^k)} = \sup_{\mathbf{x}\in\mathbb{R}^k} |\gamma''(\|\mathbf{x}\|_2 - n)\cdot (\partial_i \|\cdot\|_2)(\mathbf{x}) \cdot (\partial_j \|\cdot\|_2)(\mathbf{x}) +  \gamma'(\|\mathbf{x}\|_2 - n) \cdot (\partial_{ij} \|\cdot\|_2)(\mathbf{x})| \\
        \leq \|\gamma''\|_{L^\infty([0,1/4])} \cdot \big\|\partial_i \|\cdot\|_2\big\|_{L^\infty(\|\mathbf{x}\| \geq 1)}\cdot \big\|\partial_j \|\cdot\|_2\big\|_{L^\infty(\|\mathbf{x}\| \geq 1)} +  \|\gamma'\|_{L^\infty([0,1/4])}\cdot \big\|\partial_{ij} \|\cdot\|_2\big\|_{L^\infty(\|\mathbf{x}\| \geq 1)}.
    \end{multline*} From $\|\partial_{\mathbf{x}^\beta} \|\cdot\|_2\big\|_{L^\infty(\|\mathbf{x}\| \geq 1)}<\infty$ for any multi-index $\beta$ with $1\leq |\beta|\leq 2$ the claim follows. %The norm derivatives are x_i/||x|| (bounded by 1), and delta_{i=j}/||x|| - (x_i x_j)/||x||^3, which decays in x.
\end{proof}

\begin{restatable}{lemma}{lemmaExpectHighProb}\label{lemmaExpectHighProb}
    Let $A,B$ be nonnegative random variables and $\alpha,\beta>0$ be constants such that $A\leq B + \beta \log^\alpha \tfrac{2}{\delta}$ with probability at least $1-\delta$ for all $\delta\in (0,1)$. Then, $\ex{A} \leq 2 \ex{B} +  \beta \cdot c(\alpha)$ for some constant $c$ only depending on $\alpha$.
\end{restatable}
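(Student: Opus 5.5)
We prove Lemma \ref{lemmaExpectHighProb} by integrating the tail bound via the layer-cake formula. The factor $2$ in front of $\ex{B}$ suggests splitting the event according to the size of $B$ relative to the logarithmic term. Equivalently, we bound the excess $A - 2B$.

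First, note that $A \leq B + \beta\log^\alpha\tfrac{2}{\delta}$ holding with probability at least $1-\delta$ gives, for every $\delta\in(0,1)$,
\begin{equation*}
\pr{A - B > \beta \log^\alpha \tfrac{2}{\delta}} \leq \delta .
\end{equation*}
Set $W := (A-B)_+ \geq 0$, so that $A \leq B + W$ and $\ex{A}\leq \ex{B} + \ex{W}$. Since $\ex{B}\leq 2\ex{B}$, it suffices to show $\ex{W}\leq \beta\, c(\alpha)$. (The factor $2$ then only serves as slack; if $B$ fails to be integrable the claim is trivial.)

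For $t\geq \beta\log^\alpha 2$, choose $\delta = 2\exp(-(t/\beta)^{1/\alpha}) \in (0,1]$. Then $t = \beta\log^\alpha\tfrac{2}{\delta}$, and therefore $\pr{W>t}\leq 2\exp(-(t/\beta)^{1/\alpha})$. The boundary case $\delta=1$ is trivial, and strict versus non-strict inequalities can be handled by monotone limits in $\delta$. For smaller $t$ we use the trivial bound $\pr{W>t}\leq 1$. The layer-cake formula then gives
\begin{equation*}
\ex{W} = \int_0^\infty \pr{W>t}\,\dd{t} \leq \beta\log^\alpha 2 + \int_0^\infty 2 e^{-(t/\beta)^{1/\alpha}}\,\dd{t}.
\end{equation*}
Substituting $t=\beta s$ turns the integral into $2\beta\int_0^\infty e^{-s^{1/\alpha}}\dd{s} = 2\beta\,\alpha\,\Gamma(\alpha)$. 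Hence $c(\alpha) := \log^\alpha 2 + 2\alpha\Gamma(\alpha)$ works.

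The only delicate point is measurability and the quantifier order: the hypothesis holds separately for each fixed $\delta$, with no requirement that the events be nested. This causes no trouble, because we only ever evaluate the distribution function of $W$ at a single $t$ for each $\delta$. In the application in Corollary \ref{corollaryExpectGenError}, the logarithmic terms come multiplied by $Q_m$. To accommodate this, one lets $B$ absorb the terms involving $Q_m$ and uses $\log^2$ with $\alpha=2$, which fits the lemma directly.
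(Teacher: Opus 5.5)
Your proof is correct, but it takes a slightly different and sharper route than the paper.

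\textbf{The paper's route.} It integrates the tail of $A$ itself, via the union bound
$\pr{A > t + \beta\log^\alpha 2} \leq \pr{B > t/2} + \pr{A > B + t/2 + \beta\log^\alpha 2}$.
It bounds the second term by choosing $\delta = 2\exp\bigl(-(\tfrac{t}{2\beta} + \log^\alpha 2)^{1/\alpha}\bigr)$ and then integrates over $t$. Splitting the threshold into $t/2$ for $B$ and $t/2$ for the excess is exactly what produces the factor $2$ in front of $\ex{B}$.

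\textbf{Your route.} You use the pointwise inequality $A \leq B + (A-B)_+$ and linearity of expectation. You then apply the layer-cake formula only to the excess $W=(A-B)_+$, whose tail you make sub-Weibull by the same inversion of $\delta$.

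\textbf{What each buys.} Your version gives the stronger bound $\ex{A} \leq \ex{B} + \beta\bigl(\log^\alpha 2 + 2\Gamma(\alpha+1)\bigr)$, so the factor $2$ is pure slack. The only cost is the trivial remark that $B$ may be assumed integrable, hence a.s.\ finite, so that $A-B$ is well defined. The paper's union-bound formulation never forms $A-B$ and so sidesteps this, at the price of the factor $2$.

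Your closing remark about Corollary \ref{corollaryExpectGenError} is also fine. Just note that the $\log\tfrac{2}{\delta}$ and $\log^2\tfrac{2}{\delta}$ terms there must first be merged into a single $\beta\log^2\tfrac{2}{\delta}$, using $\log\tfrac{2}{\delta} \geq \log 2$.
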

\begin{proof}
    Let $t>0$ be arbitrary. We have that \begin{equation*}
        \pr{A> t + \beta\log^\alpha 2} = \pr{B + (A-B) -\beta\log^\alpha 2> t} \leq \pr{B> \tfrac{t}{2}} + \pr{A > B+\tfrac{t}{2} + \beta\log^\alpha 2}.
    \end{equation*}Further, letting $\delta:=2\exp(- (\tfrac{t}{2\beta} + \log^\alpha 2)^{1/\alpha}) \in (0,1)$, it holds that \begin{equation*}
        \pr{A > B+\tfrac{t}{2} + \beta\log^\alpha 2} \leq 2\exp(- (\tfrac{t}{2\beta} + \log^\alpha 2)^{1/\alpha}).
    \end{equation*}We have that \begin{multline*}
        \ex{A} = \int_0^\infty \pr{A > t}\dd{t} \leq 1\cdot\beta\log^\alpha 2 + \int_{0}^\infty \pr{A > t + \beta\log^\alpha 2}\dd{t} \\
        \leq \beta\log^\alpha 2 + \int_0^\infty \pr{B > \tfrac{t}{2}}\dd{t} + \int_0^\infty 2\exp(- (\tfrac{t}{2\beta} + \log^\alpha 2)^{1/\alpha})\dd{t}.
    \end{multline*} Transforming $\tfrac{t}{2}\mapsto u$ and $\tfrac{t}{2\beta}\mapsto u$ in the integrals proves the theorem claim.
\end{proof}

\end{appendices}

\end{document}